\crefname{hypothesis}{Hypothesis}{Hypotheses}
\Crefname{ALC@unique}{Line}{Lines}
\colorlet{texcscolor}{blue!50!black}
\colorlet{texemcolor}{red!70!black}
\colorlet{texpreamble}{red!70!black}
\colorlet{codebackground}{black!25!white!25}
\lstdefinestyle{siamlatex}{%
  style=tcblatex,
  texcsstyle=*\color{texcscolor},
  texcsstyle=[2]\color{texemcolor},
  keywordstyle=[2]\color{texemcolor},
  moretexcs={cref,Cref,maketitle,mathcal,text,headers,email,url},
}
\DeclareTotalTCBox{\code}{ v O{} }
{ %fontupper=\ttfamily\color{texemcolor},
  fontupper=\ttfamily\color{black},
  nobeforeafter,
  tcbox raise base,
  colback=codebackground,colframe=white,
  top=0pt,bottom=0pt,left=0mm,right=0mm,
  leftrule=0pt,rightrule=0pt,toprule=0mm,bottomrule=0mm,
  boxsep=0.5mm,
  #2}{#1}
\patchcmd\newpage{\vfil}{}{}{}
\title{Optimization Models and Interpretations for Three Types of Adversarial Perturbations against Support Vector Machines\thanks{Submitted to the editors December 18, 2021.
\funding{The work of Qingna Li is supported  by the National Natural Science Foundation of China (NSFC) grants 12071032.}}}
\author{Wen Su\thanks{School of Mathematics and Statistics, Beijing Institute of Technology, Beijing, 100081, China (\email{suwen019@163.com}).}
\and Qingna Li\thanks{School of Mathematics and Statistics, Beijing Key Laboratory on MCAACI/Key Laboratory of Mathematical Theory and Computation in Information Security, Beijing Institute of Technology, Beijing, 100081, China (\email{qnl@bit.edu.cn}).}
\and Chunfeng Cui\thanks{Corresponding author. LMIB of the Ministry of Education, School of Mathematical Sciences, Beihang University, Beijing, 100191, China (\email{chunfengcui@buaa.edu.cn}).}}
\begin{document}
\maketitle

%% ------------------------------------------------------------------
%% ABSTRACT
%% ------------------------------------------------------------------
\begin{tcbverbatimwrite}{tmp_\jobname_abstract.tex}
\begin{abstract}
 Adversarial perturbations have drawn great attentions in various deep neural networks. Most of them are computed by iterations and cannot be interpreted very well. In contrast, little attentions are paid to basic machine learning models such as support vector machines. In this paper, we investigate the optimization models and the  interpretations for three types of adversarial perturbations against  support vector machines, including sample-adversarial perturbations (sAP), class-universal adversarial perturbations (cuAP) as well as universal adversarial perturbations (uAP). For linear binary/multi classification support vector machines (SVMs),  we derive the explicit solutions for sAP, cuAP and uAP (binary case), and approximate solution for uAP of multi\mbox{-}classification. We also obtain the upper bound of fooling rate for uAP. Such results not only increase the interpretability of the three adversarial perturbations, but also provide great convenience in computation since iterative process can be avoided. Numerical results show that our method is fast and effective in calculating three types of adversarial perturbations. 
\end{abstract}

\begin{keywords}
    adversarial perturbation, universal adversarial perturbation, class-universal adversarial perturbation, support vector machines
\end{keywords}

\begin{AMS}
  90C25, 90C59, 68T45, 68T15
\end{AMS}
\end{tcbverbatimwrite}
\input{tmp_\jobname_abstract.tex}
%% ------------------------------------------------------------------
%% END HEADER
%% ------------------------------------------------------------------

\section{Introduction}
\label{sec:intro}

Machine learning has proved to be a powerful tool in analyzing data from different applications, such as computer vision \cite{ref1},  natural language processing \cite{ref2}, speech recognition \cite{ref3}, recommendation system \cite{ref66,ref5,ref4}, cyber security \cite{ref7,ref8}, clustering \cite{ref64,ref65} and so on. 
However, some hackers can analyze the loopholes in machine learning to launch attacks on intelligent applications.  The attackers can fool the trained machine learning system by designing input data.
%However, some hackers can often make malicious inputs by analyzing the loopholes in machine learning technology to launch attacks on intelligent applications. The attackers can fool the trained machine learning system by designing input data, making the machine learning system produce obvious errors. 
For example, in a spam detection system, attackers can confuse the machine learning detection system by adding unrelated tokens to their emails \cite{ref9}.  Correspondingly, some researchers also modify neural network structure to make it resist attacks \cite{ref60}.
The security of machine learning systems is still an important research topic. Meanwhile, how to propose effective methods to calculate adversarial perturbations and how to make interpretation are getting more and more attentions. Due to the importance of  adversarial perturbations, many researchers started to investigate the  small pertubations to a machine learning system. It can be divided into three types: perturbations for a given sample (sAP), universal adversarial perturbations (uAP) and class-universal adversarial perturbations (cuAP). Below we discuss them one by one.

sAP is one of the most important and widely used types of adversarial perturbations. It  has been studied in deep neural networks since 2014,  and  there are many methods to calculate sAP. Szegedy et al. \cite{ref10}  firstly discovered a surprising weakness of neural networks in the background of image classification, that is, neural networks are easily attacked by very small adversarial perturbations. Different from the previous input data designed by the attacker, these adversarial instances are almost indistinguishable from natural data (which means in human observation, there is no difference between adversarial instances and undisturbed input), and are misclassified by the neural  network. This leads to researchers' interest in studying sAP. Szegedy et al. \cite{ref10}   believed that the highly nonlinear nature of neural networks led to the existence of adversarial instances. However,  Goodfellow et al. \cite{ref11} put forward the opposite view.  They believed that the linear behavior of neural networks in high-dimensional space is the real reason for the existence of adversarial instances. 
The method (namely Fast Gradient Sign Method (FGSM))  proposed in \cite{ref11}  is designed based on gradient and has also become one of the mainstream  method. 
\cite{ref30} and \cite{ref29} further improved  FGSM, which normalized the gradient formula calculated by FGSM using $l_2\mbox{-}$norm and $l_{\infty}\mbox{-}$norm respectively. Different from the  above methods, DeepFool is an attack method based on linearization and separating hyperplane \cite{ref13}, which initializes with the clean image that is assumed to reside in a region confined by the decision boundaries of the classifier. At each iteration, DeepFool perturbs the image by a small vector that  takes the resulting image to the linearizing boundaries of the region within which the image resides. 

As for uAP, it is another popular type of adversarial perturbations. uAP can be generated in advance and then applied dynamically during the attack. Moosavi-Dezfooli et al. \cite{ref15} proposed a single small image perturbation that fools a state-of-the-art deep neural network classifier on all natural images. Such perturbations are dubbed universal, as they are imageagnostic. The fooling rate is the most widely adopted metric for evaluating the efficacy of uAP. Specifically, the fooling rate is defined as the percentage of samples whose prediction changes after uAP is applied, i.e., $ \dfrac{\text{The number of } \hat{k}(x+r)\neq \hat{k}(x), x \in A}{\text{The number of data in } A } $, where $A$ is a given dataset and $ \hat{k} $ is the known classifier.
%$P_{x \sim \mu}(\hat{k}(x+r)\neq \hat{k}(x))$, where $ \hat{k} $ is the classifier, $ x $ is the sample satisfying the distribution $  \mu $ and $ r $ is the perturbation vector independent of $ x $. 
The existence of uAP reveals the  important geometric correlations among the high-dimensional decision boundary of classifiers. Since \cite{ref15},  many methods have been introduced by researchers to generate uAP, both data-driven and data-independent. Miyato et al. \cite{ref50}
proposed a  data-driven method. They generated uAP by using fooling and diversity objectives along with a generative model.
Cui et al. \cite{ref20} investigated generating uAP by the active-subspace. Miyato et al. \cite{ref52} proposed a data-independent method to generate uAP by fooling the features learned at multiple layers of the network. 
Their method didn't use any information about the training data distribution of the classifier.
% Their approach of crafting perturbations did not utilize any knowledge about the data distribution under which the target model has been trained.  

In terms of cuAP, it is a modified uAP based on different applications. 
It can attack the data in a class-discriminative way, which is more stealthy.  As far as we known, less work has been done on cuAP, compared with sAP and uAP. 
Zhang et al. \cite{ref53} noticed that uAP might cause obvious misconduct, and it might make the users suspicious. Thus, they proposed  CD-UAP (class discriminative universal adversarial perturbation). CD-UAP only attacks data from a chosen group of classes, while having limited the impact on the remaining classes. Since the classifier will only misbehave when the specific data from a targeted class is encountered, cuAP will not raise obvious suspicion. Ben  et al. \cite{ref54} extended  cuAP to a targeted version, which means they  made a perturbation to fool data of the particular class toward the targeted class they pre-defined.

Recently, some researchers turn their attentions to  perturbations against  SVM. Fawzi et al. \cite{ref12} initiated the research on sAP against SVM, which provides more insights for  the relationship between robustness and design parameters. Langenberg et al. \cite{ref14}  analyzed and quantified the numerical experiments of sAP on SVM, showing that the robustness of SVM  is significantly affected by parameters which change the linearity of the models. However, there is no  mathematical derivation of sAP on SVM. 

%Inspired by \cite{ref12} and \cite{ref14} (2019), we hope to further understand the nature of adversarial perturbations in neural networks by analyzing sAP, cuAP and uAP of SVM. Meanwhile, inspired by the method of generating perturbations in DeepFool (\cite{ref13}, 2016), for SVM with known hyperplane, we  solve the optimal adversarial perturbations that perturbs the data/dataset to the hyperplane. In the linear classification problem, if we want to change the class of sample $ x $, the smallest perturbation is to move $ x $ toward the hyperplane, then the distance from the sample to the hyperplane is the least cost choice, which is the strategy to generate the simplest accurate sAP. we can further generate  the optimal cuAP and uAP based on it.
In this paper, we systematically study the optimization models and the  interpretations for three types of  adversarial perturbations (sAP, cuAP and uAP) against classification models trained by SVMs. Inspired by the idea and framework in Deepfool \cite{ref13}, we propose the optimization models  for  sAP, cuAP and uAP  against the trained SVMs and derive the explicit formulations for sAP, cuAP and uAP (binary case), and approximate formulation for uAP of multi\mbox{-}classification. For uAP, we provide an upper bound for the fooling rate. Our numerical results also verify the fast speed in finding the three types of adversarial perturbations for classification models.
The contributions of this paper are as follows. Firstly, we propose optimization models of  sAP, cuAP and uAP against linear SVMs.  Secondly, we derive the explicit solutions and the approximate solution for the three types of adversarial perturbations, avoiding iterative process. 
Based on the formulae, we increase the interpretability of against SVM classification models. 

The rest of the paper is organized as follows. In  \cref{Optimization Models for adversarial perturbations against linear SVMs}, we briefly review the  binary  and multiclass linear SVMs. 
Then we propose a general optimization framework for adversarial perturbations of linear SVMs.
In  \cref{Universal adversarial perturbations of linear binary SVMs}	and  \cref{Adversarial Perturbations for Multiclass SVMs}, we solve  sAP, cuAP and uAP for binary SVM and multiclass SVM, respectively.
In \cref{Numerical Experiments.}, we conduct numerical experiments on MNIST and CIFAR-10 dataset. Final conclusions are given in  \cref{conclusion}.

\section{Optimization Models for Adversarial Perturbations against Linear SVMs}\label{Optimization Models for adversarial perturbations against linear SVMs}  

\subsection{Training models of linear SVMs} 
 
In this part, we  briefly review the optimization models of binary linear SVMs and multiclass linear SVMs.
We use the short-hand notation $ [B]$  to denote the set $ [B]=\{1, 2, \dots, B\} $ for some integer $ B \in \mathbb{N} $.
For the binary classification problem, we assume that the training  dataset  $ D=\{(x_1,y_1), (x_2,y_2), \dots, (x_n,y_n)\} $ is given, with $ x_i \in \mathbb{R}^p $ and $ y_i \in \{-1,1\} $ is the label of the corresponding $ x_i $. SVM aims to find the hyperplane $ H \triangleq \{x: w^{\top} x+b=0\}  $ such that the training data in $D$ are separated as much as possible. A typical soft-margin training model of the  binary SVMs is the $ L_1\mbox{-}$loss SVM model  \cite{ref33}, which is given as follows 
\begin{equation}\label{equation1}
\begin{aligned}
\min_{w \in \mathbb{R}^p, \, b\in \mathbb{R},\,  \xi\in \mathbb{R}^n} &\quad\frac{1}{2}||w||^2+C\sum_{i=1}^{n}\xi_i  \\
\hbox{s.t.}&\quad y_i(w^{\top} x_i+b)\geq 1-\xi_i, {i \in [n],}\\
&\quad \xi_i \geq 0, {i \in [n]}.
\end{aligned}
\end{equation}
Let $ (w^*,b^*)\in \mathbb{R}^p \times \mathbb{R} $ be the solution obtained by solving the $L_1\mbox{-}$loss SVM \cref{equation1}. We show the separating hyperplane (decision boundary)  in \cref{fig:figure1}.
%\begin{equation}  
%H \triangleq \{x: w^{\top} x+b=0\} 
%\end{equation}
%and the decision function is
For a test sample  $ x \in \mathbb{R}^p $, the decision function is given by \begin{equation}\label{equation3}
\hat{k}(x)=sign(({w^*})^{\top} x+b^*).
\end{equation}
We refer to \cite{ref34} for other binary SVM models with different loss functions. 
%图片  
\begin{figure}[h] 	
	\centering 	{\includegraphics[width=0.39\linewidth]{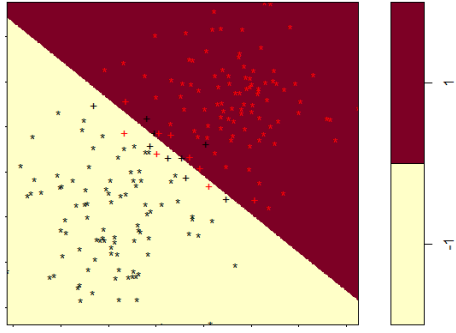}} 	
	\caption{The decision boundary of binary linear SVM, `$* $' indicates  data points, and  `$ +$' indicates support vectors.} 	
	\label{fig:figure1} 
\end{figure}

In terms of multiclass linear SVMs, we assume that the training dataset is given by $ D=\{(x_1,y_1), (x_2,y_2), \dots, (x_n,y_n)\} $, with $ x_i \in \mathbb{R}^p $ and $ y_i \in { [c]} $  is the label of the corresponding $ x_i $. Crammer and  Singer \cite{ref6} proposed an approach for multiclass problems by solving a single optimization problem. The idea of Crammer$\mbox{-}$Singer multiclass SVM is to divide the space $\mathbb{R}^p $ into $ c $ regions directly by several hyperplanes, and each region corresponds to the input of a class. Specifically, \cite{ref6} solves the following problem 
\begin{equation}\label{equation2}
\begin{aligned}
\min\limits_{w\in \mathbb{R}^{p\times c},\,  \xi\in \mathbb{R}^n}\quad&\quad\frac{1}{2}\sum_{l=1}^{c}||w_l||^2_2 + C \sum_{i=1}^{n}\xi_i\\
\hbox{s.t.}\quad&\quad w_{y_i}^{\top} x_i- w_l^{\top} x_i+\delta_{y_i,l}\geq 1-\xi_i, {i \in [n]}, {l \in [c]}\\
&\quad \xi_i \geq 0, {i \in [n]},
\end{aligned}
\end{equation}
where $ \delta_{y_i,l} =
\begin{cases}
1,  & \hbox{if} \; y_i=l, \\
0, & \hbox{if} \; y_i \neq l
\end{cases} $ and $ w $ is a matrix of size $ p\times c $, $ w_l $ is the $ l\mbox{-}th $ column of $ w $. 

Let $w^*\in \mathbb{R}^{p\times c}$, $ \xi^* \in \mathbb{R}^n $ be the solution obtained by solving \cref{equation2}. For a test data $ x\in \mathbb{R}^p $, the decision function is given by
%We obtain the separating hyperplane (decision boundary):
%\begin{equation}
%H \triangleq \{x: w_l^{\top} x-w_l^{\top} x=0\}
%\end{equation}
%and the decision function is
%图片
\begin{equation}\label{equ2_3}
\hat{k}(x)=\mathop{\arg\max} \{{(w^*)}_l^{\top} x \, | \, l \in [c] \}.
\end{equation}
The value of the inner-product of the $ l\mbox{-}th $ column of $ w $ with the instance $ x $ (i.e., $ w_l^{\top} x $)     is referred to as the confidence and the similarity score for the $ l\mbox{-}th $ class. Therefore, according to the definition above, the predicted label is the index of the column attaining the highest similarity score with $ x $. This can be viewed as a generalization of linear binary classifiers.

\subsection{General optimization framework for adversarial perturbations of linear SVMs}\label{Optimization Problem of Universal Adversarial Perturbations}

We first make the following assumption.

\begin{assumption}\label{assumption1} %给线性二分类和多分类用的sAP,cuAP 
	Assume that the dataset is given by $T_x=\{x_1, x_2, \dots, x_n\}$, $ x_i \in \mathbb{R}^p $, $ {i \in [n]} $.  There are $ c $ class of output labels, and the proportion of each class is $ \theta_l \in (0,1) $, $ \sum_{l=1}^c \theta_l=1 $, $ {l \in [c]}$. $\Omega$ is a subset of $ T_x $.  $ \hat{k} $ is a linear SVM classification model trained on dataset $ T_x $ and it  maps an input sample $ x\in T_x $ to an estimated label $ \hat{k}(x) $.
	Specifically, for the linear binary SVM classification model  \cref{equation3}, there is $ c=2 $ and  $ \hat{k}(x_i) \in \{-1,1\} $, $ {i \in [n]} $. 
	For the  linear multiclass SVM classification model  \cref{equ2_3}, we have  $ c \geq 3 $  is an integer, and $ \hat{k}(x_i) \in { [c]} $, ${i \in [n]}$. 		 	 
\end{assumption}

The general optimization model for generating adversarial perturbations against linear SVMs is to look for a perturbation  $ r \in \mathbb{R}^p $ with smallest length, such that the data $ x \in \Omega $ can be misclassified. That is,
\begin{subequations}\label{equ2}
	\begin{align}
	\min_{r \in \mathbb{R}^p} \  & \  \|r\|_2 \label{equ2_1} \\
	\hbox{s.t.} \  & \  \hat{k}(x+r)\neq \hat{k}(x),\ \forall \, x \in \Omega. \label{equ2_2}
	\end{align}
\end{subequations}

Different choices of $ \Omega $ lead to different types of adversarial perturbations, which are given below. 
\begin{itemize} 	
	\item $ |\Omega|=1 $. It means that one only needs to misclassify a single sample. It is basically sAP.
	\item  $ \Omega=T_x^l:=\{ x\, |\, \hat{k}(x)= l\} $. If $ \hat{k} $ is trained by \cref{equation3}, $ l \in \{-1,1\} $; if $ \hat{k} $ is trained by \cref{equ2_3}, $ l $ is a specific value in  $ {[c]} $. For such cases, model \cref{equ2} aims to misclassify one specific class of data, which is actually cuAP.
	\item $ \Omega=T_x $. It means that we aim to misclassify all data  in $ T_x $. It is actually uAP.
\end{itemize}

Notice that for sAP, the original form of model \cref{equ2} is proposed in \cite{ref13}. Actually, sAP is defined as the minimum perturbation  $ r $ that is sufficient to change the estimation label $ \hat{k}(x) $.

For cuAP and uAP, it may be difficult to find a nonzero $ r $ to satisfy condition \cref{equ2_2}. In other words, problem \cref{equ2} may admit only zero feasible solution for cuAP and uAP.
Therefore, the following optimization model is also proposed to calculate cuAP and uAP:
\begin{subequations}\label{equ3}
	\begin{align}
	\max_{r \in \mathbb{R}^p} \  & \  E_\Omega\left(1_{\hat{k}(x+r)\neq \hat{k}(x)}\right) \label{equ3_1} \\
	\hbox{s.t.} \  & \  \|r\|_2 \le \xi, \label{equ3_2}
	\end{align}
\end{subequations}
where $ 1_{A} $ equals one if  $ A $ is true and zero otherwise.
Similarly, $ \Omega=T_x^l $ and $ \Omega=T_x $ correspond  to  cuAP and uAP, respectively. The primitive expectation formula of \cref{equ3_1} is proposed in \cite{ref20}. $ \xi >0 $ is given, which controls the magnitude of the perturbation vector $ r $.
Below we give the  explanation  of the problem \cref{equ3}.

Given event $ A $ as $ x $ is successfully fooled, i.e., $ \hat{k}(x+r)\neq \hat{k}(x)$, and define the function:
$$ I_A(\omega)= \begin{cases} 1, & \omega \in A\\ 0, & \omega \notin A \end{cases}.
$$
$ I_A(\omega) $ is the indicator function of event $ A $. In the above definition, $ I_A(\omega) $ is a random variable, $ \omega $ is the sample point, $ \omega \in A $ indicates that event A occurs. At this time, the value of random variable $ I_A(\omega) $ is 1. And $ \omega \notin A $ indicates that event A does not occur after experiment, at this time, the value of random variable  $ I_A(\omega) $ is 0.
We abbreviate  $ I_A(\omega) $ as $ 1_{A} $,   then the expectation of random variable $ I_A(\omega) $ is $ EI_A(\omega) = E_\Omega\left(1_{\hat{k}(x+r)\neq \hat{k}(x)}\right) $.  
$  EI_A(\omega)$  reflects the average value of random variable $ I_A(\omega) $.   
Therefore, the constrained optimization problem \cref{equ3}  maximizes the average value of $ 1_{\hat{k}(x+r)\neq \hat{k}(x)} $ on the dataset $ \Omega $ when the perturbation $ r $ is small enough. 

Further, we try to convert the  expectation of $ I_A(\omega) $ into the probability of event $ A $, so as to simplify the calculation.  
We let  $ P_{\Omega}(\hat{k}(x+r)\neq \hat{k}(x)) $ represent the probability of $ \hat{k}(x+r)\neq \hat{k}(x)$, all $ x $ in $ {\Omega} $ are distributed independently. That is, $ P_{\Omega}(\hat{k}(x+r)\neq \hat{k}(x)) $ represent  the probability of event $ A $.

%Given sample space $ S $=\{$ x \in \Omega $ can be misclassified, $ x \in \Omega $ cannot be misclassified\}. For event $ A \subset S$, $ A $=\{$ x \in \Omega $ can be misclassified\}, we have
%\begin{equation}
%I_A(\omega)= \begin{cases}
%1, & \omega \in A\\
%0, & \omega \notin A
%\end{cases}.
%\end{equation}

%If we set the random variable $ Z=I_A(\omega) $,   then the expectation of random variable $ Z $ is $ EZ=EI_A(\omega) = E_\Omega[1_{\hat{k}(x+r)\neq \hat{k}(x)}] $. 
%$ EZ $  reflects the average value of random variable $ Z $.  
%Therefore, the constrained optimization problem \eqref{equ3}  maximizes the average value of $ 1_{\hat{k}(x+r)\neq \hat{k}(x)} $ on the dataset $ \Omega $ when the perturbation $ r $ is small enough. 

%改成了概率形式
%Further, we try to convert the  expectation of the indicator function of event A into the probability of event A, so as to simplify the calculation.  
%We let  $ P_{\Omega}(\hat{k}(x+r)\neq \hat{k}(x)) $ represent the probability of $ \hat{k}(x+r)\neq \hat{k}(x)$, all $ x $ in $ {\Omega} $ are distributed independently. That is, $ P_{\Omega}(\hat{k}(x+r)\neq \hat{k}(x)) $ represent  the probability of event A.

\begin{proposition}\label{proposition2}
	Given $ \Omega $,  it holds that $E_{ \Omega}\left(1_{\hat{k}(x+r)\neq \hat{k}(x)}\right)= P_{{\Omega}}(\hat{k}(x+r) \neq \hat{k}(x)) $.
	%线性二分类器上的通用对抗扰动的最优方向是固定的（or determined），且指向分类超平面。
\end{proposition}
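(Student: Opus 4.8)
The plan is to recognize this as the elementary identity relating the expectation of an indicator random variable to the probability of the underlying event, namely $E[I_A]=P(A)$. The only care needed is to fix, once and for all, the probability space on which both $E_\Omega$ and $P_\Omega$ are taken, so that the two sides of the claimed equality refer to the same distribution.

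First I would make precise the random mechanism already described in the text: treat $x$ as a random element drawn from $\Omega$, where the points of $\Omega$ are distributed independently (the empirical distribution on the finite set $\Omega$). With the fixed perturbation $r$, the fooling event is $A=\{\hat{k}(x+r)\neq \hat{k}(x)\}$, and $I_A(\omega)=1_{\hat{k}(x+r)\neq \hat{k}(x)}$ is the associated indicator, which by construction takes only the two values $0$ and $1$.

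Next I would evaluate the expectation directly from the definition of the mean of a two-valued random variable:
\begin{equation*}
E_\Omega(I_A)=1\cdot P_\Omega(I_A=1)+0\cdot P_\Omega(I_A=0)=P_\Omega(I_A=1).
\end{equation*}
Since $\{I_A=1\}$ is exactly the event $A=\{\hat{k}(x+r)\neq \hat{k}(x)\}$, the right-hand side equals $P_\Omega(\hat{k}(x+r)\neq \hat{k}(x))$, which is the desired conclusion. In the concrete finite setting this is simply the statement that the average of the $0/1$ indicator over the points of $\Omega$ equals the fraction of points that get fooled.

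There is no substantive obstacle here: the result is a one-line consequence of the definition of expectation of an indicator. The only point worth flagging is notational consistency---ensuring that $P_\Omega$ in the statement denotes probability under the same distribution on $\Omega$ that defines $E_\Omega$, rather than some other measure---after which the identity is immediate.
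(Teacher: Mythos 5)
Your proof is correct and follows essentially the same route as the paper: both compute $E_\Omega(I_A)=1\cdot P(A)+0\cdot P(A^c)=P(A)$ directly from the definition of the expectation of an indicator random variable. Your added remark about fixing the underlying distribution on $\Omega$ is a reasonable clarification but does not change the argument.
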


\begin{proof} 
	Using random variable $I_A(\omega) $, we have 
	\begin{align*}
	E_{\Omega}\left(1_{\hat{k}(x+r)\neq \hat{k}(x)}\right)&=EI_A(\omega)\\&=1 \cdot P\{ \omega \in A\}+0 \cdot P\{ \omega \notin A\}\\&=P\{ \omega \in A\}
	\\&=P_{\Omega}(\hat{k}(x+r)\neq \hat{k}(x)).
	\end{align*}
	The proof is completed.
\end{proof}

Due to  \cref{proposition2}, the constrained optimization problem \cref{equ3} can also be written equivalently  as the following model
\begin{subequations}\label{equ4}
	\begin{align}
	\max_{r \in \mathbb{R}^p} \  & \  P_{\Omega}(\hat{k}(x+r)\neq \hat{k}(x)) \label{equ3_3} \\
	\hbox{s.t.} \  & \  \|r\|_2 \le \xi. \label{equ3_4}
	\end{align}	
\end{subequations}	
It establishes a connection with the model of the universal adversarial perturbations in \cite{ref15}.  
In the rest of the paper, we will mainly solve the problem \cref{equ4} and denote the perturbation rate $ P_{\Omega}(\hat{k}(x+r)\neq \hat{k}(x)) $  as $ G_{\Omega, \hat{k}} $.

\begin{remark}\label{remark0}
	In practice, the constraint $ \xi $ of  the size of uAP is usually selected through experimental verification, which is related to specific datasets. 
\end{remark}

\section{Adversarial Perturbations for Binary Linear SVMs}\label{Universal adversarial perturbations of linear binary SVMs}	

In this section, we solve sAP, cuAP and uAP for binary linear SVM, and give their explicit solutions and interpretations respectively.
%先全改f(x)为

\begin{assumption}\label{assumption3}
	Assume that \cref{assumption1} holds and the linear binary classifier is trained by model \cref{equation3}.
\end{assumption}

\subsection{The case of sAP} 
\begin{theorem}\label{theorem1}
	Under \cref{assumption3}, the  optimal sAP of  optimization problem  \cref{equ2} has the following closed form expression
	\begin{equation}\label{eq2}
	r=-sign(w^{\top} x + b) \cdot \dfrac{|w^{\top} x + b|}{\|w\|_2^2} \cdot  w.
	\end{equation}	 
\end{theorem}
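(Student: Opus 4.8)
The plan is to exploit the fact that for sAP we have $|\Omega|=1$, so problem \cref{equ2} collapses to minimizing $\|r\|_2$ subject to the single requirement that $x+r$ lie on the opposite side of the decision hyperplane $H=\{z: w^{\top}z+b=0\}$ from $x$. The starting observation is that $\hat{k}(x+r)\neq\hat{k}(x)$ is, by the definition of the decision function \cref{equation3}, exactly the statement that $sign(w^{\top}(x+r)+b)$ differs from $sign(w^{\top}x+b)$; that is, the affine quantity $w^{\top}z+b$ must change sign as $z$ moves from $x$ to $x+r$.

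First I would split into the two cases according to $\hat{k}(x)$. If $w^{\top}x+b>0$, flipping the sign means requiring $w^{\top}(x+r)+b\le 0$, which is the linear inequality $w^{\top}r\le -(w^{\top}x+b)$; if $w^{\top}x+b<0$ it becomes $w^{\top}r\ge -(w^{\top}x+b)$. In either case the feasible set is a closed half-space whose bounding hyperplane is $\{r: w^{\top}r=-(w^{\top}x+b)\}$, and since $\|r\|_2$ grows with the distance travelled toward that hyperplane, the optimal $r$ must sit on this bounding hyperplane, i.e. it satisfies the equality $w^{\top}(x+r)+b=0$. This reduces the problem to $\min\|r\|_2$ subject to the single linear equation $w^{\top}r=-(w^{\top}x+b)$.

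Then I would solve this minimum-norm problem directly. By Cauchy--Schwarz, $|w^{\top}r|\le\|w\|_2\,\|r\|_2$, so every feasible $r$ obeys $\|r\|_2\ge |w^{\top}x+b|/\|w\|_2$, with equality precisely when $r$ is parallel to $w$. Writing $r=t\,w$ and substituting into the constraint gives $t\|w\|_2^2=-(w^{\top}x+b)$, hence $t=-(w^{\top}x+b)/\|w\|_2^2$ and $r=-\frac{w^{\top}x+b}{\|w\|_2^2}\,w$. Rewriting $w^{\top}x+b=sign(w^{\top}x+b)\,|w^{\top}x+b|$ yields exactly \cref{eq2}, and the two sign cases are unified automatically.

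The main obstacle is the subtlety that the constraint \cref{equ2_2} is an open condition: a strict sign change is never attained on the boundary $w^{\top}(x+r)+b=0$ itself, so strictly speaking \cref{equ2} has an infimum rather than an attained minimum. The clean way to handle this is to argue on the closure --- the minimal-norm perturbation that reaches the decision boundary is the natural ``optimal'' sAP, and any strictly misclassifying $r$ has norm arbitrarily close to, but larger than, $|w^{\top}x+b|/\|w\|_2$. I would state this reduction explicitly so that the equality constraint $w^{\top}(x+r)+b=0$ is justified rather than merely assumed.
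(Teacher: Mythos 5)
Your proof is correct and follows essentially the same route as the paper: both rewrite the misclassification condition as a single half-space constraint on $r$ and identify the optimal perturbation as the projection of the origin onto its bounding hyperplane, you via Cauchy--Schwarz and the paper via the point-to-hyperplane distance formula. Your closing discussion of the open/closed boundary subtlety corresponds to the paper's own Remark following the theorem, which notes that \cref{eq2} places $x+r$ exactly on the hyperplane and adds a small $\varepsilon>0$ in practice to push it strictly across.
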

%\begin{proposition}\label{proposition1}	
%	The optimal adversarial perturbation \eqref{equ2} on the linear binary classifier has closed form solution and it can be written as $ r=-sign(f(x)) \cdot \dfrac{|w \cdot x + b|}{\|w\|_2^2} \cdot  w $.
%	%线性二分类器上的通用对抗扰动的最优方向是固定的（or determined），且指向分类超平面。
%\end{proposition}

\begin{proof}
	Notice that $r$ satisfies \cref{equ2_2}. If $ \hat{k}(x)=sign(w^{\top}x+b)=1 $, then $ \hat{k}(x+r)=-1 $.  There is $ w^{\top}x+b \ge 0 $ and $ w^{\top}x+b+w^{\top}r \le 0 $. It gives that 	
	$$
	w^{\top}r \le -|w^{\top}x+b|.
	$$
	Similarly, if $ sign(w^{\top}x+b)=-1 $, we have
	$$
	w^{\top}r \ge |w^{\top}x+b|.
	$$
	Overall, \cref{equ2_2} can be rewritten as $$
	sign(w^{\top}x+b) w^{\top}r \le -|w^{\top}x+b|.$$
	Then the optimization problem  \cref{equ2} is equivalent to	
	\begin{subequations}\label{equ0} 	
		\begin{align} 	
		\min_{r \in \mathbb{R}^p} \  & \  \|r\|_2 \label{equ0_1} \\ 	
		\hbox{s.t.} \  & \  sign(w^{\top}x+b) w^{\top}r \le -|w^{\top}x+b|, \ \forall x \in \Omega. \label{equ0_2} 	
		\end{align} 
	\end{subequations}	
	
	Notice that in sAP, there is only one single point in $ \Omega $. 
	Through the constraint condition \cref{equ0_2}, we obtain that the feasible region of $ r $ is the closed half-space $ \{ r\, |\, \langle sign(w^{\top}x+b) w, r \rangle \le -|w^{\top}x+b| \} $ that does not contain the origin. 
	In \cref{equ0_1}, since $ \|r\|_2 $ represents the Eucliden distance between the origin and the vector  $ r $ in the feasible region,
	the optimal solution of optimization problem \cref{equ0} is the shortest distance from the origin to hyperplane $ \{ r\, | \, \langle sign(w^{\top}x+b) w, r \rangle = -|w^{\top}x+b| \} $, and the direction  is opposite to the normal vector of hyperplane. The  distance is calculated by $ \dfrac{|w^{\top} x + b|}{\|w\|_2} $ and the direction  is $ -sign(w^{\top} x + b) \cdot \dfrac{w}{\|w\|_2} $.
	%When the classifier is linear binary, it is obvious that the distance between the sample $ x $ and the separating hyperplane $H $ can be calculated by
	%% In the linear binary classifier model,
	%\begin{equation}
	%d=\dfrac{|w^{\top} x + b|}{\|w\|_2}. \label{eq1}
	%\end{equation}
	%The direction of moving $ x $ in class $\hat{k}(x)$ toward the hyperplane is 
	%\begin{equation}
	% -sign(w^{\top} x + b) \cdot  \dfrac{ w}{\|w\|_2}.
	%\end{equation}
	Therefore, the optimal sAP of linear binary classifier can be written as  \cref{eq2}.
\end{proof}

\begin{remark}\label{remark1} 
	In this case, the solution provided by \cref{eq2} leads to the fact that $ sign(w^{\top}(x+r) + b)=0 $. 
	However, in practice, it is difficult to determine the class of the data that just lie on the hyperplane. Usually we should move the sample toward the hyperplane, and  make it slightly pass across the hyperplane. Mathematically speaking, we can add a small enough $ \varepsilon > 0$ to the perturbation vector to make sAP  become the following form 
	$$
	r=-sign(w^{\top}  x + b) \cdot \dfrac{|w^{\top}  x + b|+\varepsilon}{\|w\|_2^2} \cdot  w.
	$$
\end{remark}

\subsection{The case of cuAP}

\begin{theorem}\label{theorem2}
	Under  \cref{assumption3}, the  optimal cuAP of  optimization problem  \cref{equ4} has a closed form expression, which is  $ r=-sign(w^{\top} x + b)\cdot  \dfrac{\xi w}{\|w\|_2} $.
\end{theorem}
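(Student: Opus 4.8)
The plan is to reduce the class-universal fooling objective to a one-dimensional monotone problem in the scalar $w^\top r$, and then extremize that scalar over the norm ball by Cauchy--Schwarz. First I would fix the class label $l\in\{-1,1\}$ defining $\Omega=T_x^l$, and record the crucial fact that $sign(w^\top x+b)=l$ is \emph{constant} for every $x\in\Omega$. This is exactly what makes the formula in \cref{theorem2} a single, well-defined class-universal perturbation rather than an $x$-dependent one: the apparent dependence on $sign(w^\top x+b)$ is really just the constant $l$.

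Next, take $l=1$ without loss of generality, so that $w^\top x+b\ge 0$ on $\Omega$. A point $x\in\Omega$ is fooled precisely when $w^\top(x+r)+b<0$, i.e.\ when $w^\top x+b<-\,w^\top r$. Hence the objective $G_{\Omega,\hat{k}}=P_\Omega(w^\top x+b<-w^\top r)$ is a non-decreasing function of the single scalar $-w^\top r$, because the empirical distribution of the margins $w^\top x+b$ over $\Omega$ is fixed and raising the threshold $-w^\top r$ can only enlarge the fooled set. Therefore maximizing the fooling probability subject to $\|r\|_2\le\xi$ is equivalent to minimizing $w^\top r$ over the same ball.

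The extremization is then immediate: Cauchy--Schwarz gives $w^\top r\ge -\|w\|_2\|r\|_2\ge -\xi\|w\|_2$, with equality attained by $r=-\xi w/\|w\|_2$. Since $sign(w^\top x+b)=1$ here, writing $-1=-sign(w^\top x+b)$ recovers exactly the claimed closed form $r=-sign(w^\top x+b)\cdot \xi w/\|w\|_2$; the symmetric case $l=-1$ flips every inequality and reproduces the same formula. One checks that this $r$ saturates the constraint, $\|r\|_2=\xi$.

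The main obstacle is not the optimization step, which is a one-line Cauchy--Schwarz argument, but rather justifying that the reduction to a single scalar is lossless---that the greedy choice minimizing $w^\top r$ simultaneously maximizes the fooled fraction. The delicate point is the monotonicity claim: because \emph{every} point of the class lies on the same side of the hyperplane, a single translation by $r$ shifts all margins by the common amount $w^\top r$, so there is genuinely no trade-off between fooling different points and one direction serves the whole class at once. I would also flag the same boundary-convention caveat as in \cref{remark1} (strict versus non-strict inequality at the hyperplane), which if needed can be absorbed by an arbitrarily small enlargement of the perturbation so that misclassification is strict.
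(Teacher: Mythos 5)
Your proof is correct, and it takes a genuinely different---and in fact more rigorous---route than the paper's. The paper derives cuAP as a corollary of \cref{theorem1}: it notes that for every $x\in T_x^1$ the per-sample optimal perturbation (sAP) points in the common direction $-w/\|w\|_2$, and then concludes, somewhat informally (``we hope that the data could be fooled as much as possible''), that scaling this common direction to the full budget $\xi$ is the optimal class-universal choice. Your argument bypasses \cref{theorem1} entirely: you observe that the fooling event for each $x\in\Omega$ is $w^{\top}x+b<-w^{\top}r$, so the objective of \cref{equ4} depends on $r$ only through the scalar $w^{\top}r$ and is monotone non-decreasing in the threshold $-w^{\top}r$; Cauchy--Schwarz then pins down the extremizer $r=-\xi w/\|w\|_2$ on the norm ball. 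This makes precise exactly the step the paper leaves to intuition: because a single translation shifts all margins of the class by the same amount $w^{\top}r$, there is no trade-off among samples, and saturating the budget along $-w$ is provably optimal for the probability objective \cref{equ3_3}, not merely optimal for each sample separately. What the paper's route buys instead is interpretability and economy: it exhibits cuAP as the common direction of all the individual sAPs, reinforcing the geometric picture used throughout the paper, at the cost of not actually verifying optimality for the probabilistic model it claims to solve. Your closing caveat about strict versus non-strict inequality at the hyperplane matches the paper's \cref{remark1} and is handled the same way there.
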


\begin{proof}
	Without loss of generality, let $ \Omega=T_x^1 $, that is, $ \Omega $ is the  dataset  with labels $ l=1 $. For any sample $ x \in \Omega $,  according to  \cref{theorem1}, the optimal $ r $  of sAP is given by
	$$
	r_x=- \dfrac{|w^{\top} x + b|}{\|w\|_2^2} \cdot  w.
	$$
	The direction of $ r_x $ is $ -  \dfrac{ w}{\|w\|_2} $. Therefore, under the constraint \cref{equ3_4},  we hope 
	that the data could be fooled as much as possible in the positive data. The optimal cuAP of positive  dataset  is 
	$
	r=-  \dfrac{\xi w}{\|w\|_2}.
	$
	Similarly, if $ \Omega $ is the negative  dataset, the optimal cuAP is $ r=\dfrac{\xi w}{\|w\|_2} $. In summary, the optimal cuAP  of linear binary classifier can be written as  
	\begin{equation}\label{equation5}
	r=-sign(w^{\top} x + b)\cdot  \dfrac{\xi w}{\|w\|_2}.
	\end{equation}
	The proof is finished.
\end{proof}

%\begin{remark} 	
%	\sout{
%	If we denote the decision function \eqref{equation3} of the linear binary classifier  as $ \hat{k}(x)=sign(f(x)) $, then $ \nabla f(x)=w $. The optimal cuAP given in Theorem \ref{theorem2} can be further written as 	}	
%\begin{equation}\label{equ3_21} 	
%\sout{r= -sign(f(x))\cdot  \dfrac{\xi \nabla f(x)}{  \|\nabla f(x)\|_2} .}
%\end{equation}  	
%%To simplify the calculation, we will not use the gradient-related formula when actually generating  cuAP, 
%\sout{The formula in \eqref{equ3_21} generating  cuAP
%establishes a connection with some previous papers that used the gradient information of the classifiers to establish adversarial perturbations (\cite{ref11}, 2015; \cite{ref13}, 2016; \cite{ref14}, 2019; \cite{ref20}, 2020). }
%\end{remark}  
%用不用详细说是什么联系？？

\subsection{The case of uAP}

For the case of uAP, we have the following result.

\begin{theorem}\label{theorem3} 	
	Assume that \cref{assumption3} holds. Let $ \theta_1 $ be the ratio of positive data over all the sample data.
	\begin{itemize}[itemindent=3em]  	 	
		\item 	Suppose $ \xi $ is sufficiently large, the  optimal uAP of  optimization problem  \cref{equ4} takes the following form:
		\begin{equation}\label{the1}	
		r=\begin{cases}
		- \dfrac{\xi w}{\|w\|_2}, & if\  \theta_1>\dfrac{1}{2}, \\
		\dfrac{\xi w}{\|w\|_2}, & otherwise. \\
		\end{cases} 
		\end{equation}
		\item The upper bound of $ G_{\Omega, \hat{k}} $ on the linear binary classifier can  reach $ \max(\theta_1, 1-\theta_1)$, that is, $ G_{\Omega, \hat{k}} \leq \max(\theta_1, 1-\theta_1)$. 
	\end{itemize} 
\end{theorem}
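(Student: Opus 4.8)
The plan is to collapse the $p$-dimensional problem \cref{equ4} into a one-dimensional optimization over the single scalar $t := w^{\top} r$. The crucial observation is that, since $\hat{k}(x) = sign(w^{\top}x + b)$, whether a sample $x$ is fooled by $r$ depends on $r$ only through $t$: the condition $\hat{k}(x+r) \neq \hat{k}(x)$ asks that $w^{\top}x + b + t$ have the opposite sign to $w^{\top}x + b$. By Cauchy--Schwarz, the constraint $\|r\|_2 \le \xi$ forces $t \in [-\xi\|w\|_2,\, \xi\|w\|_2]$; conversely every value in this interval is realized by a feasible $r$ (take $r = \tfrac{t}{\|w\|_2^2}\,w$, parallel to $w$). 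Hence maximizing $G_{\Omega,\hat{k}}$ over feasible $r$ is exactly equivalent to maximizing the fooling rate over $t$ in that interval, and nothing is lost by restricting $r$ to the direction $\pm w$.

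First I would classify the per-sample fooling condition. Writing $m_x := w^{\top}x + b$, a positive sample ($m_x > 0$) is fooled iff $m_x + t < 0$, i.e.\ $t < -m_x < 0$, while a negative sample ($m_x < 0$) is fooled iff $m_x + t > 0$, i.e.\ $t > -m_x > 0$. The key structural fact is then immediate: a single value of $t$ cannot be simultaneously negative and positive, so for any feasible $r$ one fools \emph{either} only positive samples (when $t<0$) \emph{or} only negative samples (when $t>0$), and none when $t=0$. Consequently the number of fooled samples never exceeds the size of the larger class, which gives $G_{\Omega,\hat{k}} \le \max(\theta_1, 1-\theta_1)$, the upper bound of the second bullet.

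Next I would establish attainability for sufficiently large $\xi$. Because $T_x$ is finite, the margins $\{\,|m_x| : x \in T_x\,\}$ are bounded; once $\xi\|w\|_2 > \max_x |m_x|$, taking $t = -\xi\|w\|_2$, that is $r = -\xi w/\|w\|_2$, makes $m_x + t < 0$ for \emph{every} positive sample, so all positive data are fooled and the rate equals $\theta_1$. Symmetrically $t = +\xi\|w\|_2$, i.e.\ $r = \xi w/\|w\|_2$, fools all negative data and attains rate $1-\theta_1$. Comparing the only two admissible directions, the optimum is $\max(\theta_1, 1-\theta_1)$, realized by $r = -\xi w/\|w\|_2$ when $\theta_1 > \tfrac12$ and by $r = \xi w/\|w\|_2$ otherwise, which is precisely \cref{the1}; together with the previous paragraph this matches the bound and proves both bullets.

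The genuine crux of the argument is the one-dimensional reduction itself -- recognizing that the objective and every constraint interact with $r$ only through $t = w^{\top}r$, so that aligning $r$ with $\pm w$ is optimal and the interval $[-\xi\|w\|_2,\xi\|w\|_2]$ is attained exactly. A secondary technical point is pinning down the threshold on $\xi$ that guarantees the \emph{entire} target class (not merely a majority) is fooled, which the phrase ``sufficiently large $\xi$'' hides. Finally, one should fix a boundary convention: as in \cref{remark1}, the optimal perturbation can place a point exactly on the hyperplane ($m_x + t = 0$), so ``fooled'' is read with a strict inequality (or an infinitesimal nudge); this does not affect the finite counting argument once $\xi$ is strictly above the threshold.
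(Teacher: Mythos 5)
Your proof is correct, and its skeleton matches the paper's: both arguments turn on the observation that the sign of $w^{\top}r$ decides which class can possibly be fooled (fooling a positive sample forces $w^{\top}r<0$, fooling a negative one forces $w^{\top}r>0$), which gives the bound $G_{\Omega,\hat{k}}\le\max(\theta_1,1-\theta_1)$, and both then fool the entire majority class once $\xi$ is large. Where you genuinely diverge is in how the optimal \emph{direction} is justified. The paper gets it by reducing to the cuAP case and citing \cref{theorem2} (``if uAP can only fool positive data, the problem simplifies to a cuAP problem with $\Omega=T_x^1$''), which in turn rests on the geometric argument of \cref{theorem1}; your proof instead collapses \cref{equ4} to a one-dimensional problem in $t=w^{\top}r$ via Cauchy--Schwarz, showing the feasible set $\|r\|_2\le\xi$ maps exactly onto $t\in[-\xi\|w\|_2,\xi\|w\|_2]$ and that every such $t$ is attainable. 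This reduction is self-contained, makes the ``restrict to $\pm w$'' step rigorous rather than inherited, and as a bonus yields an explicit threshold $\xi\|w\|_2>\max_{x\in T_x}|w^{\top}x+b|$ that quantifies the paper's vague ``$\xi$ sufficiently large.'' The paper's route is shorter and highlights the structural link between uAP and cuAP; yours is more elementary and pins down the constants, including the boundary convention (consistent with \cref{remark1}) that the paper leaves implicit.
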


\begin{proof} 
	We have
	\begin{align*}
	P_\Omega(\hat{k}(x+r)\neq \hat{k}(x))=&P_{T_x}(sign(w^{\top}(x+r)+b) \neq sign(w^{\top}x+b))\nonumber
	\\=&P_{T_x}( (w^{\top}(x+r)+b)(w^{\top}x+b)<0  )\nonumber
	\\=&P_{T_x}( {(w^{\top}x+b)}^2+(w^{\top}x+b)w^{\top}r <0)\nonumber
	\\=&\theta_1  P_{T_x^1}({(w^{\top}x+b)}^2+(w^{\top}x+b)w^{\top}r <0)+(1-\theta_1) \nonumber 
	\\& P_{T_x^{-1}}({(w^{\top}x+b)}^2+(w^{\top}x+b)w^{\top}r <0).
	\end{align*}
	We abbreviate $ P_{T_x^1}({(w^{\top}x+b)}^2+(w^{\top}x+b)w^{\top}r <0) $ and $ P_{T_x^{-1}}({(w^{\top}x+b)}^2+(w^{\top}x+b)w^{\top}r <0) $ as $P_{T_x^1}$ and $ P_{T_x^{-1}}$.
	
	Firstly, we prove that $ r $ will not satisfy both $ P_{T_x^1}>0 $ and $ P_{T_x^{-1}}>0 $, i.e.,  uAP can fool only one class of data at the same time. We discuss two situations. 	If $ P_{T_x^1}>0 $, that is,  there exists positive data $ x_0 \in  {T_x^1}$ such that $ {(w^{\top}x_0+b)}^2+(w^{\top}x_0+b)w^{\top}r <0  $ holds. Because $ (w^{\top}x_0+b)w^{\top}r <- {(w^{\top}x_0+b)}^2 < 0 $ and $ w^{\top}x_0+b>0 $, we have $ w^{\top}r<- (w^{\top}x_0+b)<0 $. In this case, for any negative data $ x $, there is  $ (w^{\top}x+b)(w^{\top}x+b)+(w^{\top}x+b)w^{\top}r>0 $. Thus, uAP cannot mislead the negative data.	That is, $ P_{T_x^{-1}}=0 $. If $ P_{T_x^{-1}}>0 $, we have $ P_{T_x^1}=0 $. Therefore, uAP of  optimization problem  \cref{equ4} can fool one class of data at most,  and the upper bound of $ G_{\Omega, \hat{k}} $ on the linear binary classifier is $ \max(\theta_1, 1-\theta_1) $.

	Next, we prove the explicit formula for the optimal direction of uAP. If uAP can only fool positive data, the optimization problem \cref{equ4} can be simplified to a cuAP problem with $\Omega=T_x^1 $. According to \cref{equation5} of  \cref{theorem2}, we obtain that the optimal solution for uAP is
	$ r=-  \dfrac{\xi w}{\|w\|_2} $.
	If uAP can only fool negative data, we obtain that the optimal uAP is $ r=\dfrac{\xi w}{\|w\|_2} $. 
	Because $\xi$ is sufficiently large, when adding $ r $ to positive data or negative data, all data will be mislead, so we only need to select the class with more data to attack. Finally, we can reach $ \cref{the1} $.
	%Considering comprehensively, we hope to maximize the fooling rate of the total data, that is, select a larger  proportion of data to add attacks. If $ \theta_1\geq \theta_2 $, the optimal uAP is \eqref{equation6}. Otherwise, the optimal uAP is \eqref{equation7}.	
\end{proof}

To proceed our discussion about the relationship between $ G_{\Omega, \hat{k}} $  and $ \xi $, we need the following assumption on dataset $ T_x $.

\begin{assumption}\label{assumption2} %高斯分布	
	Let \cref{assumption3} hold. Assume that the data in $ T_x $  satisfies Gaussian mixture distribution, {with} the probability density function of the data  $ p(x\, | \, \theta_1, \theta_2, \mu_+,  \mu_-, \Sigma_+,  \Sigma_-)\\ = \theta_1 p(x\, | \, \mu_+,\Sigma_+) + \theta_2 p(x\, |\, \mu_-,\Sigma_-) $, $ x \in \mathbb{R}^p $. Here we  notice that $ p(x\, | \, \mu_+,\Sigma_+) = $ $\dfrac{1}{{(2 \pi)}^\frac{p}{2} {|\Sigma_+|}^\frac{1}{2}}\cdot\\exp \left\{-\dfrac{1}{2}{(x-\mu_+)}^{\top} \Sigma_+^{-1}(x-\mu_+)\right\}$ is the Gaussian distribution density function of positive data, $ \mu_+ $ and $ \Sigma_+ $ represent the expectation and variance of positive data, and $ p(x\, | \, \mu_-,\Sigma_-)$ $ = $ $\dfrac{1}{{(2 \pi)}^\frac{p}{2} {|\Sigma_-|}^\frac{1}{2}} exp \left\{-\dfrac{1}{2}{(x-\mu_-)}^{\top}\Sigma_-^{-1}(x-\mu_-)\right\} $ is the Gaussian distribution density function of negative data, $ \mu_- $ and $ \Sigma_- $ represent the expectation and variance of negative data.	
	%	The data in $ T_x $   satisfies Gaussian mixture distribution with expectation $ \mu $ and variance $ \Sigma $, $ p(x\ | \mu,\Sigma)=\theta_1 p(x|\mu_+,\Sigma_+) + \theta_2 p(x|\mu_-,\Sigma_-) $, 
	%		where $ \mu_+ $ and $ \Sigma_+ $ represent the expectation and variance of positive data, $ \mu_- $ and $ \Sigma_- $ represent the expectation and variance of negative data. 
	%where the proportion of positive data $x_+ \sim (\mu_+,\Sigma_+) $ is $ \theta_1 $, and the proportion of negative data $ x_- \sim (\mu_-,\Sigma_-) $ is $ \theta_2 $. $ \Omega$ is the subset of $ T $. $ \hat{k} $ is a linear SVM classification model with known parameters.
\end{assumption}

With the above assumption and  \cref{theorem3}, we have the following result.

\begin{corollary}
	Assume that \cref{assumption2} holds. Since $ \|r\|_2 \le \xi $, we have
	\begin{equation}\label{the2}	 		
	P_\Omega(\hat{k}(x+r)\neq \hat{k}(x))=\begin{cases} 		
	\theta_1 F_{Y_+}(\xi), & if\  \theta_1>\theta_2, \\ \theta_2 F_{Y_-}(\xi), & otherwise, \\ 		
	\end{cases}  		
	\end{equation}
	where $ Y_+ $ and $ Y_- $  {are} defined by $ Y_+=\dfrac{w^{\top}x+b}{\|w\|_2} $ and $ Y_-=\dfrac{-w^{\top}x-b}{\|w\|_2} $, $ F_{Y_+}(\xi) $ and $ F_{Y_-}(\xi) $ are the
	cumulative distribution function of   $ Y_+ $ and $ Y_- $, respectively defined by 
	$ F_{Y_+}(\xi) \triangleq P(Y_+<\xi) $ and
	$ F_{Y_-}(\xi) \triangleq P(Y_-<\xi) $.
	%and the lirate of uAP has a positive correlation with the setting of constraint condition $ \xi $. If $ \theta_1\geq  \theta_2 $, the correlation can be expressed as $ P_x(\hat{k}(x+r)\neq \hat{k}(x)) =\theta_1 F_{Y_+}(\xi) $, otherwise, $ P_x(\hat{k}(x+r)\neq \hat{k}(x)) =\theta_2 F_{Y_-}(\xi) $, where $ F$ represents the cumulative distribution function,  $Y_+=\dfrac{w^{\top}x+b}{\|w\|_2} $ and $Y_-=\dfrac{-w^{\top}x-b}{\|w\|_2} $. When $ \xi $ increases, the fooling rate also increases until reaches the upper bound. 
	%数据的扰动率与通用对抗扰动的L2范数具有正相关关系。near binary classifier is \eqref{equation3}. The fooling 
\end{corollary}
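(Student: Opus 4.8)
The plan is to begin from the probability decomposition already established inside the proof of \cref{theorem3}, namely
\[
P_\Omega(\hat{k}(x+r)\neq \hat{k}(x)) = \theta_1\, P_{T_x^1} + (1-\theta_1)\, P_{T_x^{-1}},
\]
where $P_{T_x^1}$ and $P_{T_x^{-1}}$ abbreviate the probability of the event $\{(w^{\top}x+b)^2 + (w^{\top}x+b)\,w^{\top}r < 0\}$ taken over the positive and the negative class, respectively. By \cref{theorem3} the optimal uAP fools only a single class, so exactly one of these two terms is nonzero, and which one depends on whether $\theta_1 > \theta_2$ (equivalently $\theta_1 > \tfrac12$). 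I would therefore split the argument into the two cases of the claimed formula and substitute the corresponding optimal direction supplied by \cref{theorem3}.

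In the case $\theta_1 > \theta_2$ the optimal direction is $r = -\xi w/\|w\|_2$. On the positive class one has $w^{\top}x+b > 0$, so dividing the defining inequality by this positive factor reduces it to $w^{\top}x + b < -w^{\top}r$; since $w^{\top}r = -\xi\|w\|_2$ for this $r$, the event collapses to $\frac{w^{\top}x+b}{\|w\|_2} < \xi$, that is $\{Y_+ < \xi\}$, whose probability is $F_{Y_+}(\xi)$ by definition. As $P_{T_x^{-1}} = 0$ here, the total fooling rate is $\theta_1 F_{Y_+}(\xi)$. The case $\theta_1 \le \theta_2$ is symmetric: substituting $r = \xi w/\|w\|_2$ and dividing the defining inequality by the now \emph{negative} factor $w^{\top}x+b$ on the negative class reverses the inequality to $\frac{-w^{\top}x-b}{\|w\|_2} < \xi$, i.e. $\{Y_- < \xi\}$, which contributes $(1-\theta_1)F_{Y_-}(\xi) = \theta_2 F_{Y_-}(\xi)$ after using $1-\theta_1 = \theta_2$.

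The role of \cref{assumption2} in this plan is to guarantee that the two surviving quantities are bona fide cumulative distribution functions: under the Gaussian mixture model, $Y_+$ and $Y_-$ are affine images of the class-conditional Gaussian variables and are therefore continuous univariate random variables, so $P(Y_+ < \xi)$ and $P(Y_+ \le \xi)$ agree and $F_{Y_+}, F_{Y_-}$ are well defined. I expect the only delicate point to be the sign bookkeeping when dividing the quadratic inequality by $w^{\top}x+b$ — positive on the positive class but negative on the negative class, so the inequality must be flipped in exactly one case — together with noting that the active constraint $\|r\|_2 = \xi$ is what produces $w^{\top}r = \mp\,\xi\|w\|_2$. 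Beyond these, the corollary is a direct substitution into the decomposition from \cref{theorem3}.
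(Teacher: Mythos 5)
Your proposal is correct and follows essentially the same route as the paper's own proof: decompose the fooling probability into class-conditional terms, substitute the optimal direction $r=\mp\,\xi w/\|w\|_2$ from \cref{theorem3} according to whether $\theta_1>\theta_2$, and reduce the surviving term to the event $\{Y_\pm<\xi\}$ while the other term vanishes. The only cosmetic differences are that you carry the quadratic inequality $(w^{\top}x+b)^2+(w^{\top}x+b)w^{\top}r<0$ and divide by the signed factor, whereas the paper first splits the product inequality into the class-conditional events $w^{\top}(x+r)+b<0$ and $w^{\top}(x+r)+b>0$ and shows the dead term is zero by direct computation rather than by citing the single-class-fooling conclusion.
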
	
\begin{proof}
	Rewrite  $ G_{\Omega, \hat{k}} $ in \cref{equ3_3} as	
	%数据的扰动率可写作
	\begin{align*}
	P_\Omega(\hat{k}(x+r)\neq \hat{k}(x))
	=&P_{T_x}( (w^{\top}(x+r)+b)(w^{\top}x+b)<0  )\\
	=&\theta_1  P_{T_x^1}(w^{\top}(x+r)+b<0)+\theta_2  P_{{T_x^{-1}}}(w^{\top}(x+r)+b>0).
	\end{align*}
	If $ \theta_1\geq \theta_2 $,  according to  \cref{theorem3},
	then the optimal uAP is $ r=-  \dfrac{\xi w}{\|w\|_2} $. We have the maximal perturbation rate  {as follows}
	\begin{align*}
	&\theta_1  P_{T_x^1}\left( w^{\top}x+b- \dfrac{\xi w^{\top} w}{\|w\|_2} <0\right)+\theta_2  P_{T_x^{-1}}\left( w^{\top}x+b- \dfrac{\xi w^{\top} w}{\|w\|_2} >0\right)\\
	=&\theta_1  P_{T_x^1}( w^{\top}x+b<\xi \|w\|_2 )+\theta_2  P_{T_x^{-1}}(w^{\top}x+b>\xi  \|w\|_2 )\\
	=&\theta_1 P_{T_x^1}( w^{\top}x+b <\xi \|w\|_2)+ \theta_2 \times 0.
	\end{align*}
	We  {denote} the Gaussian distribution with  expectation $ \mu $ and variance $ \Sigma $ as $N(\mu,\Sigma) $.
	Because the positive data $ x $ $ \sim $  
	$N(\mu_+,\Sigma_+) $,  {we have} the random variable $Y_+=\dfrac{w^{\top}x+b}{\|w\|_2} $ $ \sim $  $N\left(\dfrac{w^{\top} \mu_+ +b}{\|w\|_2},\dfrac{w^{\top}}{\|w\|_2}\Sigma_+\dfrac{w}{\|w\|_2}\right)$.  {Therefore, we can obtain that} 
	%Because the positive data satisfy $x_+ \sim (\mu_+,\Sigma_+) $ and the negative data satisfy $ x_- \sim (\mu_-,\Sigma_-) $. Let's set the random variables $Y_+=\dfrac{w^{\top}x_++b}{\|w\|_2} $ and  $Y_-=\dfrac{-w^{\top}x_--b}{\|w\|_2} $ ,i.e., $ Y_+ \sim (\dfrac{w^{\top} \mu_+ +b}{\|w\|_2},\dfrac{w^{\top}}{\|w\|_2}\Sigma_+\dfrac{w}{\|w\|_2})$ and $Y_- \sim (\dfrac{-w^{\top} \mu_- -b}{\|w\|_2},\dfrac{-w^{\top}}{\|w\|_2}\Sigma_+\dfrac{-w}{\|w\|_2}) $, then 
	$$
	P_x(\hat{k}(x+r)\neq \hat{k}(x)) 
	=\theta_1 P_{T_x^1}\left(\dfrac{w^{\top}x+b}{\|w\|_2}< \xi \right)
	=\theta_1 F_{Y_+}(\xi),
	$$
	where $ F_{Y_+} $ is the cumulative distribution function of $ Y_+ $.
	If $ \theta_1< \theta_2 $, the optimal uAP $ r=\dfrac{\xi w}{\|w\|_2} $.
	There is $Y_-=\dfrac{-w^{\top}x-b}{\|w\|_2} $ $\sim$  $ N \left(\dfrac{-w^{\top} \mu_- -b}{\|w\|_2},\dfrac{-w^{\top}}{\|w\|_2}\Sigma_+\dfrac{-w}{\|w\|_2}\right) $. Therefore, we have
	\begin{align*} 
	&\theta_1  P_{T_x^1}\left( w^{\top}x+b+ \dfrac{\xi w^{\top} w}{\|w\|_2} <0\right)+\theta_2  P_{T_x^{-1}}\left( w^{\top}x+b+ \dfrac{\xi w^{\top} w}{\|w\|_2} >0\right)\\ =&\theta_1 \times 0+\theta_2  P_{T_x^{-1}}\left(-w^{\top}x-b<\xi  \|w\|_2 \right) \\=&\theta_2  F_{Y_-}(\xi),
	\end{align*}
	where $ F_{Y_-} $ is the cumulative distribution function of $ Y_- $.
	The proof is completed.
	%Obviously, the fooling rate $P_x(\hat{k}(x+r)\neq \hat{k}(x)) $ has a positive correlation with the setting of constraint condition $ \xi $. 
	%通用对抗扰动的L2范数与数据的扰动率具有正相关关系。
	%and the relationship between them can be drawn as Fig.~\ref{fig:figure2} 
	%\begin{figure}[h]
	%	\centering
	%	\includegraphics[width=0.48\linewidth]{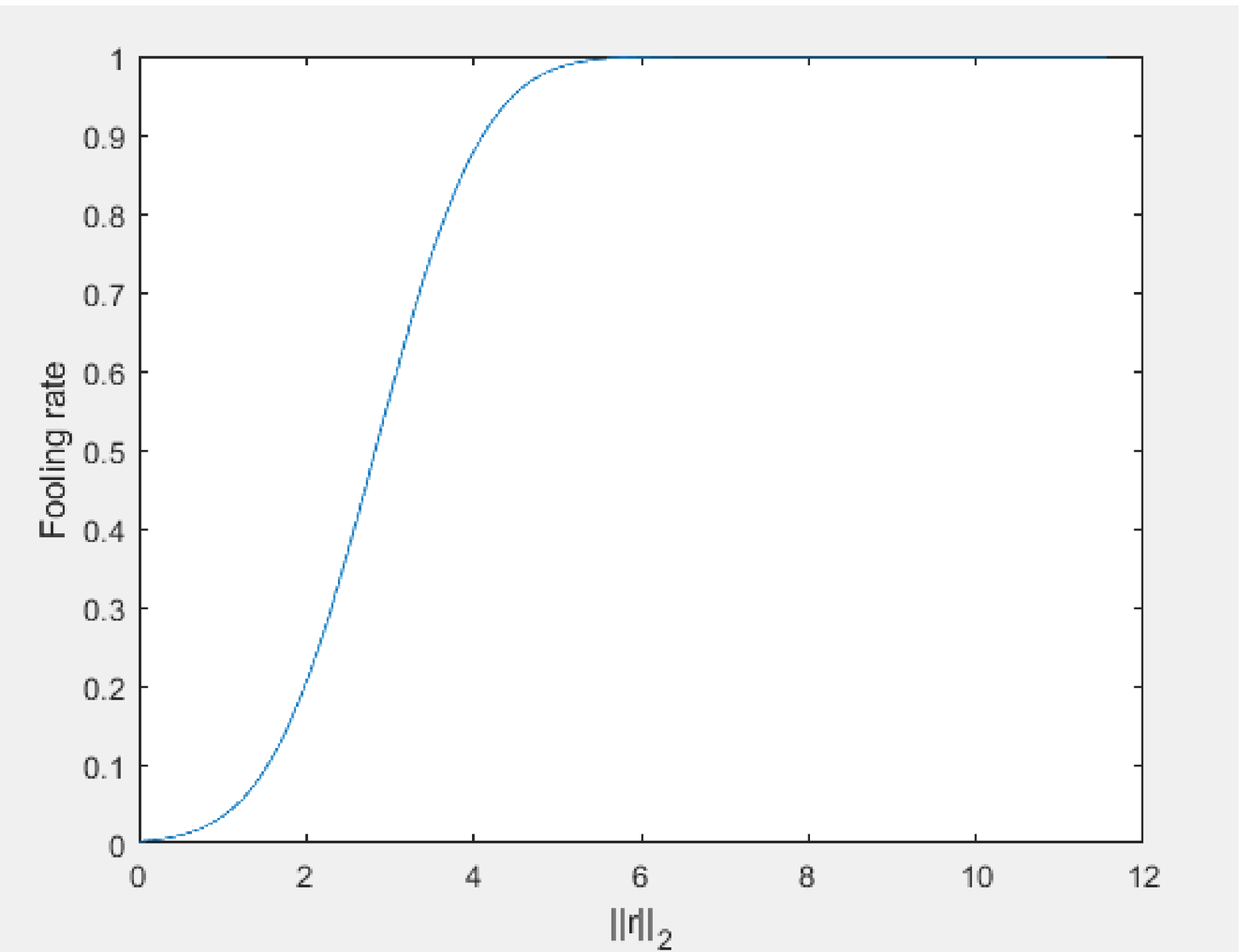}
	%	\caption{The relationship between the norm  of the  class-universal adversarial perturbations and the fooling rate.}
	%	\label{fig:figure2}
	%\end{figure}
\end{proof}

%\begin{corollary}
%	In the linear binary classifier model, the setting of constraint condition $ \xi $ will directly affect the fooling rate of  class-universal adversarial perturbations.
%	%约束条件\xi的设置，直接影响通用对抗扰动的愚弄率。
%\end{corollary}
%\begin{proof}
% Suppose $ X=\{x_1, \cdots, x_m\} $ be a set from the distribution $ \mu $  and satisfy the following relationship,
%\begin{align*}
%\dfrac{|w^{\top}x_1+b| }{\|w\|_2} \le \dfrac{|w^{\top}x_2+b| }{\|w\|_2} \le \ dots \le \dfrac{|w^{\top}x_m+b| }{\|w\|_2} 
%\end{align*}
%First, if $ \xi $ is so small that $\xi \le   \dfrac{|w^{\top}x_1+b| }{\|w\|_2} $, then the fooling rate $ P_x(\hat{k}(x+r)\neq \hat{k}(x))=E_x[1_{\hat{k}(x+r)\neq \hat{k}(x)}]=\frac{1}{m}\sum_{i=1} 1_{\hat{k}(x_i+r)\neq \hat{k}(x_i)}=0 $. Second, if $ \xi $ satisfies $ \dfrac{|w^{\top}x_i+b| }{\|w\|_2} < {\xi} \le  \dfrac{|w^{\top}x_{i+1}+b| }{\|w\|_2}$, $ i=1, \ dots,m-1 $, then the fooling rate $ P_x(\hat{k}(x+r)\neq \hat{k}(x))=\dfrac{i}{m} $. Finally, if $ \xi $ is large enough  that $ \dfrac{|w^{\top}x_m+b| }{\|w\|_2} < {\xi}  $，then the fooling rate $ P_x(\hat{k}(x+r)\neq \hat{k}(x))=1$. \end{proof}

Notice that in human observation, it is expected that adversarial examples have no difference from undisturbed inputs. 	
Through formula \cref{the2}, we find that if we attack the data under this premise that uAP is small enough, $ G_{\Omega, \hat{k}} $ may not reach the upper bound. Therefore, excessive pursuit of high fooling rate will lose the good property of  uAP, so it is important to set an appropriate $\xi $ to trade off between a large $ G_{\Omega, \hat{k}} $ and the disturbance of data.
Another remark is that the conclusions in this section are also valid  to general machine learning models with the same decision function as SVM.

\section{Adversarial Perturbations for Multiclass  Linear SVMs}\label{Adversarial Perturbations for Multiclass SVMs}

In this section, we investigate sAP, cuAP and uAP for multiclass SVM, and derive the explicit solutions of sAP, cuAP  and the approximate solution of uAP  respectively. 
\begin{assumption}\label{assumption4} 	
	Let  \cref{assumption1} hold and let the linear multi\mbox{-}classifier be trained by  \cref{equ2_3}.
\end{assumption}

\subsection{The case of sAP}
\begin{theorem}\label{theorem5} 	
	Let \cref{assumption4} hold. The  optimal sAP of  optimization problem  \cref{equ2} is given by
	\begin{equation}\label{equ5}
	r=\dfrac{(w_{\hat{k}(x)}-w_{l^*})^{\top} x}{\|w_{\hat{k}(x)}-w_{l^*}\|_2^2}\cdot (w_{l^*}-w_{\hat{k}(x)}),
	\end{equation}
	where   
	$ l^*=\mathop{\arg\max}\limits_{{l \in [c]},\, l \neq \hat{k}(x)} \;   \alpha_l $, and 
	$ \alpha_l= \arccos\left(\dfrac{(w_{\hat{k}(x)}-w_{l})^{\top} x}{\|w_{\hat{k}(x)}-w_{l}\|_2 \|x\|_2}\right)$, ${l \in [c]} $, $ l \neq \hat{k}(x) $.

\end{theorem}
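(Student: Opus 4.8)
The plan is to reduce the multiclass problem to the geometry already handled in the binary case of \cref{theorem1}, applied simultaneously to every rival class, and then to retain only the cheapest rival. Write $ k=\hat{k}(x) $ for the current predicted label, so that by \cref{equ2_3} we have $ w_k^{\top} x > w_l^{\top} x $ for every $ l \neq k $. The requirement \cref{equ2_2} that $ \hat{k}(x+r)\neq k $ holds precisely when some competing class $ l\neq k $ attains a score at least as large as that of $ k $ at the perturbed point, i.e. when $ (w_k-w_l)^{\top}(x+r)\le 0 $ for some $ l\neq k $. Hence the feasible set of \cref{equ2} is the union of closed half-spaces $ \bigcup_{l\neq k} H_l $, where $ H_l=\{\,r:(w_k-w_l)^{\top}(x+r)\le 0\,\} $, and the origin $ r=0 $ lies strictly outside each $ H_l $ because $ (w_k-w_l)^{\top} x>0 $.

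First I would minimize $ \|r\|_2 $ over a single half-space $ H_l $, which is exactly the binary subproblem solved in \cref{theorem1}: the nearest point of $ H_l $ to the origin is the orthogonal projection onto the bounding hyperplane $ \{\,r:(w_k-w_l)^{\top}(x+r)=0\,\} $, with optimal value $ d_l:=\dfrac{(w_k-w_l)^{\top} x}{\|w_k-w_l\|_2} $ and minimizer $ \dfrac{(w_k-w_l)^{\top} x}{\|w_k-w_l\|_2^2}\,(w_l-w_k) $. Since minimizing over a union equals taking the minimum of the individual minima, $ \min_{r\in\bigcup_l H_l}\|r\|_2=\min_{l\neq k} d_l $, and the global minimizer is the projection associated with the index $ l^*=\mathop{\arg\min}_{l\neq k} d_l $. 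Substituting $ l^* $ into the per-class minimizer then yields exactly \cref{equ5}.

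It remains to match the selection rule $ \mathop{\arg\min}_{l} d_l $ with the stated rule $ l^*=\mathop{\arg\max}_{l}\alpha_l $. Here I would use that $ \cos\alpha_l=\dfrac{(w_k-w_l)^{\top} x}{\|w_k-w_l\|_2\,\|x\|_2}=\dfrac{d_l}{\|x\|_2} $. Because $ \|x\|_2 $ does not depend on $ l $, minimizing $ d_l $ is the same as minimizing $ \cos\alpha_l $; and since $ \arccos $ is strictly decreasing on $ [-1,1] $, minimizing $ \cos\alpha_l $ is the same as maximizing $ \alpha_l $. This confirms $ \mathop{\arg\min}_{l} d_l=\mathop{\arg\max}_{l}\alpha_l $, so selecting the largest angle $ \alpha_l $ is equivalent to projecting onto the nearest bounding hyperplane.

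The main obstacle, and the step that deserves the most care, is the reduction of the feasible region to the union of half-spaces together with the claim that the global optimum is attained by projecting onto a single hyperplane. One must argue that, starting from the interior point $ r=0 $ of the polyhedral cell on which class $ k $ wins, the cheapest way to violate at least one defining inequality is to cross the closest one, so that the union-minimum genuinely equals $ \min_l d_l $ and is not lowered by some joint interaction of several constraints. As in \cref{remark1}, the returned $ r $ places $ x+r $ exactly on the decision boundary, i.e. a tie between classes $ k $ and $ l^* $; if strict misclassification is required one would add an arbitrarily small multiple of $ w_{l^*}-w_{\hat{k}(x)} $, which affects neither the optimal direction nor the limiting norm.
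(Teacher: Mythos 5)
Your proof is correct and takes essentially the same approach as the paper's: both reduce the misclassification constraint to crossing one of the separating hyperplanes $H_{\hat{k}(x),l}$, take the optimal $r$ to be the orthogonal projection of the origin onto the nearest bounding hyperplane (exactly the binary geometry of \cref{theorem1}), and convert $\mathop{\arg\min}_{l} d_l$ into $\mathop{\arg\max}_{l}\alpha_l$ via $d_l=\|x\|_2\cos\alpha_l$ and the monotonicity of $\arccos$. The only difference is one of rigor rather than route: your union-of-half-spaces decomposition with $\min_{r\in\bigcup_{l}H_l}\|r\|_2=\min_{l\neq \hat{k}(x)}\min_{r\in H_l}\|r\|_2$ makes precise the step the paper states informally as ``the smallest perturbation is to move $x$ toward the nearest hyperplane,'' and it already disposes of the ``joint interaction of several constraints'' worry you raise at the end, since feasibility requires violating only one inequality and the minimum over a union is exactly the minimum of the per-half-space minima.
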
 

\begin{proof}	
	For linear multi\mbox{-}classifier, \cref{equ2_2} can be written as
	$$  
	\left\{\max\limits_{{l \in [c]}, \, l \neq \hat{k}(x)} w^{\top}_{l}(x+r)\right\}\ge w^{\top}_{\hat{k}(x)}(x+r), \ x \in \Omega,
	$$
	%Further, \eqref{equ2_2} is  $ \max\limits_{{l \in [c]} } (w_{l}-w_{\hat{k}(x)})^{\top}(x+r)\ge 0, \ x \in \Omega$, 
	Therefore, we have
	$$
	\left\{ \min\limits_{{l \in [c]}, \, l \neq \hat{k}(x)} (w_{\hat{k}(x)}-w_{l})^{\top}(x+r)\right\}\le 0, \ x \in \Omega.
	$$
	The optimization problem  \cref{equ2} is equivalent to	
	\begin{align*} 	 	
	\min_{r \in \mathbb{R}^p} \  & \  \|r\|_2  \\ 	 	\hbox{s.t.} \  &  \ \left\{\min\limits_{{l \in [c]}, \, l \neq \hat{k}(x)} (w_{\hat{k}(x)}-w_{l})^{\top}(x+r)\right\}\le 0, \ x \in \Omega. 	 	\end{align*}
	In the multiclass model \cref{equ2_3}, the data $ x $ with label $ \hat{k}(x) $ are located at the intersection of  
	closed half-space $ \{ z\, |\, (w_{\hat{k}(x)}-w_{l})^{\top} z \ge 0 \} $, $ {l \in [c]} $, $ l \neq \hat{k}(x) $. The hyperplanes separating the data  $ x $  of class $ \hat{k}(x) $  from the data of  other classes are $ \{ z\, |\, (w_{\hat{k}(x)}-w_{l})^{\top} z = 0 \} $, denoted as $ H_{\hat{k}(x),l} $, $ {l \in [c]} $, $ l \neq \hat{k}(x) $.
	The distance between $ x $ and $ H_{\hat{k}(x),l} $ are $ d_{l}=\dfrac{(w_{\hat{k}(x)}-w_{l})^{\top} x}{\|w_{\hat{k}(x)}-w_{l}\|_2} $, $ {l \in [c]} $, $ l \neq \hat{k}(x) $. 
	Notice that we  change the class of a particular sample $ x $, the smallest perturbation is to move $ x $ toward the nearest hyperplane. It is obvious that the shortest distance between the sample $ x $ of class $\hat{k}(x)$ and the nearest separating hyperplane is
	% In the linear binary classifier model, 
	$$
	d_{l^*}=\dfrac{(w_{\hat{k}(x)}-w_{l^*})^{\top} x}{\|w_{\hat{k}(x)}-w_{l^*}\|_2},
	$$
	where
	$$
	l^*=\mathop{\arg\min}\limits_{{l \in [c]}, \, l \neq \hat{k}(x)} \; d_{l}=\mathop{\arg\min}\limits_{{l \in [c]}, \,  l \neq \hat{k}(x)} \; \|x\|_2 \cdot \cos \alpha_l=\mathop{\arg\min}\limits_{{l \in [c]}, \,  l \neq \hat{k}(x)} \;  \cos \alpha_l=\mathop{\arg\max}\limits_{{l \in [c]}, \,  l \neq \hat{k}(x)} \;   \alpha_l,
	$$
	with $ \alpha_l= \arccos\left(\dfrac{(w_{\hat{k}(x)}-w_{l})^{\top} x}{\|w_{\hat{k}(x)}-w_{l}\|_2 \|x\|_2}\right)$, i.e., the angle between $ x $ and $ w_{\hat{k}(x)}-w_{l} $, $ {l \in [c]} $, $ l \neq \hat{k}(x) $. In other words, since  $ w_{\hat{k}(x)}-w_{l} $ is the normal vector of the {hyperplane} $ H_{\hat{k}(x),l} $, we choose $ l^* $ by finding $ H_{\hat{k}(x),l^*} $ with the smallest angle with $ x $.
	The direction of moving $ x $ toward the $ H_{\hat{k}(x),l^*} $ is $  -\dfrac{w_{\hat{k}(x)}-w_{l^*}}{\|w_{\hat{k}(x)}-w_{l^*}\|_2} $. Overall,  the optimal sAP of linear multi\mbox{-}classifier can be given by \cref{equ5}. The proof is completed. 
\end{proof}

\begin{remark} 
	Similar to  \cref{remark1}, sAP in  practice takes the form of  ($ \varepsilon >0 $)
	%In the real world or experiment, because it is difficult to determine the class of the data that just fall on the hyperplane, usually we should not only move the sample to the hyperplane, but also make it slightly cross the hyperplane, that is, we can add a small enough $ \varepsilon $ to the perturbation vector to make  sAP  become
	$$
	r=\dfrac{(w_{\hat{k}(x)}-w_{l^*})^{\top} x+\varepsilon}{\|w_{\hat{k}(x)}-w_{l^*}\|_2^2}\cdot (w_{l^*}-w_{\hat{k}(x)}).
	$$ 
\end{remark}

\subsection{The case of cuAP}

\begin{theorem}\label{theorem6} 	 	
	Under  \cref{assumption4},  cuAP of  optimization problem  \cref{equ4}  is	
	\begin{equation}\label{equ7}
	r= \xi \cdot \dfrac{w_{{l_c^*}}-w_{\hat{k}(x)}}{  \|w_{{l_c^*}}-w_{\hat{k}(x)}\|_2},
	\end{equation}
	where   $ \hat{k}(x) $ is the class of data $ \Omega $  being attacked,
	$$
	l_c^*\in \mathop{\arg\max}  \{\gamma_l \, | \, l \in [c], \, l \neq \hat{k}(x)\},
	$$
	and $ \gamma_l $ is the ratio of $ x \in  \Omega$ that has the largest angle with $ w_{\hat{k}(x)}-w_{l} $.
\end{theorem}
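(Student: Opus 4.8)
Since $\Omega=T_x^{k_0}$ for the single attacked class $k_0$, every $x\in\Omega$ shares the same predicted label $\hat{k}(x)=k_0$, so the normal vectors $u_l\triangleq w_{\hat{k}(x)}-w_{l}$ and the separating hyperplanes $H_{\hat{k}(x),l}$, $l\neq k_0$, are common to all samples. The first step is to rewrite the objective exactly as in the proof of \cref{theorem5}: a point $x$ is fooled by $r$ iff $\min_{l\neq k_0}u_l^\top(x+r)\le 0$, i.e.\ iff $r$ pushes $x$ across at least one face of the polyhedral cone $\{z:u_l^\top z\ge0,\ l\neq k_0\}$ that defines class $k_0$. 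Writing $r=\xi d$ with $\|d\|_2=1$, this reduces $G_{\Omega,\hat{k}}$ to counting, over $x\in\Omega$, the fraction for which some $u_l^\top x+\xi\,u_l^\top d\le0$.

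Next I would restrict the search over directions to the finite candidate set $\{e_l:=(w_l-w_{k_0})/\|w_l-w_{k_0}\|_2 = -u_l/\|u_l\|_2,\ l\neq k_0\}$. The justification mirrors the passage from sAP to cuAP in \cref{theorem2}: by \cref{theorem5}, the minimum-norm perturbation that fools a \emph{single} $x$ points along $e_{l^*(x)}$, where $l^*(x)$ is the label attaining the largest angle $\alpha_l$ (equivalently the nearest face). Because cuAP must use one common $r$, the efficient common directions are among these per-sample optimal directions $e_l$; any other direction is dominated, as each sample's escape set is a union of half-spaces whose normals are exactly the $u_l$. Moreover, using the full budget $\|r\|_2=\xi$ is optimal, since enlarging $\|r\|$ along a fixed direction can only move more samples from the correct side to the wrong side of a face (monotonicity of the half-space constraint $u_l^\top x+\xi u_l^\top d\le0$ in $\xi$).

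The third step is to evaluate and maximize. For a fixed candidate $e_l$ and $\xi$ sufficiently large, moving along $e_l$ reliably fools precisely those samples whose nearest face is $H_{\hat{k}(x),l}$ — equivalently whose largest angle with $w_{\hat{k}(x)}-w_{l}$ is attained at $l$ — and this is the fraction $\gamma_l$ by definition. Hence over the candidate directions $G_{\Omega,\hat{k}}$ equals $\gamma_l$, and maximizing selects $l_c^*\in\arg\max_l\gamma_l$; combined with the budget constraint this yields \cref{equ7}.

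\textbf{Main obstacle.} The delicate point is the identification of the fooling ratio of direction $e_l$ with exactly $\gamma_l$, and I expect this to be the crux. Two facts must be controlled: that no direction outside $\{e_l\}$ attains a strictly larger ratio (handled by the half-space/normal-vector structure above), and that moving along $e_{l}$ fools neither fewer nor more than the $\gamma_l$-group. The latter is subtle for finite $\xi$: a nearest-face sample may fail to cross if $d_l(x)>\xi$, while a sample assigned to a different face could still be pushed across $H_{\hat{k}(x),l}$ (cross-fooling), so the true ratio differs from $\gamma_l$ by these boundary effects. I would therefore close the argument under the same ``$\xi$ sufficiently large'' regime used in \cref{theorem3} (so that the within-group crossings dominate and cross-fooling is negligible), making the result an exact plurality vote over the per-sample optimal directions of \cref{theorem5}; without that regime the formula is best read as an accurate approximation rather than an exact optimizer.
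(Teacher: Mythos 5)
Your proposal follows essentially the same route as the paper's proof: reduce to the per-sample optimal sAP directions from \cref{theorem5}, count the fraction $\gamma_l$ of samples in $\Omega$ whose nearest separating hyperplane is $H_{\hat{k}(x),l}$, take the plurality winner $l_c^*$, and use the full budget $\xi$ along that direction. The obstacles you flag (cross-fooling and nearest-face samples with distance exceeding $\xi$) are genuine but are equally unaddressed in the paper's own proof, which asserts the plurality-vote direction is optimal without any ``$\xi$ sufficiently large'' qualification, so your reading of the formula as exact only in that regime (and otherwise an approximation) is if anything more careful than the original.
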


\begin{proof}
	For any sample $ x \in \Omega $, according to  \cref{theorem5}, the optimal sAP is  given by
	$$	
	r_x=\dfrac{(w_{\hat{k}(x)}-w_{l^*})^{\top} x}{\|w_{\hat{k}(x)}-w_{l^*}\|_2^2}\cdot (w_{l^*}-w_{\hat{k}(x)}),
	$$
	and the  optimal direction of sAP is 
	$	
	\dfrac{w_{l^*}-w_{\hat{k}(x)}}{\|w_{\hat{k}(x)}-w_{l^*}\|_2},
	$
	where  $ l^*=\mathop{\arg\max}\limits_{{l \in [c]}, \,  l \neq \hat{k}(x)} \;   \alpha_l $, and $ \alpha_l$ are the angles between $ x $ and $ w_{\hat{k}(x)}-w_{l} $, $ {l \in [c]} $, $ l \neq \hat{k}(x) $.
	Since the class $\hat{k}(x) $ of all $ x \in \Omega $ is the same, we need to find the unique $ l_c^* $, i.e., the same direction $ \dfrac{w_{l_c^*}-w_{\hat{k}(x)}}{\|w_{l_c^*}-w_{\hat{k}(x)}\|_2}$ of $ r_x $, for all the data $ x $ in  $ \Omega $. 
	%At this time, the direction of the optimal perturbation is unique, which is $ \dfrac{w^{\top}_{l_c^*}-w^{\top}_{\hat{k}(x)}}{\|w_{\hat{k}(x)}-w_{l_c^*}\|_2} $.
	%The choice of $ l_c^* $ follows the following rules.
	Suppose that the ratio of data, which has the largest angle with $ w_{\hat{k}(x)}-w_{l} $, is $ \gamma_l $, ${l \in [c]} $, $ l \neq \hat{k}(x) $. That is,  
	\begin{align*}
	\gamma_l&=\dfrac{\text{The number of } x \text{ which} \text{ has the largest angle with } w_{\hat{k}(x)}-w_{l}, x  \in  \Omega }{\text{The number of data in }  \Omega }\\&=\dfrac{\text{The number of } x \text{ corresponding to  } l^*=l, \ x  \in  \Omega }{\text{The number of data in }  \Omega  }, {l \in [c]},  l \neq \hat{k}(x),
	\end{align*}
	and  $ \sum_{{l \in [c]}, \, l \neq \hat{k}(x)} \gamma_l=1 $. 
	To maximize  $ G_{\Omega, \hat{k}} $, we need to choose $ {l_c^*} $ to make as many $ x $ as possible to obtain the optimal solution in \cref{theorem5}.
	Therefore, we choose $ l_c^* $ as
	$
	l_c^*\in \mathop{\arg\max}  \{\gamma_l \, | \, l \in [c], \, l \neq \hat{k}(x)\} .
	$
	By doing so, we can find one
	direction of the optimal perturbation  which is $ \dfrac{w_{l_c^*}-w_{\hat{k}(x)}}{\|w_{\hat{k}(x)}-w_{l_c^*}\|_2} $.
	Under the constraint \cref{equ3_4}, the maximum length of the cuAP is limited by $ \xi $.  Thus, cuAP of the linear multi\mbox{-}classifier on $ \Omega $ is given by \cref{equ7},
	where $ \xi $ is a certain small value which limits the norm  of cuAP,  and  $\hat{k}(x)$ is the class of data being attacked.
\end{proof}

\subsection{The case of uAP}

\begin{theorem}\label{theorem7}  	
	Assume that  \cref{assumption4} holds. \begin{itemize}[itemindent=3em]
		\item 	Suppose $ \xi $ is sufficient large, uAP of  optimization problem  \cref{equ4} can be written as: 		 			\begin{equation}\label{the5}	 		 			r= \dfrac{\xi w_{{l^*_u}}}{  \|w_{{l^*_u}}\|_2}, \  l^*_u =\mathop{\arg\min} \{ \theta_{l} \, | \, l \in [c] \}.	 			\end{equation} 		 			\item  $ G_{\Omega, \hat{k}}$  can  be bounded by $1-\theta_{l^*_u}$. That is, $ G_{\Omega, \hat{k}} \le 1-\theta_{l^*_u}$.
	\end{itemize}	 		
\end{theorem}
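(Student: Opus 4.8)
The plan is to exploit the fact that, once $\xi$ is large, a universal perturbation acts on the Crammer--Singer decision rule through its \emph{direction} alone and collapses the predicted label of every sample onto a single class. Writing $r=\xi\hat r$ with $\|\hat r\|_2=1$, I would begin from
\[
\hat k(x+r)=\arg\max_{m\in[c]}\bigl(w_m^\top x+\xi\, w_m^\top\hat r\bigr),
\]
and observe that on the finite dataset $T_x$ the offsets $w_m^\top x$ are uniformly bounded, so once $\xi$ exceeds a threshold set by these offsets and by the gaps $w_m^\top\hat r-w_{m'}^\top\hat r$, the term linear in $\xi$ dominates every pairwise comparison. Hence, assuming $\arg\max_m w_m^\top\hat r$ is unique, $\hat k(x+r)=\arg\max_{m} w_m^\top\hat r=:l_r$ simultaneously for all $x\in T_x$; equivalently, the perturbation drives every sample into the decision cone of the single class $l_r=\hat k(r)$.

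From this collapse the fooling rate is immediate: in the large-$\xi$ regime a sample survives unfooled exactly when its original label already equals $l_r$, so
\[
G_{\Omega,\hat k}=P_{T_x}\bigl(\hat k(x)\neq l_r\bigr)=1-\theta_{l_r}.
\]
Maximizing over admissible directions therefore reduces to choosing the target class $l_r$ of smallest mass, which yields both parts of the theorem at once: the optimal target is $l^*_u=\arg\min_{l\in[c]}\theta_l$, and for any direction $G_{\Omega,\hat k}=1-\theta_{l_r}\le 1-\theta_{l^*_u}$, giving the upper bound of the second bullet. To turn the optimal target into the explicit formula of the first bullet I would take $\hat r=w_{l^*_u}/\|w_{l^*_u}\|_2$, i.e.\ $r=\xi\,w_{l^*_u}/\|w_{l^*_u}\|_2$, which collapses the data onto class $l^*_u$ precisely when $l_r=\hat k(w_{l^*_u})=l^*_u$.

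The step I expect to be the main obstacle is this last identity $\hat k(w_{l^*_u})=l^*_u$, i.e.\ the geometric condition $\|w_{l^*_u}\|_2^2=w_{l^*_u}^\top w_{l^*_u}>w_m^\top w_{l^*_u}$ for all $m\neq l^*_u$, which asserts that each weight vector lies in its own decision cone. This need not hold verbatim for an arbitrary trained Crammer--Singer classifier, which is exactly why the multiclass uAP is presented as an \emph{approximate} solution and why the bound is an inequality: if $w_{l^*_u}$ happens to fall in the region of another class $l'$, the direction $w_{l^*_u}$ collapses the data onto $l'$ and attains only $1-\theta_{l'}\le 1-\theta_{l^*_u}$, consistent with $G_{\Omega,\hat k}\le 1-\theta_{l^*_u}$. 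I would therefore either adopt $\hat k(w_l)=l$ as the natural structural property of a well-trained classifier---so that the direction of steepest increase of class-$l$ confidence points into class $l$'s own region---or, to keep the upper bound independent of this assumption, rely only on the direction-collapse identity $G_{\Omega,\hat k}=1-\theta_{l_r}$, which already forces $G_{\Omega,\hat k}\le 1-\theta_{l^*_u}$ for every admissible $r$ in the large-$\xi$ regime.
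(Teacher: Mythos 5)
Your proposal is correct and lands on the same conclusions, but it organizes the argument differently from the paper. The paper works in conic language: it defines $C_q=\mathrm{cone}\{w_l-w_q \,:\, l\in[c],\ l\neq q\}$ and the sets $R_q$ of perturbations capable of fooling class $q$, shows $\mathop{\cap}_{q\in[c]}R_q=\emptyset$ (so no perturbation fools all $c$ classes), then argues that a sufficiently large $r$ in the polar cone $R_{\tilde l}^c$ fools the remaining $c-1$ classes, picks $\tilde l=l^*_u$ of minimal proportion, and finally asserts that $w_{l^*_u}/\|w_{l^*_u}\|_2$ ``must belong to'' $R_{l^*_u}^c$. Your dominance argument---write $r=\xi\hat r$ and observe that for large $\xi$ every prediction collapses to $l_r=\mathop{\arg\max}_{m\in[c]} w_m^\top\hat r$, so the fooling rate equals exactly $1-\theta_{l_r}$---rests on the same geometric fact (the class the perturbation points into cannot be fooled, and for large $\xi$ every other class is fooled), but it delivers both bullets in a single step and is easier to make rigorous on a finite dataset, since the threshold on $\xi$ is explicit. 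Where your treatment genuinely improves on the paper: the paper's claim that $w_{l^*_u}$ must lie in $R_{l^*_u}^c$ is precisely the condition $w_{l^*_u}^\top w_{l^*_u}\ge w_m^\top w_{l^*_u}$ for all $m\neq l^*_u$, i.e.\ $\hat k(w_{l^*_u})=l^*_u$, which, as you correctly note, can fail for an arbitrary trained Crammer--Singer classifier (the paper hedges only with the word ``Approximately''). Your fallback---if the direction collapses the data onto some other class $l'$, the attained rate is $1-\theta_{l'}\le 1-\theta_{l^*_u}$, so the inequality of the second bullet holds for every admissible direction---is exactly the right way to decouple the upper bound from the attainability of the explicit formula, and it is more careful than the paper on this point. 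Two caveats, neither worse than the paper's own level of rigor: ties in $\mathop{\arg\max}_m w_m^\top\hat r$ (you assume uniqueness, which is generic), and the possibility that the decision region $\{r : w_l^\top r> w_m^\top r,\ \forall m\neq l\}$ of the least-frequent class is empty, in which case that class can never be the collapse target and $1-\theta_{l^*_u}$ is a bound that is not attained---consistent with the theorem stating an inequality and only an approximate optimizer.
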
  

\begin{proof} 
	We calculate $ G_{\Omega, \hat{k}} $ on all data according to the division of different classes $ \hat{k}(x) \in [c]$ of data $ x $. We have
	\begin{equation}\label{equ3_15}
	\begin{aligned}		
	P_\Omega(\hat{k}(x+r)\neq \hat{k}(x))
	=&\sum_{\hat{k}(x) \in [c]}  \theta_{\hat{k}(x)}P_{T_x^{\hat{k}(x)}}(\hat{k}(x+r)\neq \hat{k}(x)) \\
	=&\sum_{\hat{k}(x) \in [c]}  \theta_{\hat{k}(x)}P_{T_x^{\hat{k}(x)}}\left(\left\{\max\limits_{l \in [c], \, l \neq \hat{k}(x)} w^{\top}_{l}(x+r)\right\}\ge w^{\top}_{\hat{k}(x)}(x+r)\right). 	
	\end{aligned} 
	\end{equation}
	We abbreviate $ P_{T_x^{\hat{k}(x)}} \left(\left\{ \max\limits_{l \in [c], \, l \neq \hat{k}(x)} w^{\top}_{l}(x+r)\right\} \ge w^{\top}_{\hat{k}(x)}(x+r)\right) $  as $\widetilde{P}_{T_x^{\hat{k}(x)}}$.

	Firstly, we prove that $ r $ will not satisfy  $ \widetilde{P}_{T_x^{\hat{k}(x)}}>0 $,  for all $ \hat{k}(x) \in [c] $ at the same time.  Notice that for a fixed class $ \hat{k}(x) \in [c] $, denoted as $ q $, if $ \widetilde{P}_{T_x^{q}}>0 $, then  there exists  data $ x_0 \in  T_x^{q}$ such that $\left\{\max\limits_{{l \in [c]}, \, l \neq q} w^{\top}_{l}(x_0+r)\right\}- w^{\top}_{q}(x_0+r)\ge 0$ holds. Because $ x_0 \in  T_x^{q}$, we have $ w^{\top}_{q} x_0 - w^{\top}_{l} x_0>0$,  $ l \in [c], \, l \neq q$.  Thus,  $ \left\{\max\limits_{l \in [c],\ l \neq q}(w_l^{\top}r-w^{\top}_{q}r )\right\}> 0 $ must be satisfied, and $ \widetilde{P}_{T_x^{q}} \le  P_{T_x^{q}} \left(\left\{\max\limits_{l \in [c],\ l \neq q}(w_l^{\top}r-w^{\top}_{q}r )\right\}> 0\right)$. That is, there exists $ l \in [c],\ l \neq q $, such that $ (w_l-w_{q})^{\top} r> 0 $. 
	Let the cone generated by vectors $ w_l-w_{q}$, $ l \in [c] $, $ l \neq q $ denoted by
	$$ C_{q}= cone\{w_l-w_{q}, \, l \in [c], \,  l \neq q \},$$ and  the cone  is shown in the green area in  \cref{Figure/figure3_1} (for $ c=3 $ and $ q=1 $). 
	The possible $ r $  is given as follows
	\begin{equation*}\label{equu0}
	R_q 	\coloneqq \{ r \, |\,   \langle r, v \rangle>0, \  \exists \ v \in   C_{q} \}.
	\end{equation*}
	It is shown in the blue area in  \cref{Figure/figure3_2}. Obviously, when $ \widetilde{P}_{T_x^q}>0 $ for all $ q \in [c] $, we have $ \mathop{\cap}\limits_{q \in [c]} R_q= \emptyset $.
	Thus, uAP cannot fool all classes of data.
	%\begin{figure}[h] 		 		
	%	\centering 		\includegraphics[width=0.7\linewidth]{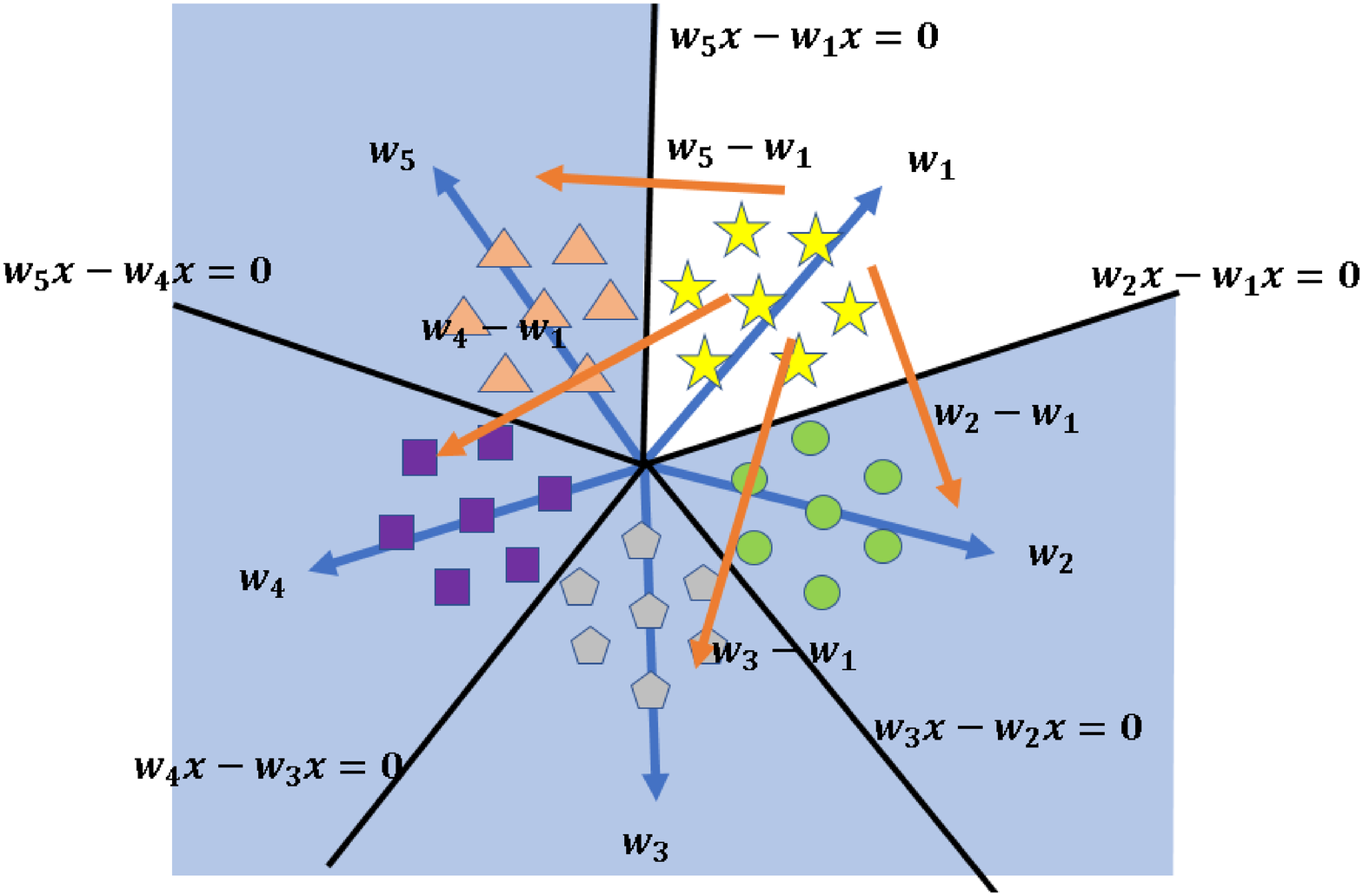} 		 		
	%	\caption{When $ c=5 $, $\hat{k}(x)=1 $, the  vectors $w_{\hat{k}(x)}$, $ w_l$ and the range of $ r $ in the blue area.} 		 		
	%	\label{fig:figure3} 	 	
	%\end{figure} 
	
%	\begin{figure}[H]
%		\centering
%		\subfloat[$C_q$ (the green area).]{
%			\label{Figure/figure3_1} 
%			\includegraphics[scale=0.35]{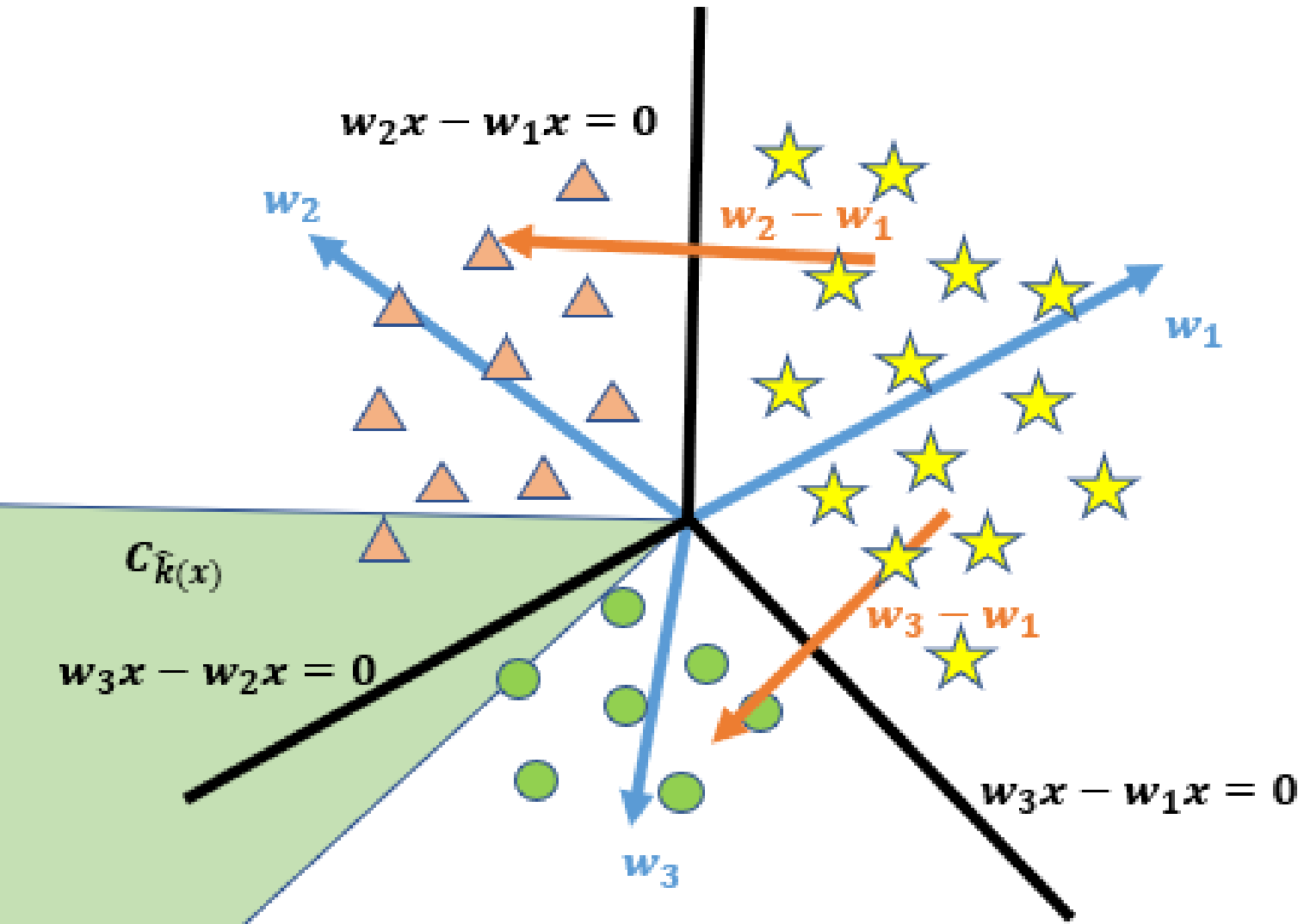}}
%		\hspace{0.1in} % 两图片之间的距离
%		\subfloat[$ R_q $ (the blue area).]{
%			\label{Figure/figure3_2} 
%			\includegraphics[scale=0.35]{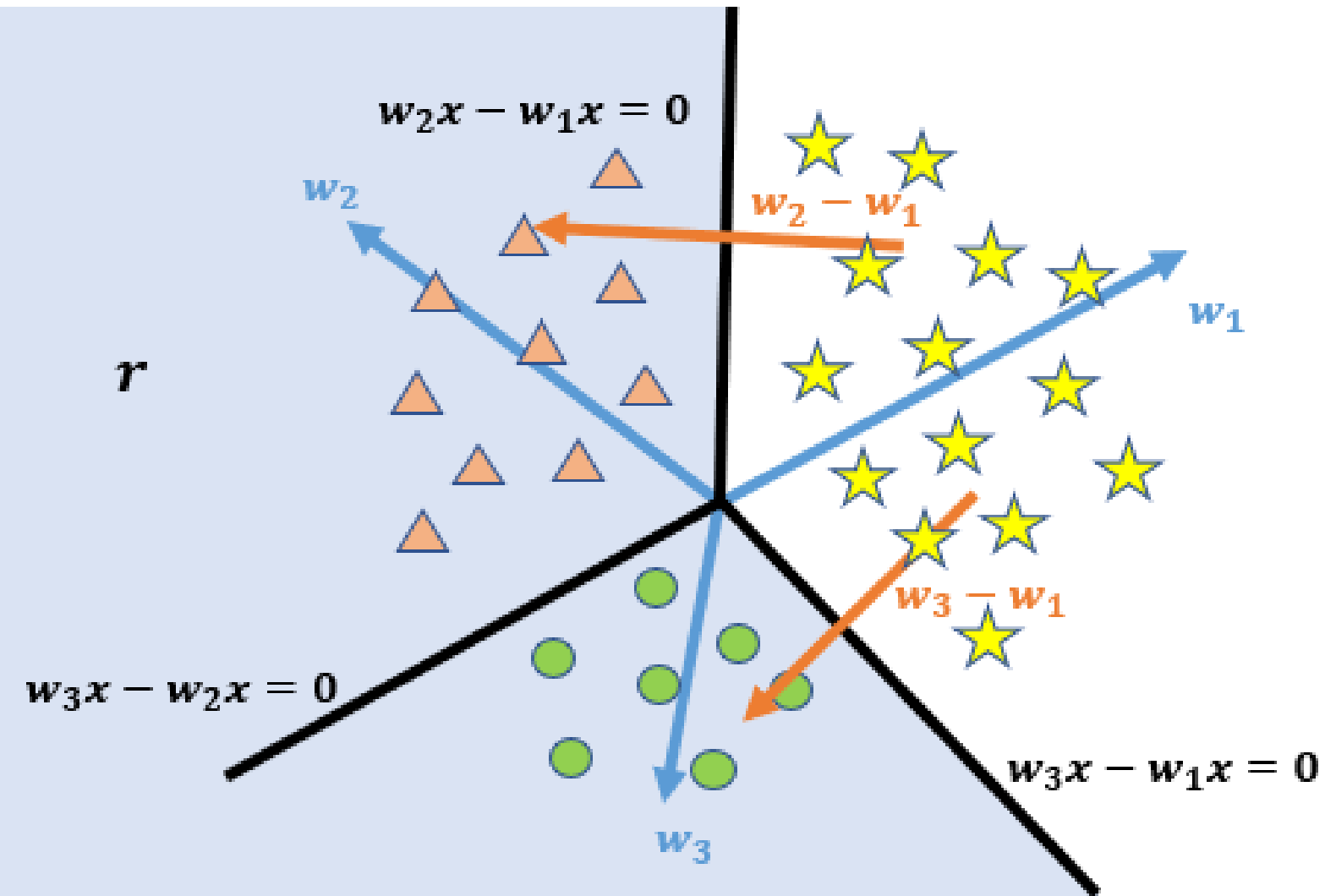}}
%		\caption{When $ c=3 $, $q=1 $, the  vectors $w_{q}$, $ w_l$, $C_q$ and $ R_q $.}
%		\label{fig:figure3} 
%	\end{figure}

%\begin{figure}[tbhp]   \centering   \subfloat[$\epsilon_{\max}=5$]{\label{fig:a}\includegraphics{lexample_fig1}}   \subfloat[$\epsilon_{\max}=0.5$]{\label{fig:b}\includegraphics{lexample_fig2}}   \caption{Example figure using external image files.}   \label{fig:testfig} \end{figure}

\begin{tcbverbatimwrite}{tmp_\jobname_fig1.tex} 
\begin{figure}[H]
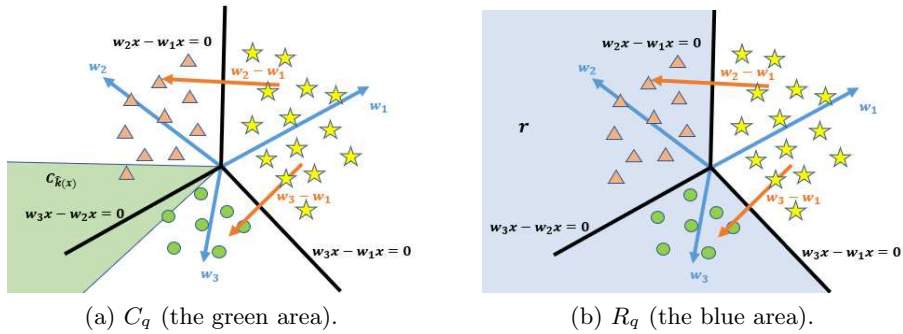
 		\centering 		\subfloat[$C_q$ (the green area).]{ 			\label{Figure/figure3_1}  			\includegraphics[scale=0.35]{figure/figure3_1.eps}} 		\hspace{0.1in} % 两图片之间的距离 		
	\subfloat[$ R_q $ (the blue area).]{ 			\label{Figure/figure3_2}  			\includegraphics[scale=0.35]{figure/figure3_2.eps}} 		\caption{When $ c=3 $, $q=1 $, the  vectors $w_{q}$, $ w_l$, $C_q$ and $ R_q $.} 		\label{fig:figure3}  	\end{figure}
\end{tcbverbatimwrite}
	
\input{tmp_\jobname_fig1.tex}

	Then we will prove that $ r $ can fool $ c-1 $ classes of data at the same time.
	For a particular class $ \hat{k}(x)=q$, we denote that polar cone of  $C_q$ as $R_q^c= \{ r \, |\,   \langle r, v \rangle \leq 0, \  \forall \ v  \in  C_q \}$.
	For $ R_q^c $, since $w^{\top}_q r \ge w^{\top}_l r$, for all $l \in [c]$, $ l \neq q $, we know that $ R_q^c  $
	is the range where the data of class $q$ is located.
	$ R_q $ is the range where other $ c-1 $ classes data except  class $ q$ are located.
	Suppose $ r \in   R_{\tilde{l}}^c$ for a fixed class $ \tilde{l} $,  so there is  $  \widetilde{P}_{T_x^{\tilde{l}}} \leq P_{T_x^{\tilde{l}}}\left(\left\{\max\limits_{l \in [c],\ l \neq {\tilde{l}}}(w_l^{\top}r-w^{\top}_{{\tilde{l}}}r )\right\}> 0\right)=0 $, i.e.,   the data of class $ {\tilde{l}} $ cannot be fooled.  
	In this case, for all $ q \in [c]$, $ q \neq {\tilde{l}}$,  we have $(w_{\tilde{l}} -w_{q})^{\top} r> 0$,
	and since $ r $ is sufficiently large,  then $ \widetilde{P}_{T_x^{q}}>0 $,  $ r \in R_{q} $. Further, we have $ r \in R_{\tilde{l}}^c  \subset  \mathop{\cap}\limits_{q \in [c],\, q \neq {\tilde{l}}} R_{q}$. That is, except for the data of class $ \tilde{l} $, other  data can be fooled. Thus, we know that uAP of  optimization problem  \cref{equ4} can fool $ c-1 $ class of data at most.

	%方向 大小
	% Therefore, the upper bound of  $ G_{\Omega, \hat{k}}$  can  reach $1-\theta_{l^*_u}$. 
	Since $ r $ is sufficiently large, we get that  all $ \widetilde{P}_{T_x^q} $, $q \in [c]$  can reach the upper bound.  Then to maximize \cref{equ3_15}, we select the label  $$ l^*_u =\mathop{\arg\min} \{ \theta_{l} \, | \, l \in [c] \}, $$  such that $ \widetilde{P}_{T_x^{l^*_u}} =0 $ and $ \widetilde{P}_{T_x^{q}} >0 $, $ q \in [c]$, $ q \neq {l^*_u}$, i.e., $ r \in R_{{l^*_u}}^c$, where $ R_{{l^*_u}}^c $  is the  distribution range of  data with class $ l^*_u$. The upper bound of $ G_{\Omega, \hat{k}} $ on the linear multi\mbox{-}classifier  can  reach $1-\theta_{l^*_u}$. Approximately, we choose the vector $ \dfrac{ w_{{l^*_u}}}{  \|w_{{l^*_u}}\|_2} $ that must belong to $ R_{{l^*_u}}^c $ as the direction of  uAP.
	Under the constraint \cref{equ3_4}, uAP of the linear multi\mbox{-}classifier on $ \Omega $ is given by \cref{the5}.	
\end{proof}  

%separating hyperplane
%{theorem7}{theorem3} 
%
%Through the results of Theorem 3 and theorem 2, we realize that for SVM model, the data after adding UAP always falls in the region of a class (determined by the classification hyperplane), so UAP can never deceive all data.
%
%Through the results of Theorem 3 and Theorem 2, we realize that for the SVM model, the data after adding uAP always falls into the area of one class (determined by the classification hyperplane), so using uAP can't fool all the data.

Through \cref{theorem3} and \cref{theorem7}, we realize that for SVM models, the data after adding uAP always falls into the region of a specific class (determined by the separating hyperplane), so uAP can not deceive all classes of data.

\section{Numerical Experiments}\label{Numerical Experiments.}
%In this section, we present our experimental results on the 10-class MNIST handwritten digit dataset \cite{ref17}. This dataset contains 55000 gray-scale images of dimension $ 28 \times 28 $ for training and 10000 for testing. Images are vectorized to 784-dimensional vectors. All classifiers are trained using the LIBSVM \cite{ref18} and LIBLINEAR \cite{ref19} implementation.
In this section, we conduct extensive numerical test to verify the efficiency of our method.  First, we introduce the datasets used in the experiment. The rest of the content is divided into two parts, and experiments are carried out on the adversarial perturbations in binary  and multiclass linear SVMs.
All experiments are tested in Matlab R2019b in Windows 10 on a HP probook440 G2 with an Intel(R) Core(TM) i5-5200U CPU at 2.20 GHz and of 8 GB RAM.
All classifiers are trained using the LIBSVM \cite{ref18} and LIBLINEAR \cite{ref19} implementation, which can be downloaded from https://www.csie.ntu.edu.tw/$ \sim $cjlin/libsvm and {https://www.csie.ntu.edu.tw/$ \sim $cjlin/liblinear}. Other recent progress in SVM can be found in \cite{ref63,ref62,ref61}.

We  test  adversarial perturbations against SVMs on MNIST \cite{ref17} and CIFAR-10 \cite{ref21} image classification datasets. MNIST and CIFAR-10 are currently the most commonly used datasets, their settings are as follows:   
\begin{itemize}
	\item MNIST: The complete MNIST dataset has a total of 60,000 training samples and 10,000 test samples, each of which is a vector of 784 pixel values and can be restored to a $ 28*28 $ pixel gray-scale handwritten digital picture. The value of the recovered handwritten digital picture ranges from 0 to 9, which exactly corresponds to the 10 labels of the dataset. 
	\item CIFAR-10: CIFAR-10 is a color image dataset closer to universal objects. The complete CIFAR-10 dataset has a total of 50,000 training samples an 10,000 test samples, each of which is a vector of 3072 pixel values and can be restored to a $ 32* 32 *3 $ pixel RGB color picture. There are 10 categories of pictures, each with 6000 images. The picture categories are airplane, automobile, bird, cat, deer, dog, frog, horse, ship and truck, their labels correspond to $ \{ 0, 1, 2, 3, 4, 5, 6, 7, 8, 9 \} $ respectively.
	%	\item  CIFAR-10: CIFAR-10 is a color image dataset closer to universal objects. The complete CIFAR-10 dataset has a total of 50,000 training samples and 10,000 test samples, each of which is a vector of 3072 pixel values ​​and can be restored to a $32 * 32 * 3 $ pixel RGB color picture. There are 10 categories of pictures, each with 6000 images. The picture categories are airplane, automobile, bird, cat, deer, dog, frog, horse, ship and truck, their labels correspond to $ \{ 0, 1, 2, 3, 4, 5, 6, 7, 8, 9 \} $ respectively. 
	%In the experiment in this section, due to the limitation of the computational complexity of building model, we only randomly select two-thirds of the total samples as the training samples, and the remaining samples as the test samples. 
	%In the experiment in this section, due to the limitation of the computational complexity of building model, we only randomly select 20,000 training samples and 3,000 test samples from the original dataset.
\end{itemize}

\subsection{Numerical experiments of binary linear SVMs}\label{Linear Binary Classification Numerical Experiments}

We present our experiments on the MNIST dataset and CIFAR-10 dataset on  binary linear classification model. For MNIST, we extract the data with class 0 and 1 to form a new binary dataset, with a total of 12,665 training data and 2,115 test data. For CIFAR-10, due to the limitation of the computational complexity of training model, we only select a part of data with class dog and truck  to form a new binary dataset,  with a total of 3,891 training data and 803 test data. 
First, we use LIBSVM to build a binary linear SVM on the training set, and obtain the parameters $ w $ and $ b $ of the classifier. 
Then we use formulas \cref{eq2}, \cref{equation5} and \cref{the1} to generate sAP, cuAP and uAP  respectively. 
Finally, we calculate $ G_{\Omega, \hat{k}} $ of the uAP.

\subsubsection{Numerical experiments of sAP}

In \cref{label_newfigure1}, we give an example to compare the original image, the image that has been misclassified after being attacked, and the  image of sAP on the MNIST dataset and CIFAR-10 dataset. By selecting the average of 10 repeated experiments on the same MNIST dataset, we get that the CPU time to train the binary classifier model is $ 1.37s $, and  the average CPU time to generate sAP is only $5.33 \times 10^{-4}s $. Similarly, on the CIFAR-10 dataset, we get the above CPU time as $ 53.49s $ and $ 3.04 \times 10^{-2}s $ respectively.
%在图~\ref{figure5_2_1}中，我们给出了一个例子，当MNIST数据集上的$| | r | | | u 2=2$时，比较原始图像、被攻击后被错误分类的图像和普遍对抗扰动的图像。通过选择相同MNIST测试数据集上10个重复实验的平均值，我们得到训练二元分类器模型的时间为$1.3658s$，而产生这种普遍对抗性扰动的时间仅为$1.06乘以10^{-5}s$。

\begin{tcbverbatimwrite}{tmp_\jobname_fig2.tex} 
\begin{figure}[bhtp]
	\centering
	\subfloat[On the MNIST dataset, the original image class is 0 and the perturbed  image class is 1.]{
		\includegraphics[width=0.45\textwidth,height=0.135\textwidth]{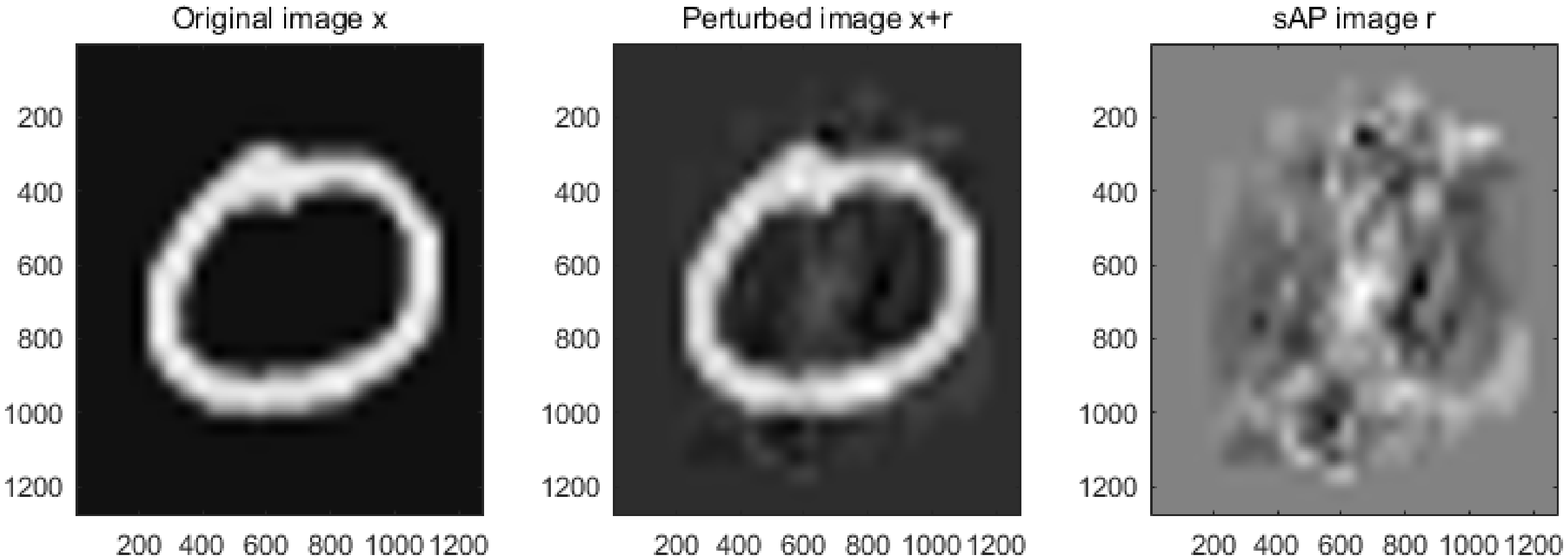}
		\label{label_newfigure1_1}
		%\caption{fig1}
	}
	\quad
	\subfloat[On the MNIST dataset, the original image class is 1 and the perturbed  image class is 0.]{
		\includegraphics[width=0.45\textwidth,height=0.135\textwidth]{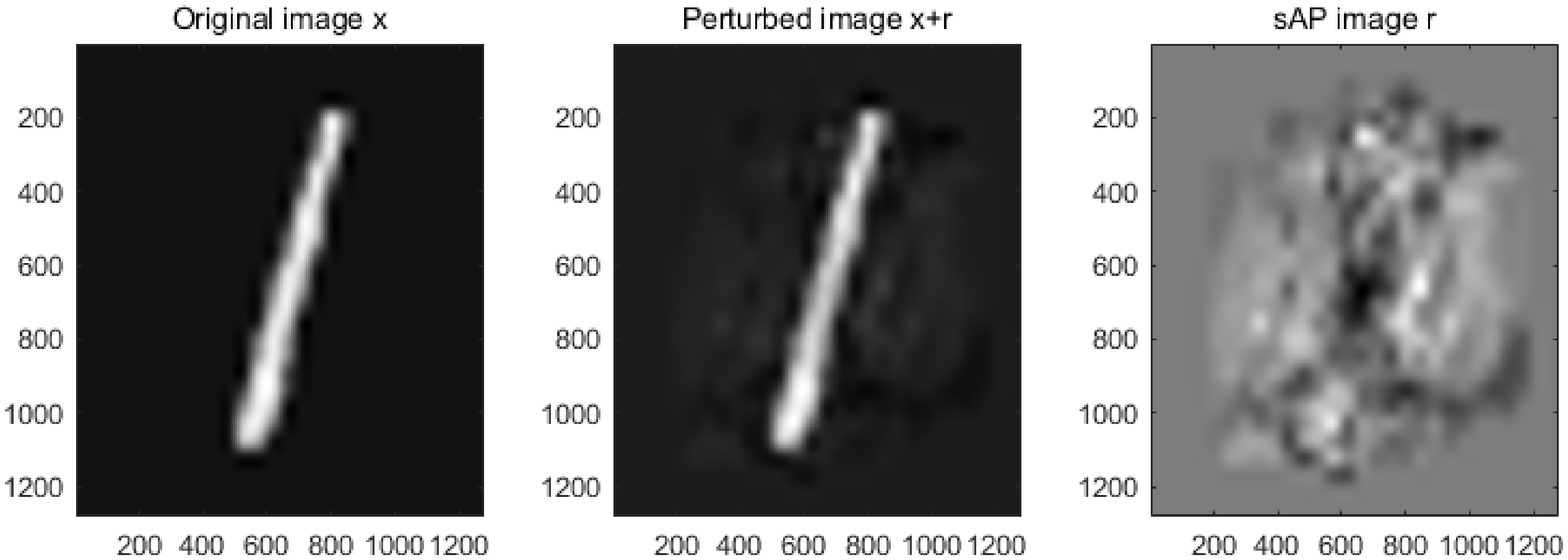}\label{label_newfigure1_2}
	}
	\quad
	\subfloat[On the CIFAR-10 dataset, the original image class is dog and the perturbed  image class is truck.]{
		\includegraphics[width=0.45\textwidth,height=0.135\textwidth]{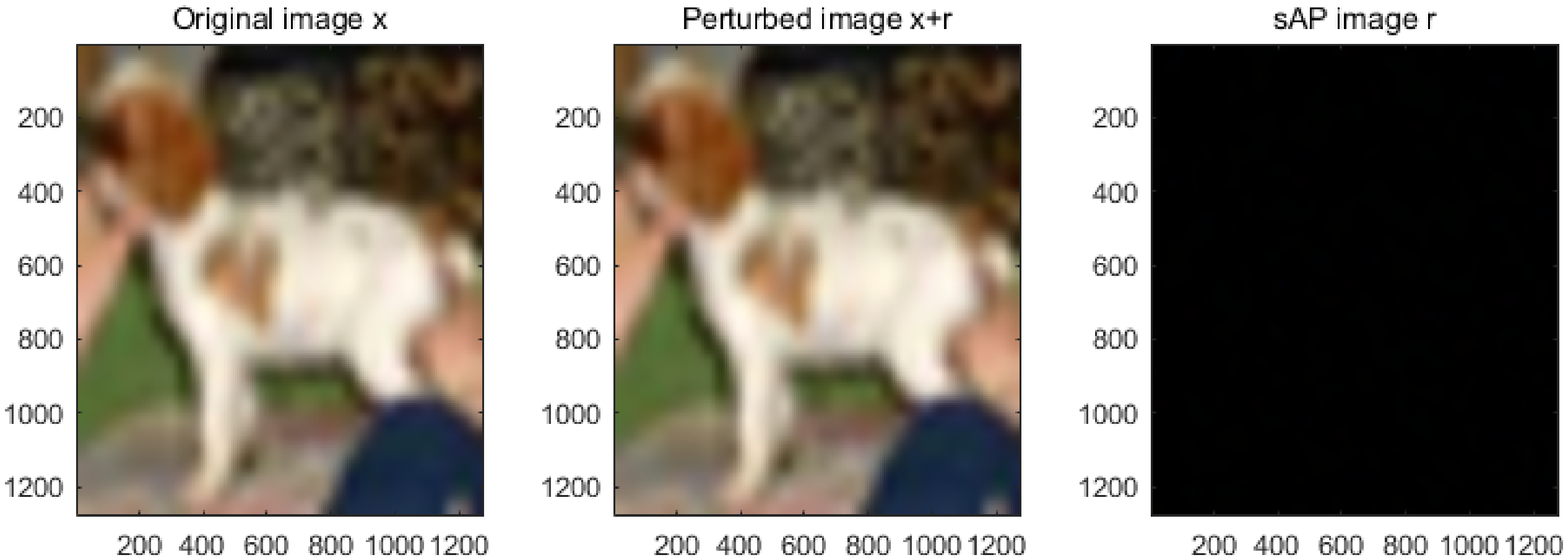}\label{label_newfigure1_3}
	}
	\quad
	\subfloat[On the CIFAR-10 dataset, the original image class is truck and the perturbed image class is dog.]{
		\includegraphics[width=0.45\textwidth,height=0.135\textwidth]{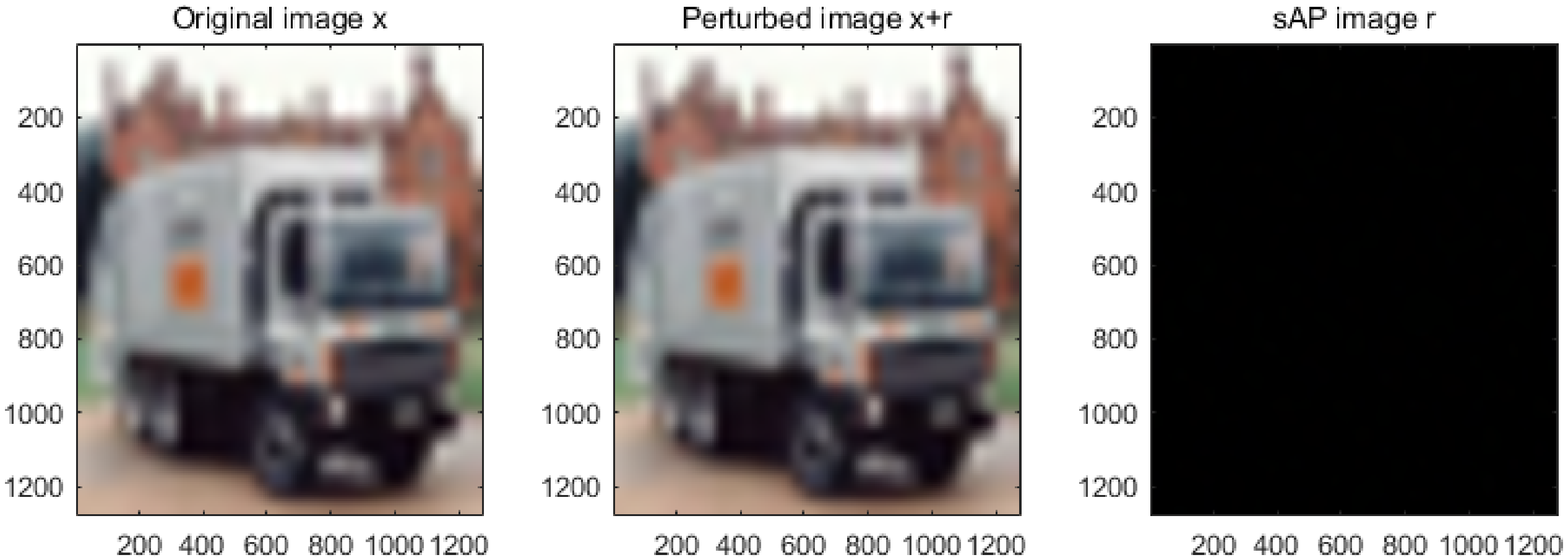}\label{label_newfigure1_4}
	}
	\caption{The original image, the image that has been misclassified after being attacked, and the  image of sAP on the MNIST and CIFAR-10 dataset.}
	\label{label_newfigure1}
\end{figure}
\end{tcbverbatimwrite}
\input{tmp_\jobname_fig2.tex}	
In \cref{label_newfigure1_1} and \cref{label_newfigure1_2}, we show the effect of adding sAP to handwritten digital images with original classes of 0 and 1, respectively. We obtain that the average norm of the data in the MNIST  dataset is 8.79, the average norm of sAP is 1.74, and the signal to noise ratio (SNR)  is 14.39. Although in human eyes, the classes of perturbed images do not change, but under the decision of the classifier, the classes of new images become 1 and 0 respectively. 
For linear binary SVM, the directions of sAP  of different classes of images are opposite, which can be reflected by the gray scale of sAP images in \cref{label_newfigure1_1} and \cref{label_newfigure1_2}.
%By comparing the gray scale of sAP images in Fig.~\ref{label_newfigure1}(a) and Fig.~\ref{label_newfigure1}(b), we can observe that for linear binary SVM, the directions of sAP  of different classes of images are opposite.
In \cref{label_newfigure1_3} and \cref{label_newfigure1_4}, we show the effect of adding sAP to the images in CIFAR-10 dataset  with original classes of dog and truck, respectively. The classes of perturbed images  become truck and dog.
We obtain that the average norm of the data in the CIFAR-10  dataset is 30.45, the average norm of sAP is 0.13, and SNR is 50.03.
Comparing \cref{label_newfigure1_1} and \cref{label_newfigure1_2} on the MNIST dataset and \cref{label_newfigure1_3} and \cref{label_newfigure1_4} on the CIFAR-10 dataset, we find that on the binary classification model, the sAP generated on the dataset (CIFAR-10) with a larger number of features are less likely to be observed, and the human eye can hardly perceive the sAP image.

\subsubsection{Numerical experiments of cuAP}

In \cref{label_newfigure2}, we illustrate the effect of adding cuAP to the dataset when $ \Omega $ is selected as the data with class 0 of MNIST dataset. 
The class of all data in \cref{label_newfigure2_1} is 0. 
If they are attacked by the same cuAP shown in \cref{label_newfigure2_2}, the perturbed new images will be displayed in the corresponding position in  \cref{label_newfigure2_3}, and the classifier will misclassify them as the number 1. In this case, the  norm of cuAP is 2, and SNR is 12.57.
The CPU time for calculating the cuAP in \cref{label_newfigure2_2} is only $1.06 \times 10^{-4}s $. Because the process of generating cuAP does not need iteration.  
\begin{tcbverbatimwrite}{tmp_\jobname_fig4.tex} 
\begin{figure}[bhtp]
	\centering
	\subfloat[The original images with class  0.]
	{
		\begin{minipage}[b]{.23\linewidth}
			\centering
			\includegraphics[scale=0.12]{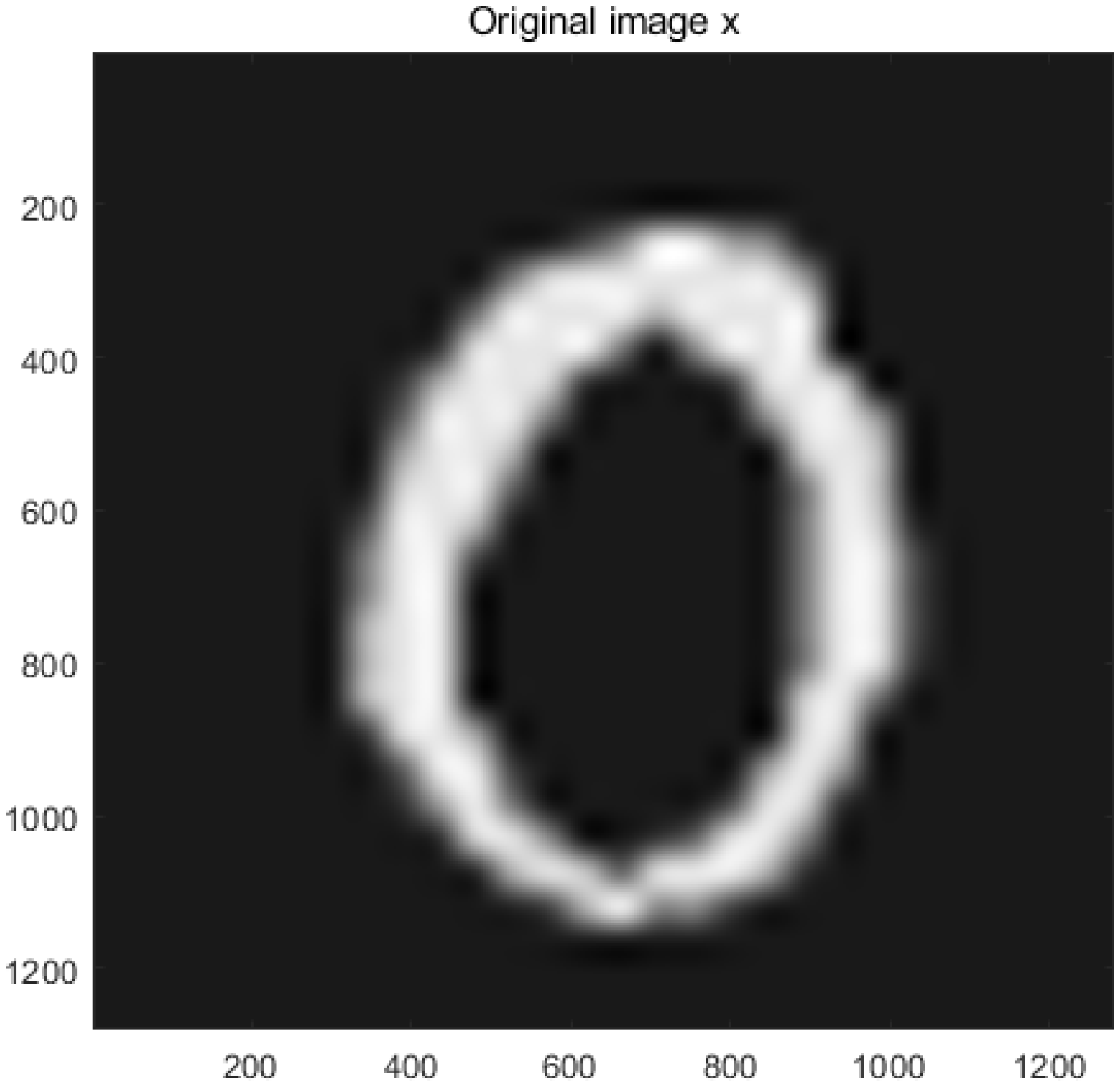} \\
			\includegraphics[scale=0.12]{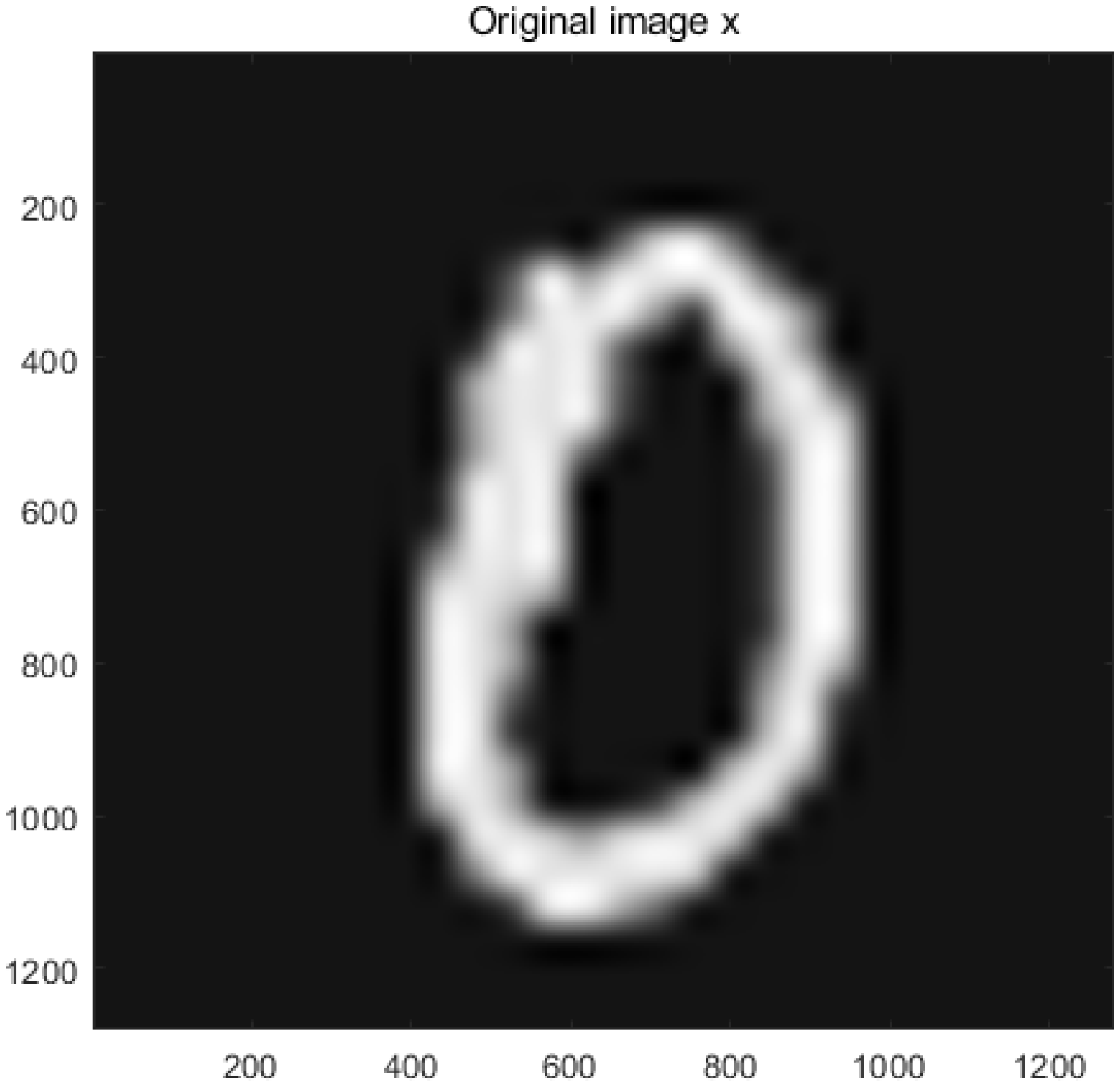} \\
			\includegraphics[scale=0.12]{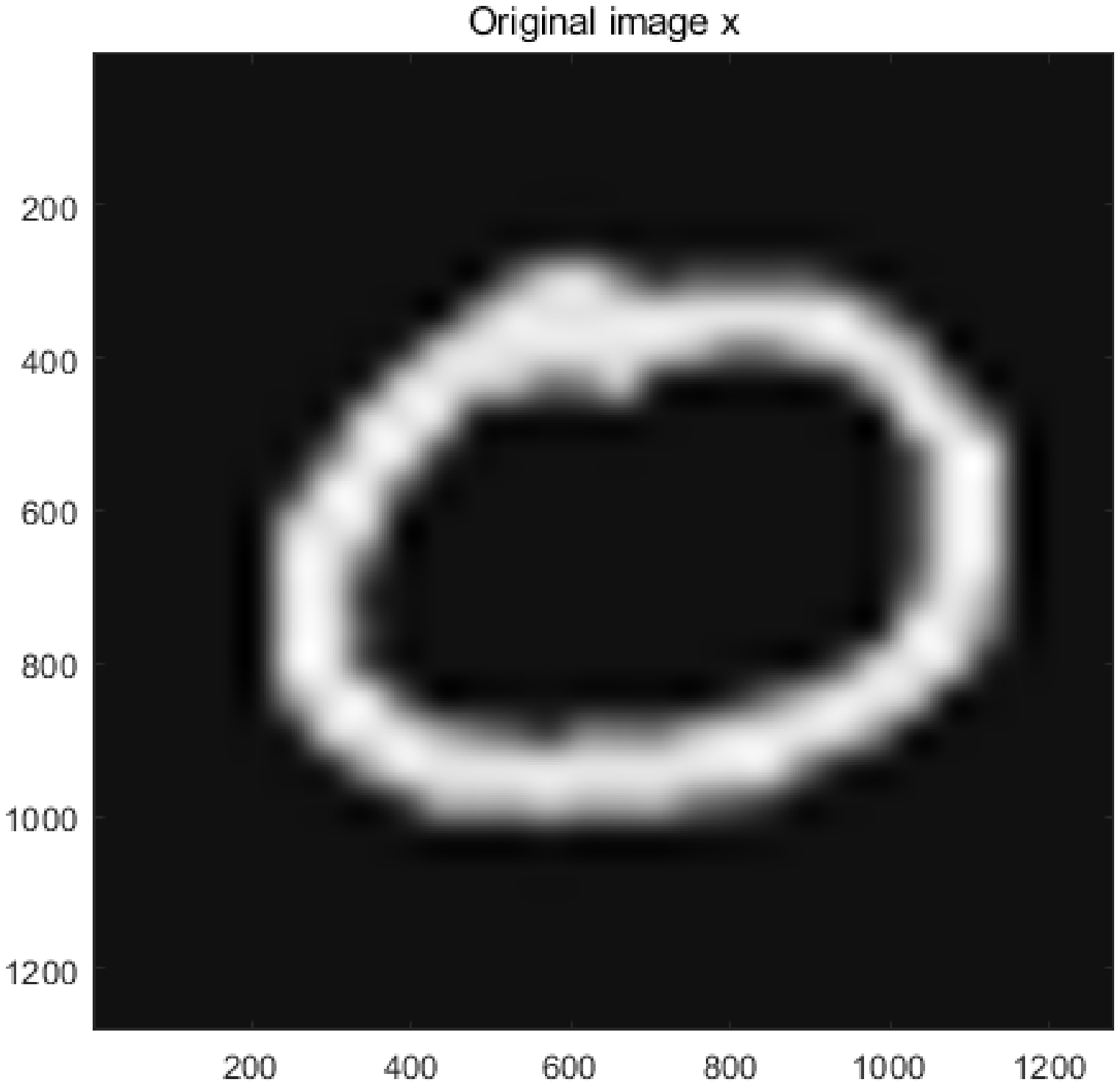}
		\end{minipage}
	\label{label_newfigure2_1}
	}
	\subfloat[The image of the unique cuAP on  the subset of MNIST dataset with class 0.]
	{
		\begin{minipage}[b]{.3\linewidth}
			\centering
			\includegraphics[scale=0.15]{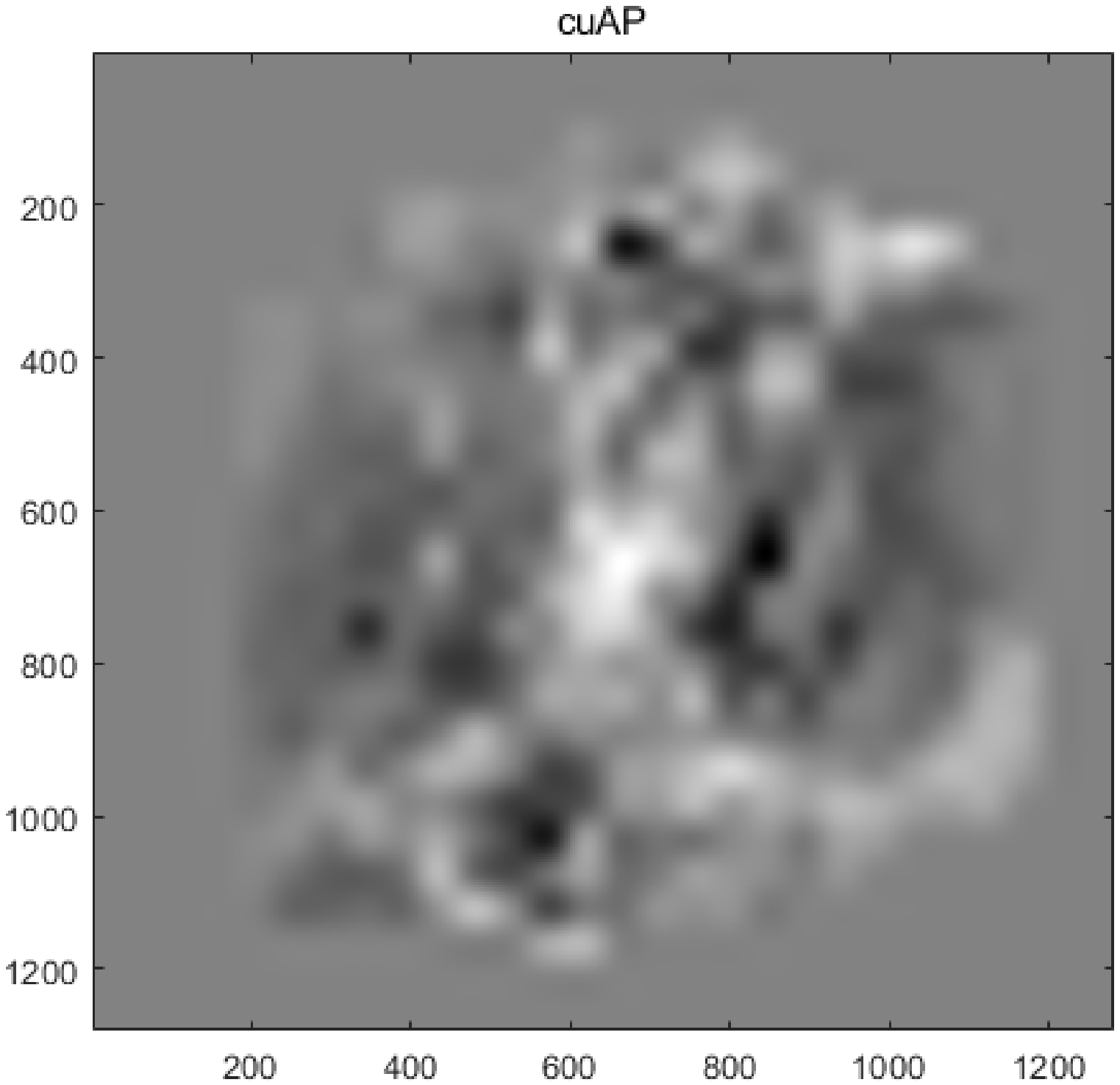}\vspace{20mm}
		\end{minipage}
	\label{label_newfigure2_2}
	}
	\subfloat[The  perturbed  images with class 1.]
	{
		\begin{minipage}[b]{.23\linewidth}
			\centering
			\includegraphics[scale=0.12]{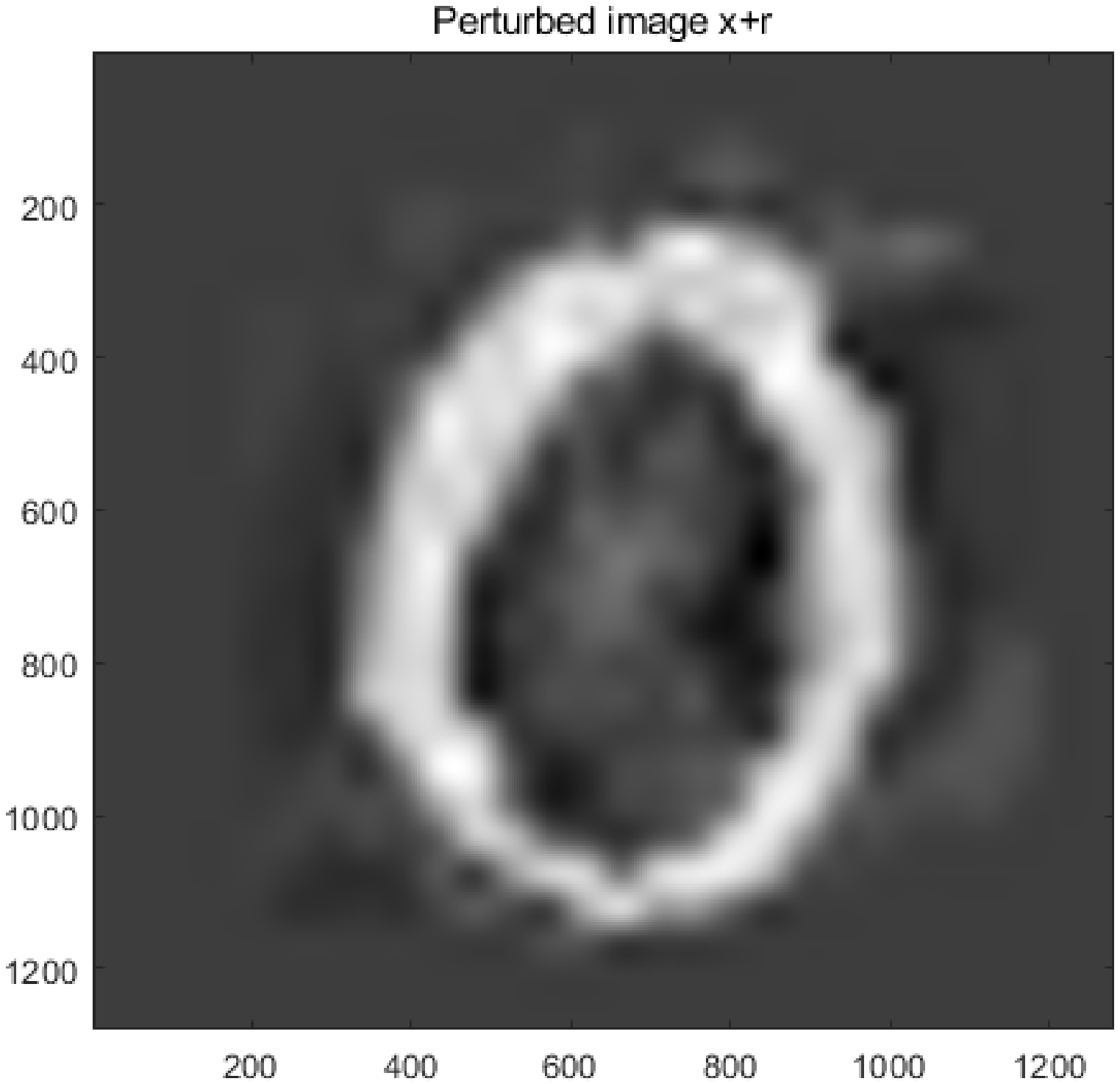} \\
			\includegraphics[scale=0.12]{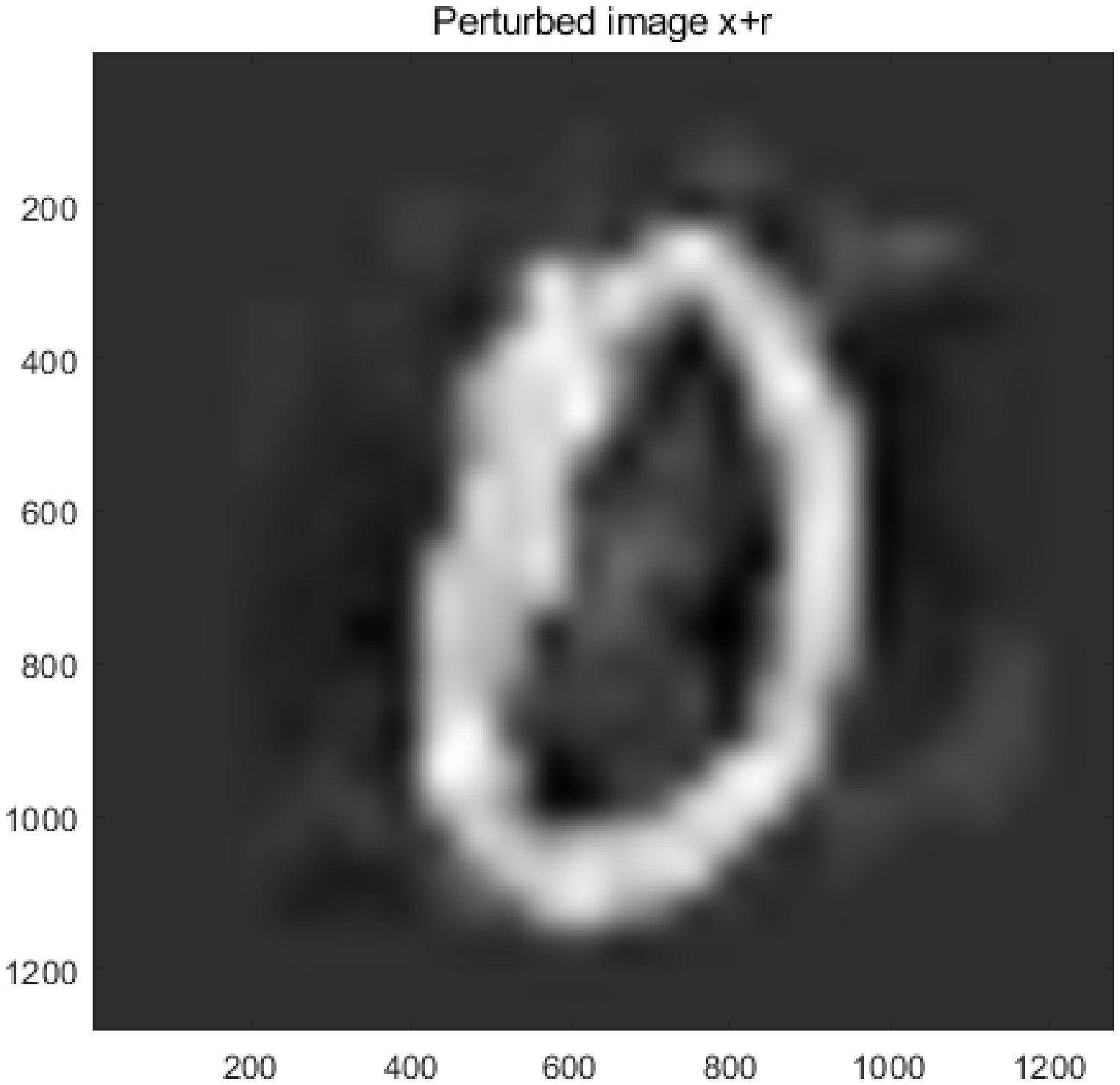} \\
			\includegraphics[scale=0.12]{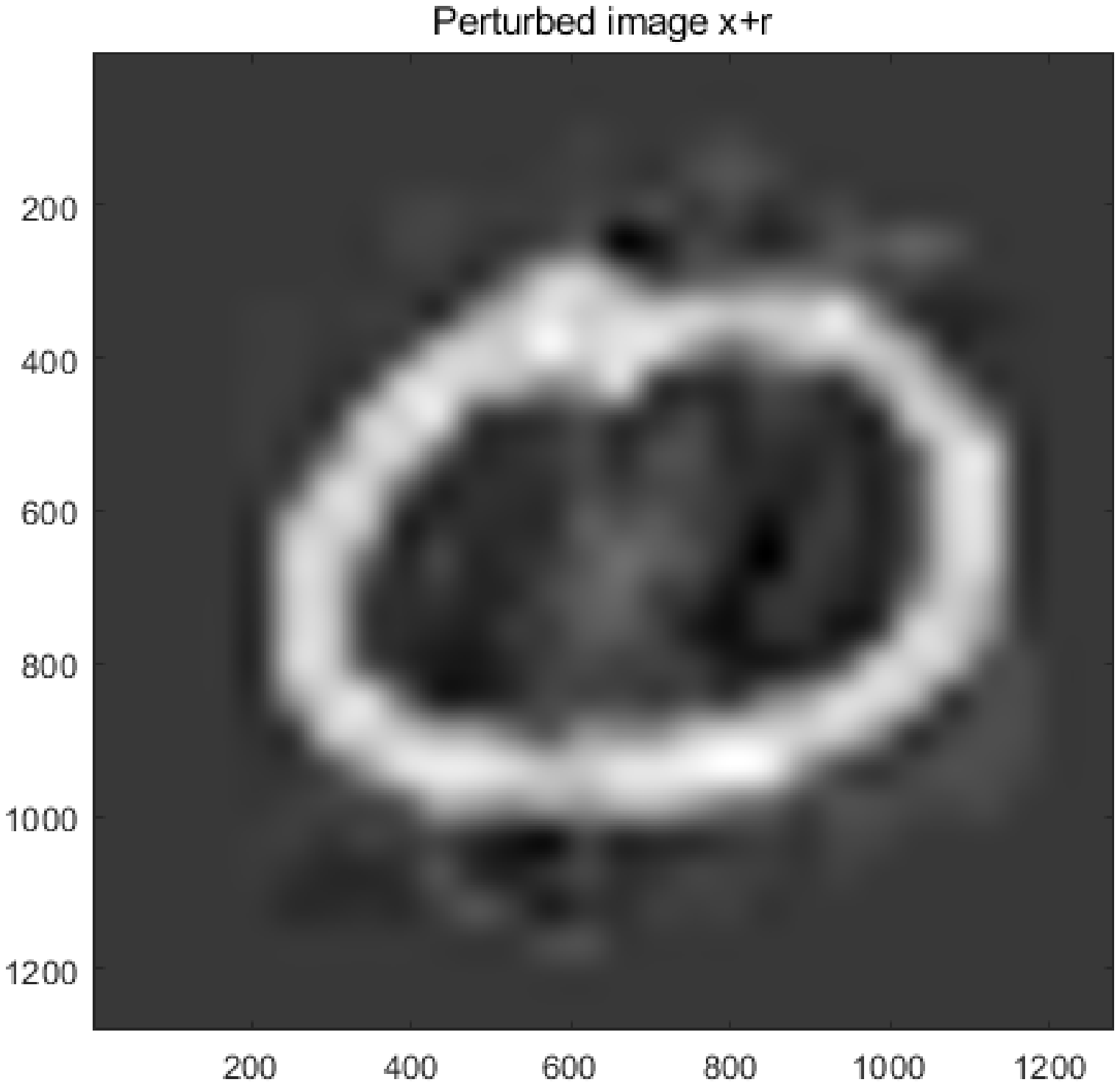}
		\end{minipage}
	\label{label_newfigure2_3}
	}
	\caption{The original images, the  image of cuAP, and the images that have been misclassified after being attacked, when $ \xi=2 $ on the MNIST dataset.}
	\label{label_newfigure2}
\end{figure}
\end{tcbverbatimwrite}
\input{tmp_\jobname_fig4.tex}
\begin{figure}[bhtp]
	\centering
	\subfloat[The original images with class truck.]
	{
		\begin{minipage}[b]{.23\linewidth}
			\centering
			\includegraphics[scale=0.12]{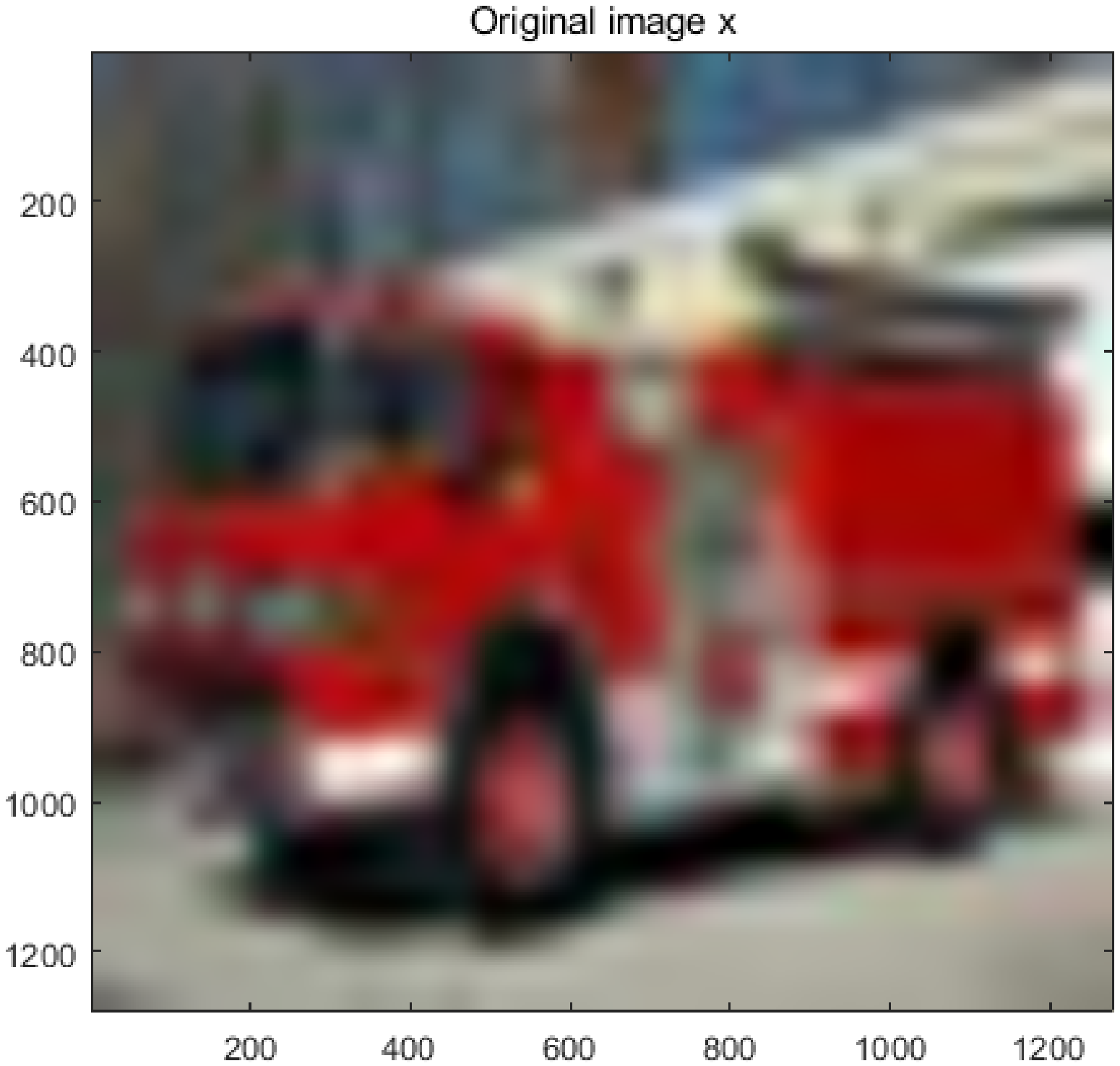} \\
			\includegraphics[scale=0.12]{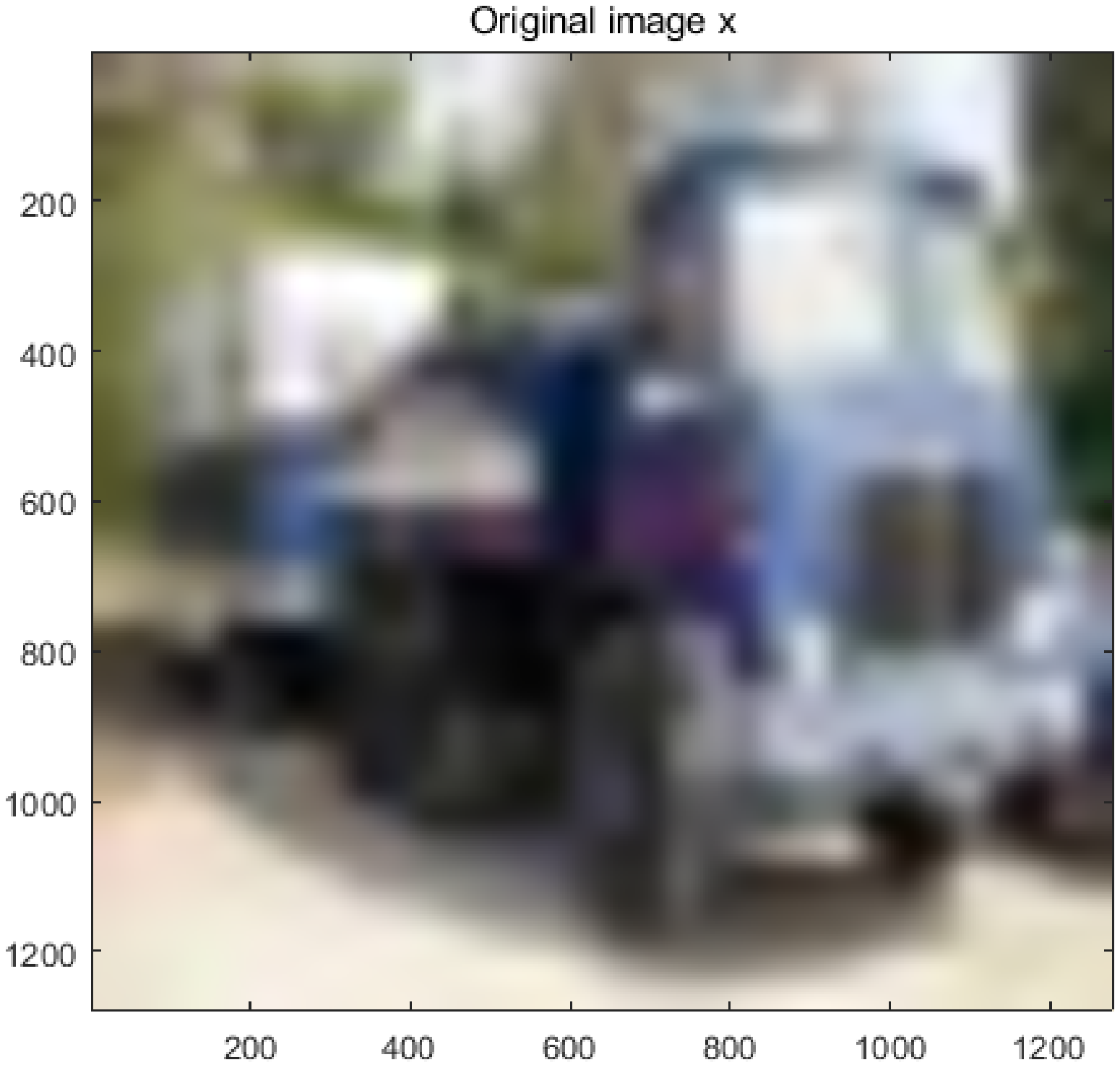} \\
			\includegraphics[scale=0.12]{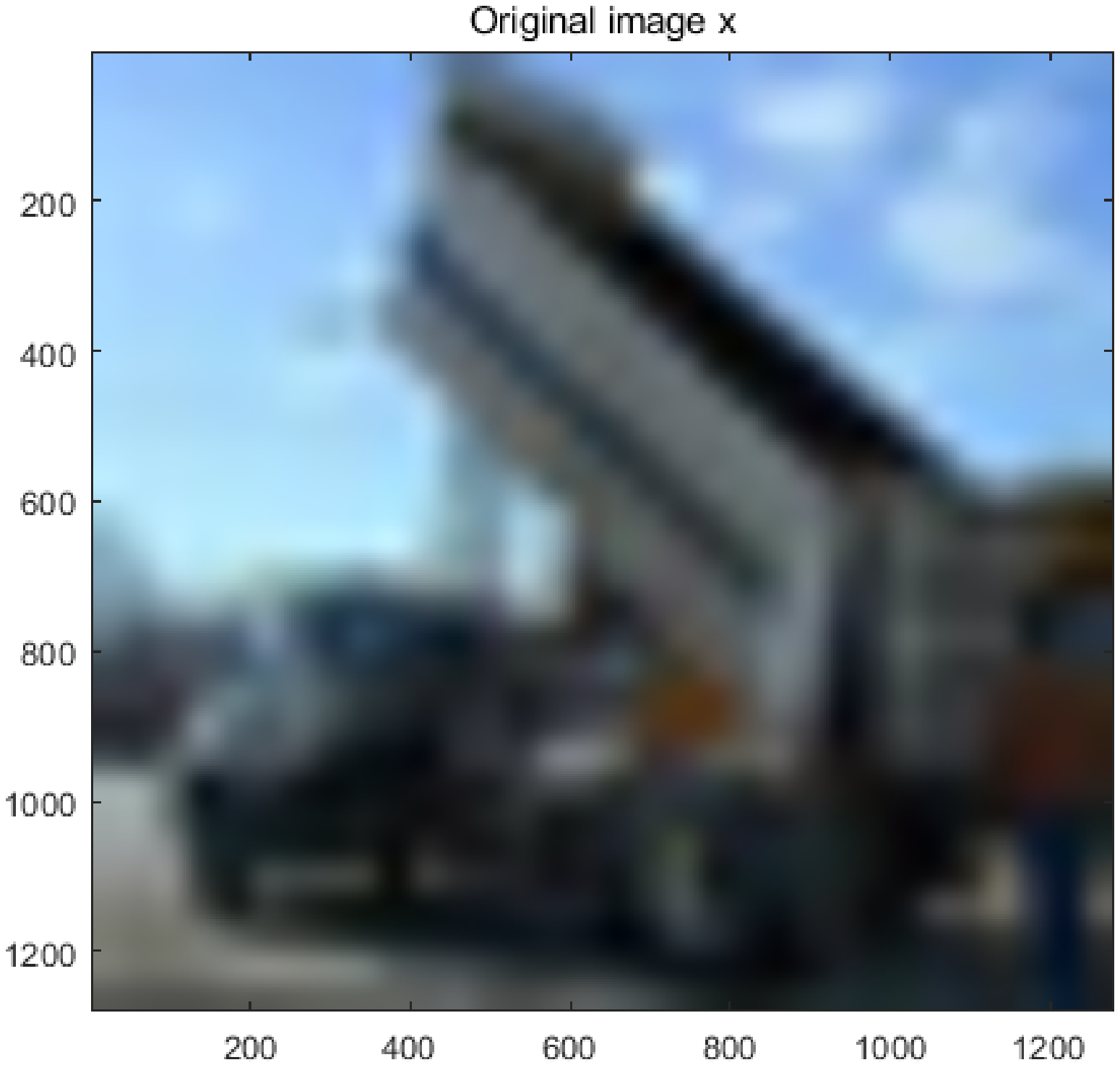}
		\end{minipage}
	}
	\subfloat[The image of the unique cuAP on  the subset of CIFAR-10 dataset with class truck.]
	{
		\begin{minipage}[b]{.3\linewidth}
			\centering
			\includegraphics[scale=0.15]{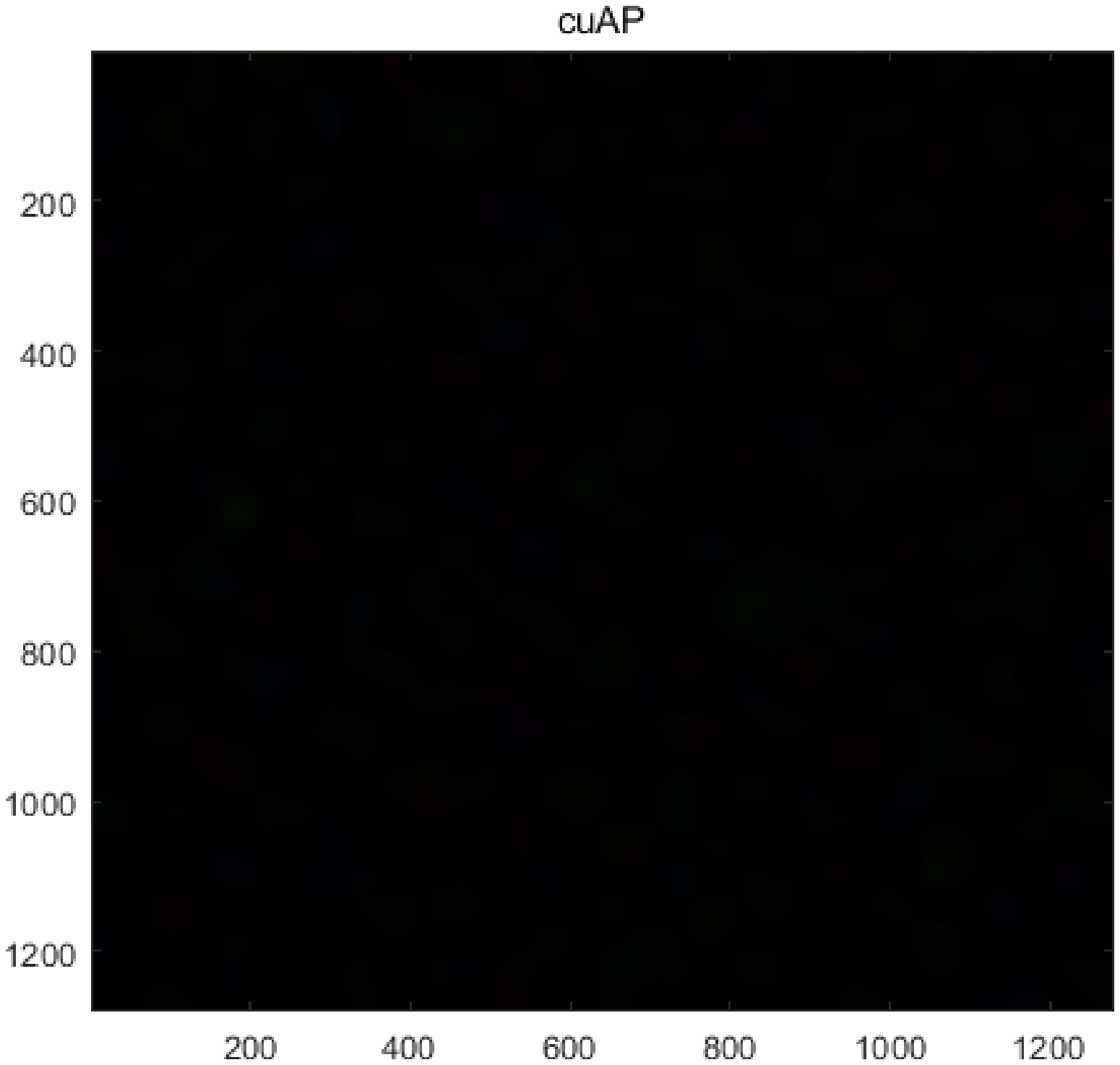}\vspace{20mm}
		\end{minipage}
	}
	\subfloat[The  perturbed  images with class dog.]
	{
		\begin{minipage}[b]{.23\linewidth}
			\centering
			\includegraphics[scale=0.12]{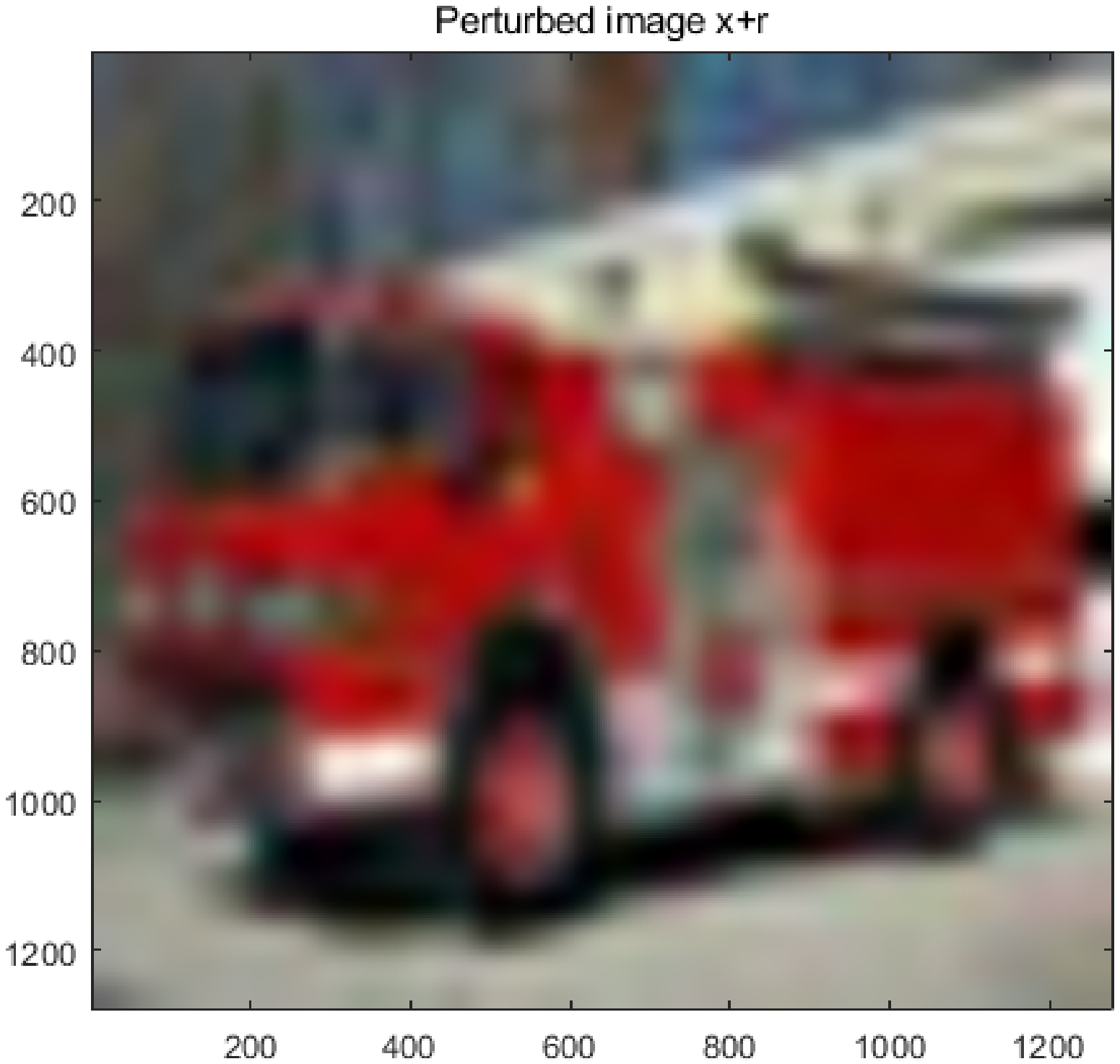} \\
			\includegraphics[scale=0.12]{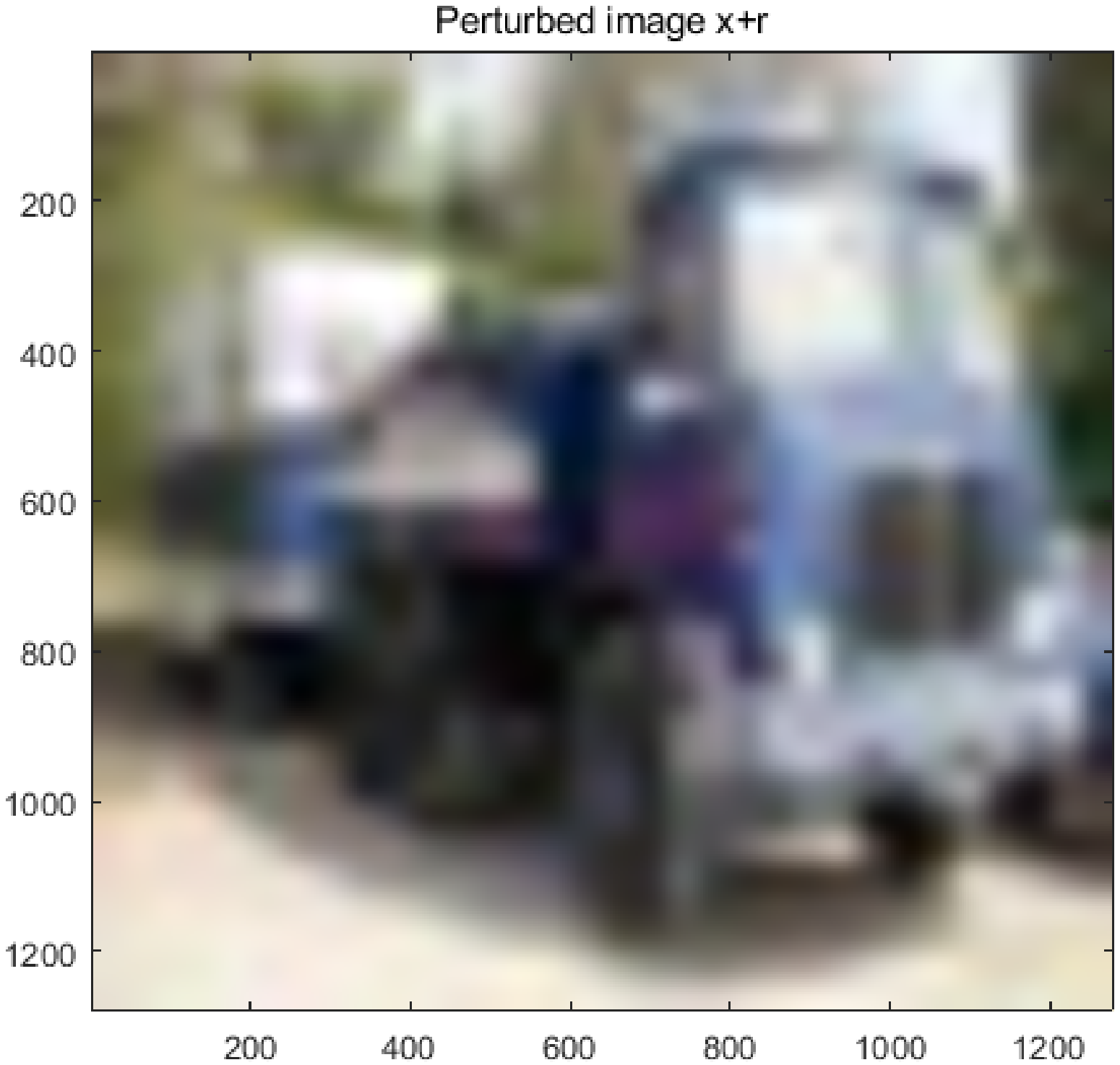} \\
			\includegraphics[scale=0.12]{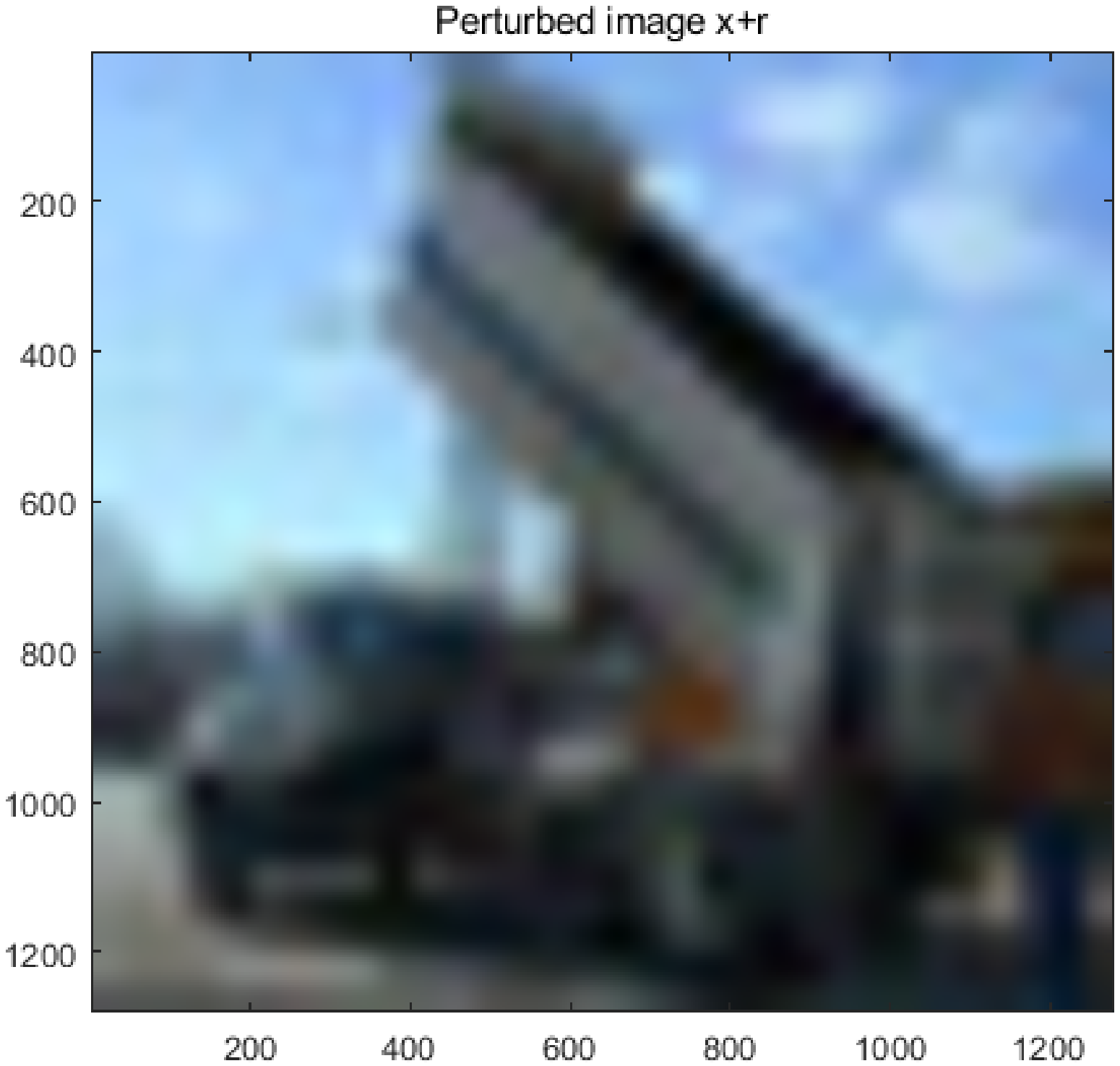}
		\end{minipage}
	}
	\caption{The original images, the  image of cuAP, and the images that have been misclassified after being attacked, when $ \xi=0.5 $ on the CIFAR-10 dataset.}
	\label{label_newfigure3}
\end{figure}

Similarly, in \cref{label_newfigure3}, we give the original images with class truck, the image of cuAP and the perturbed images with class dog, when $ \xi=0.5 $ on the CIFAR-10  dataset. We get that the CPU time to generate cuAP is $7.25 \times 10^{-3}s $ and SNR is 35.55.
Comparing \cref{label_newfigure2} and \cref{label_newfigure3}, we find that  cuAP generated on the dataset (CIFAR-10) with a larger number of features is less likely to be observed, and the perturbed images are almost the same as the original images by human observations.

\subsubsection{Numerical experiments of uAP}

%
%uAP扰动更高比例的
%与cuAP相同
\cref{label_newfigure4} shows the relationship that the fooling rate  with  the size of uAP on the MNIST and CIFAR-10 real image data test set. Obviously, in \cref{fig:figure4_1_0}, we find that when the  norm of uAP on the MNIST reaches 3, the fooling rate is almost $ 53.66\% $ and in \cref{fig:figure4_1_1}, when the  norm of uAP on the CIFAR-10  reaches 0.5, the fooling rate is almost $ 52.93\% $. Comparing the results of different datasets on the same linear  binary classifier in \cref{label_newfigure4}, we find that  CIFAR-10  dataset is more likely  to be fooled by very small uAP. 
\begin{figure}[bhtp]
	\centering
	\subfloat[The relationship  on the MNIST dataset.]{
		\label{fig:figure4_1_0} 
		\includegraphics[scale=0.2]{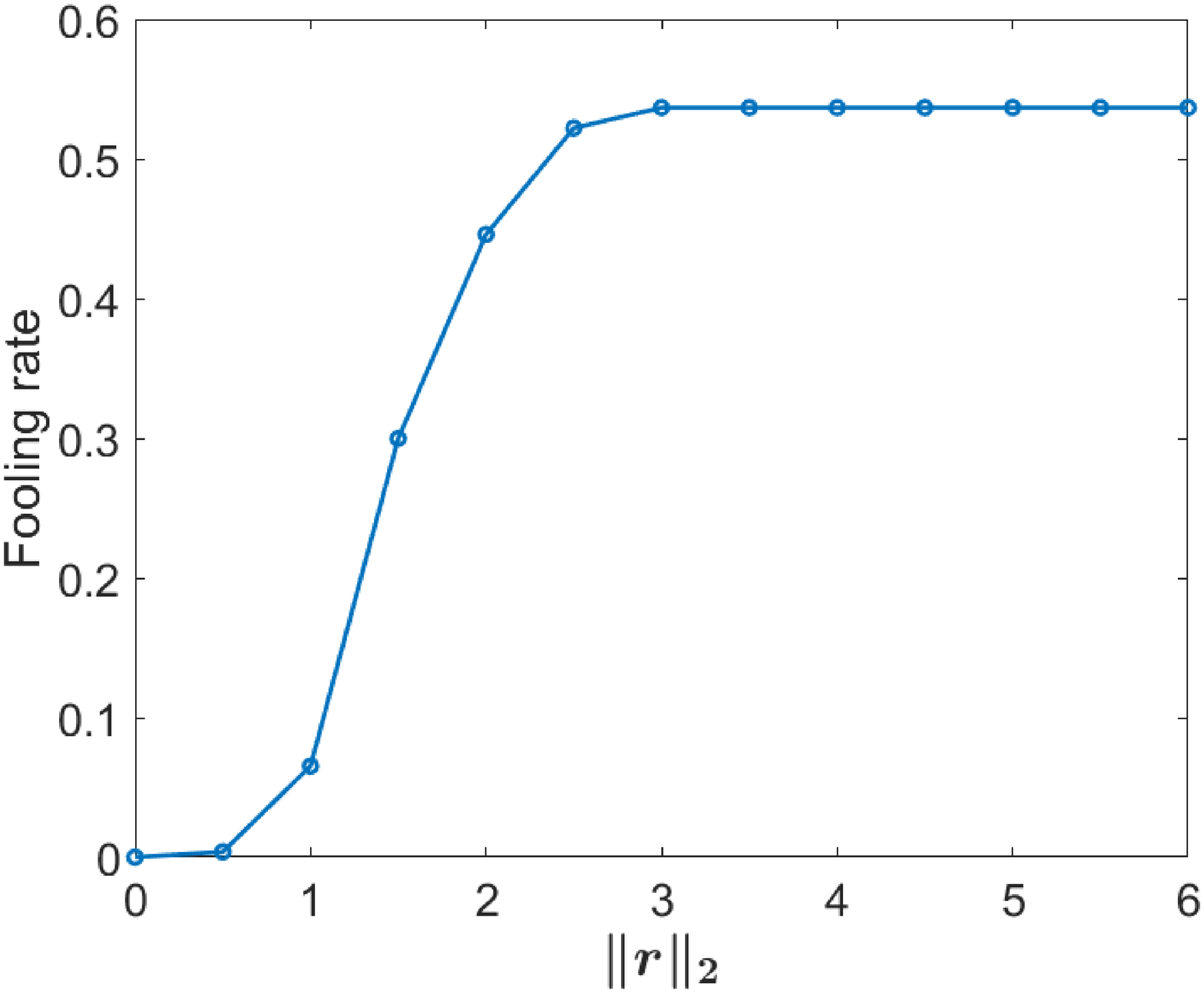}}
	\hspace{0.1in} % 两图片之间的距离
	\subfloat[The relationship  on the CIFAR-10 dataset.]{
		\label{fig:figure4_1_1} 
		\includegraphics[scale=0.2]{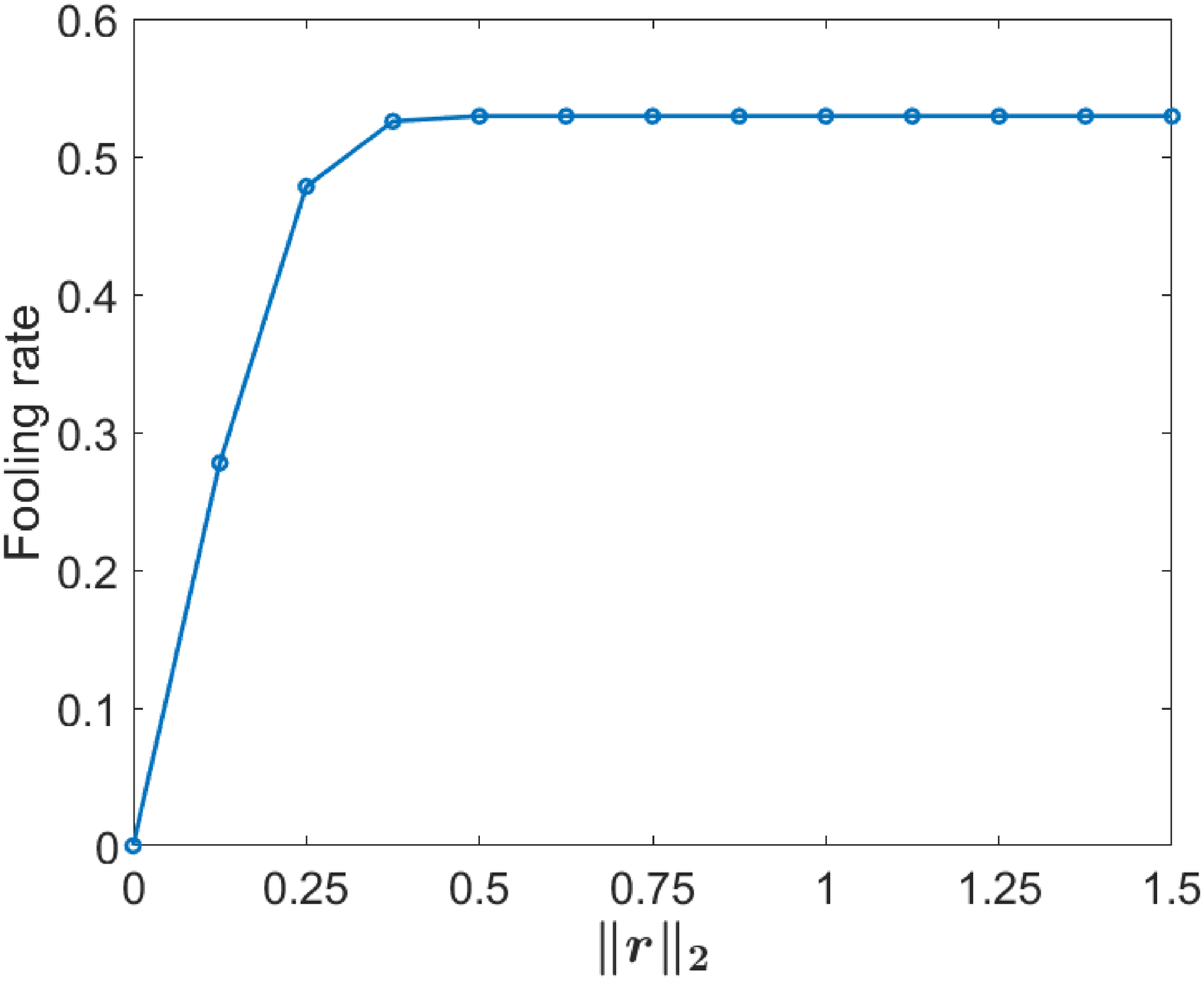}}
	\caption{The relationship between the fooling rate and the norm  of uAP  obtained on the linear  binary SVMs on the MNIST and  CIFAR-10 dataset.}
	\label{label_newfigure4} 
\end{figure}
\begin{figure}[H]
	\centering
	\subfloat[The original images with class  1.]
	{
		\begin{minipage}[b]{.23\linewidth}
			\centering
			\includegraphics[scale=0.12]{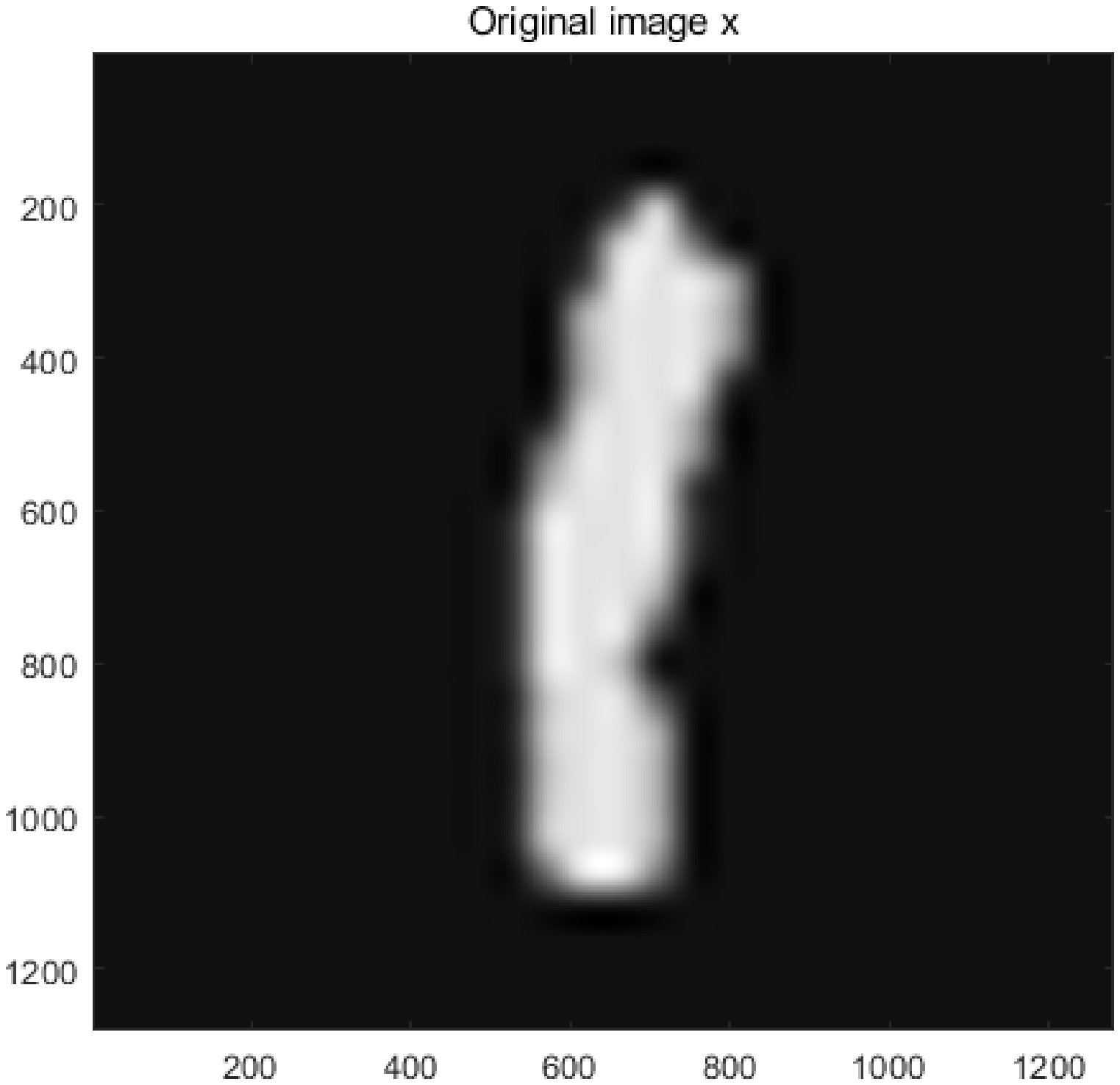} \\
			\includegraphics[scale=0.12]{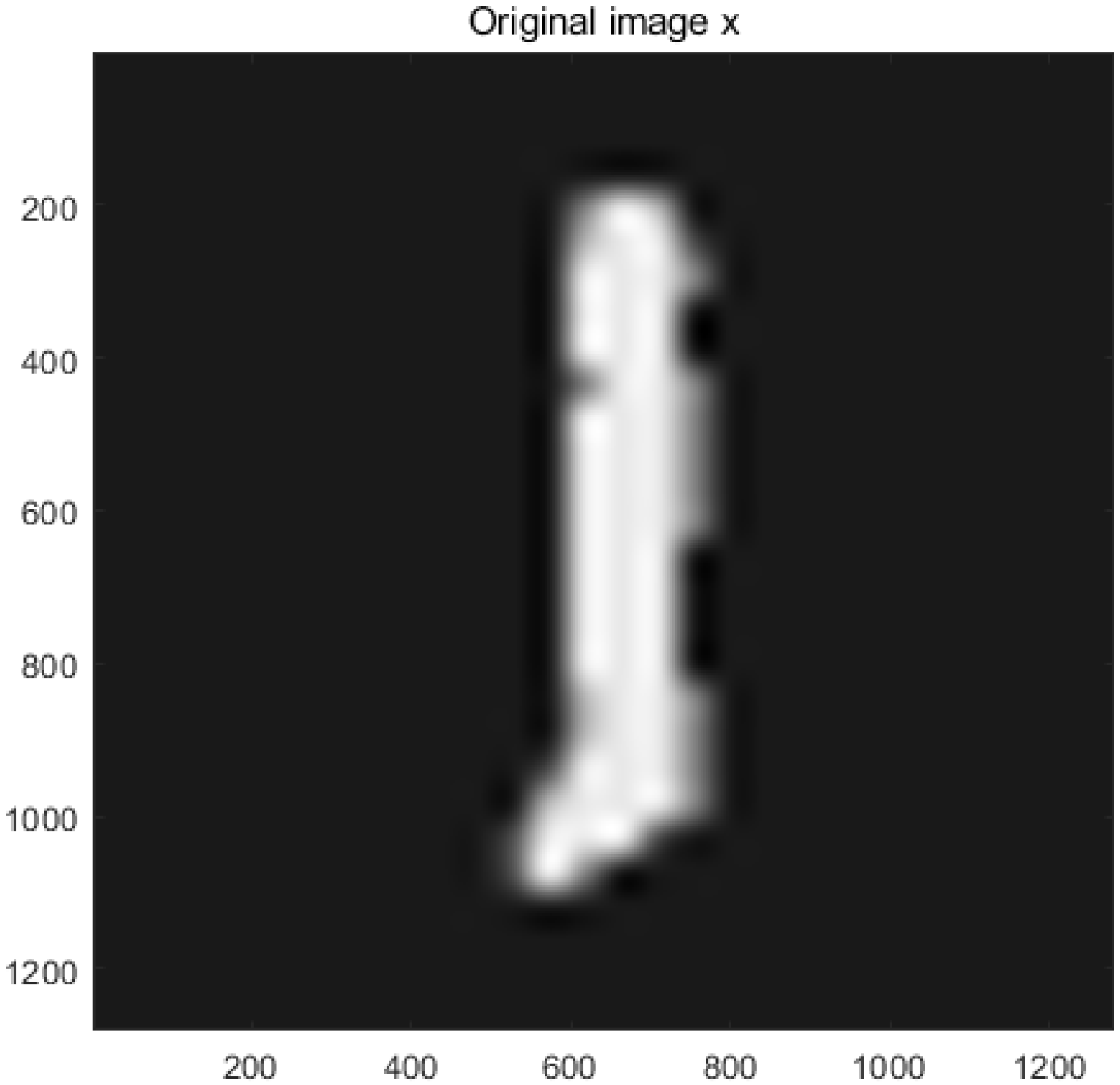} \\
			\includegraphics[scale=0.12]{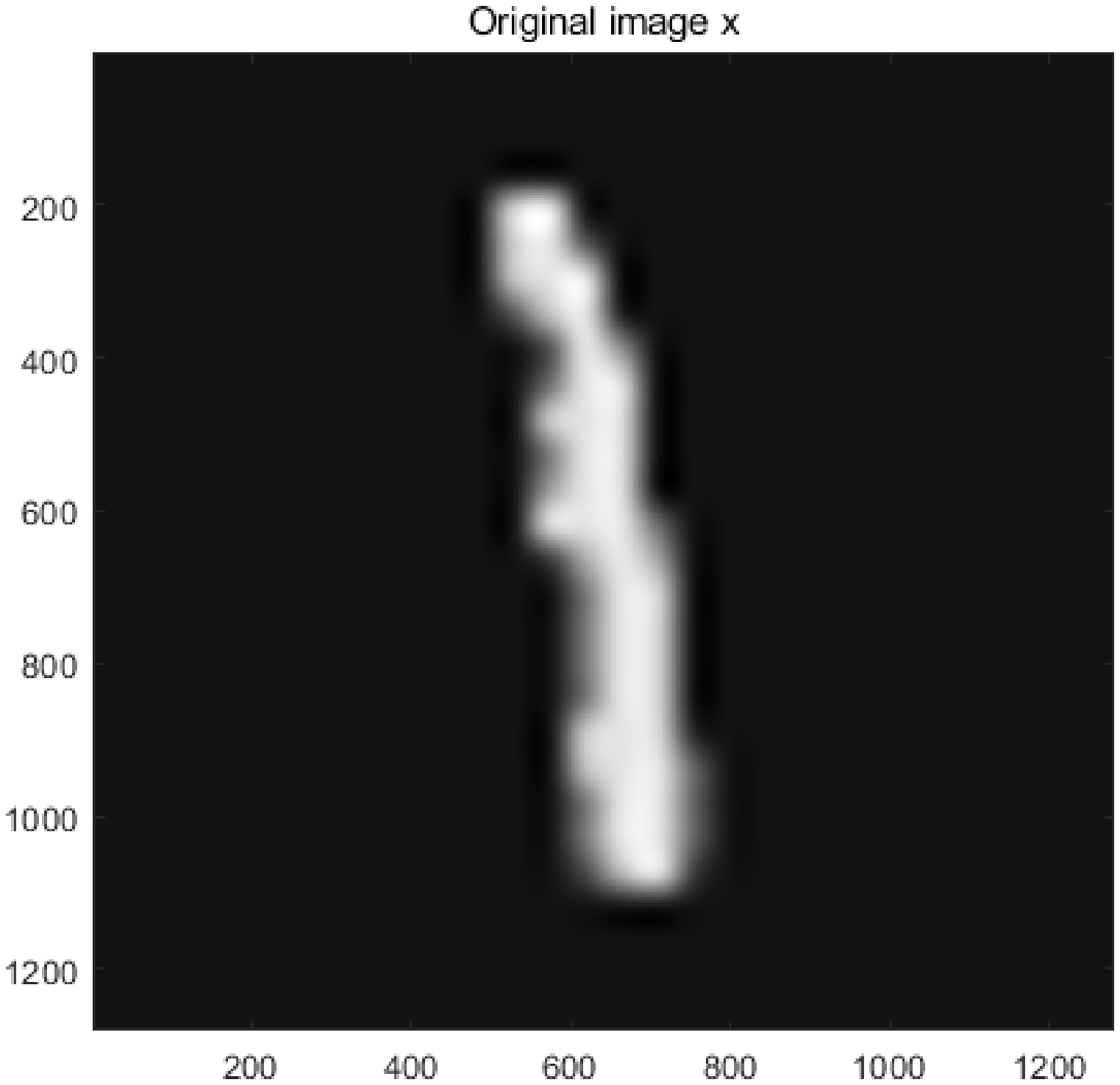}
		\end{minipage}
	\label{label_newfigure5_1}
	}
	\subfloat[The image of the unique uAP on  the  MNIST dataset.]
	{
		\begin{minipage}[b]{.3\linewidth}
			\centering
			\includegraphics[scale=0.15]{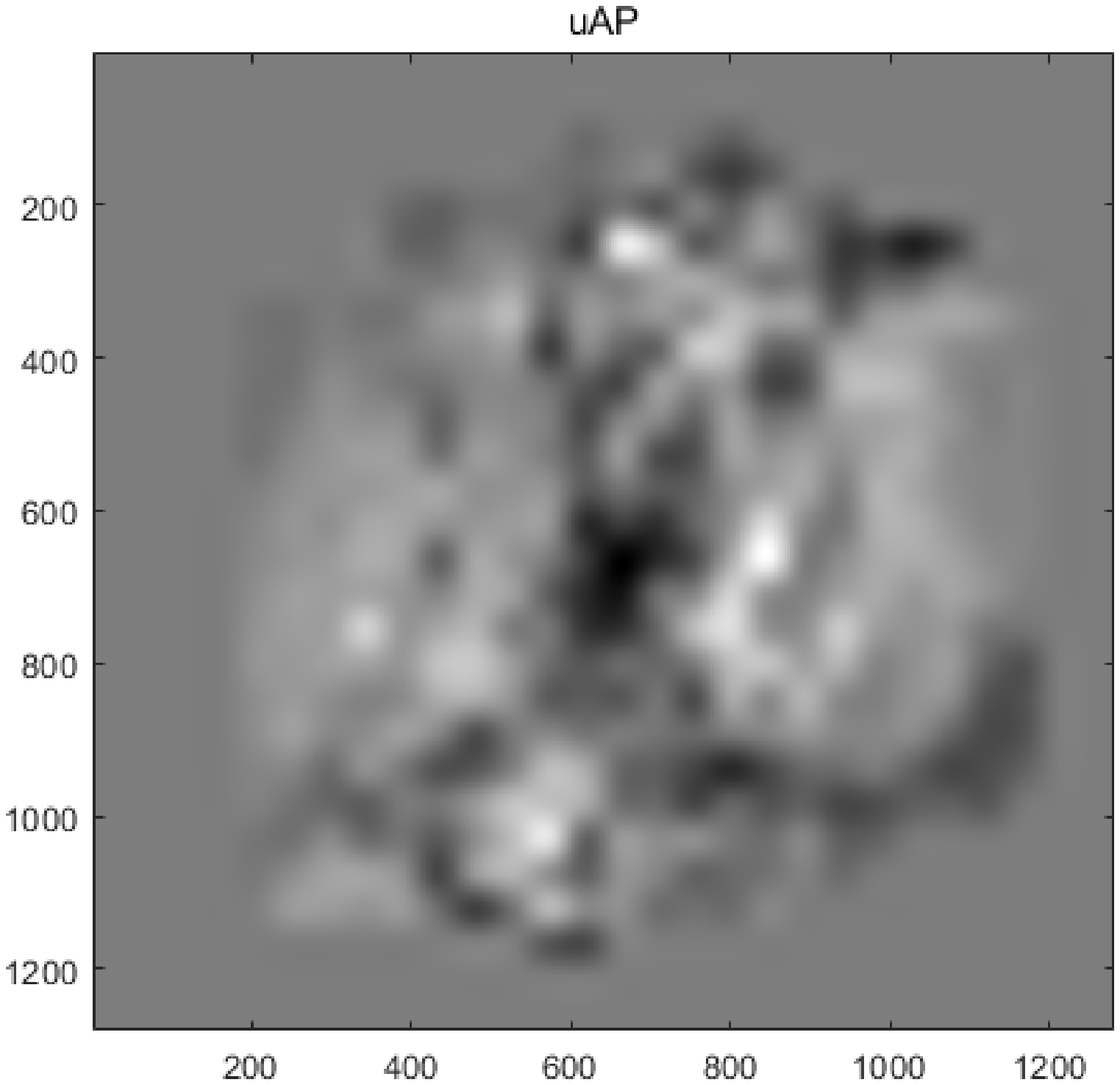}\vspace{20mm}
		\end{minipage}
	\label{label_newfigure5_2}
	}
	\subfloat[The  perturbed  images with class 0.]
	{
		\begin{minipage}[b]{.23\linewidth}
			\centering
			\includegraphics[scale=0.12]{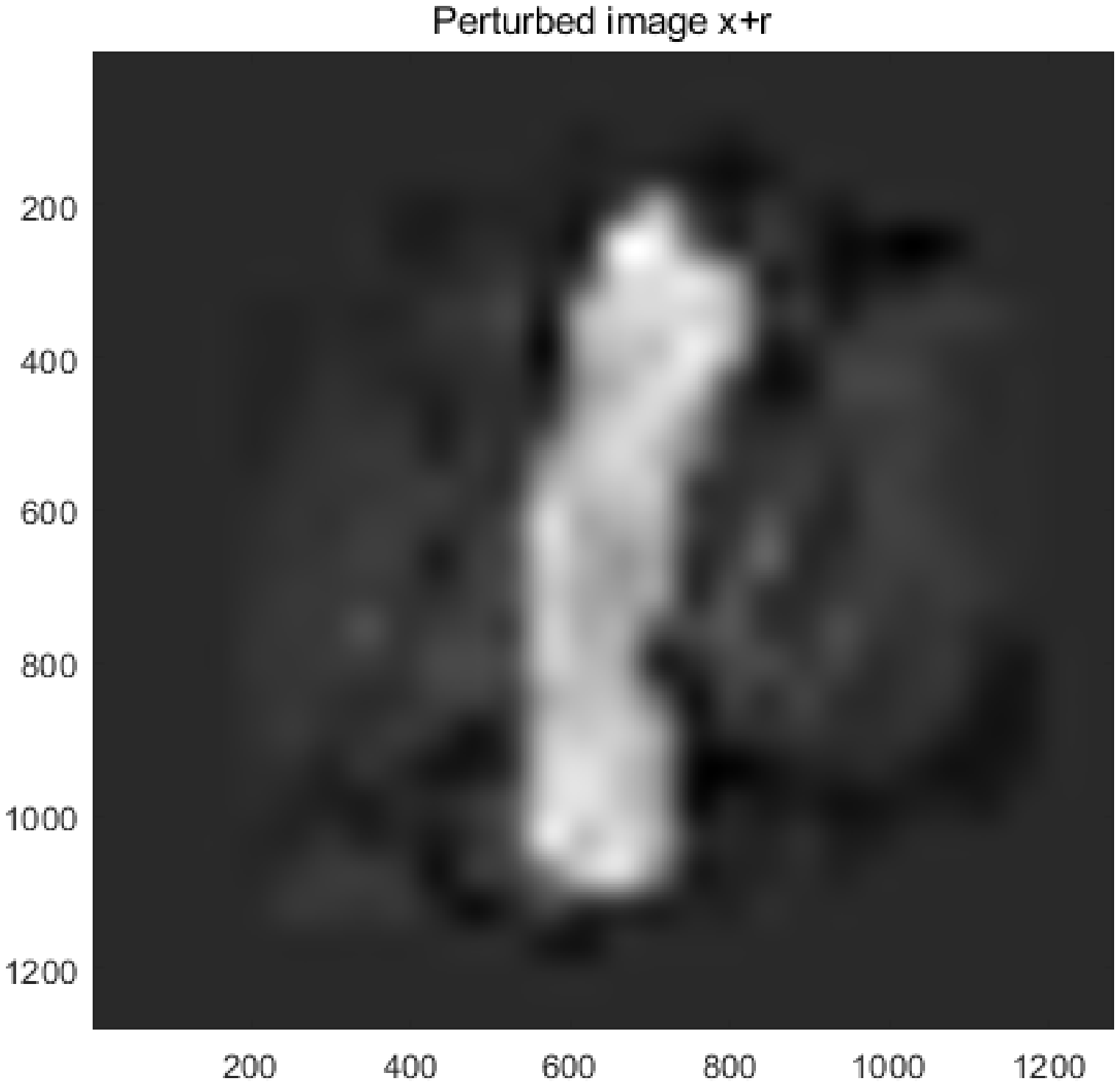} \\
			\includegraphics[scale=0.12]{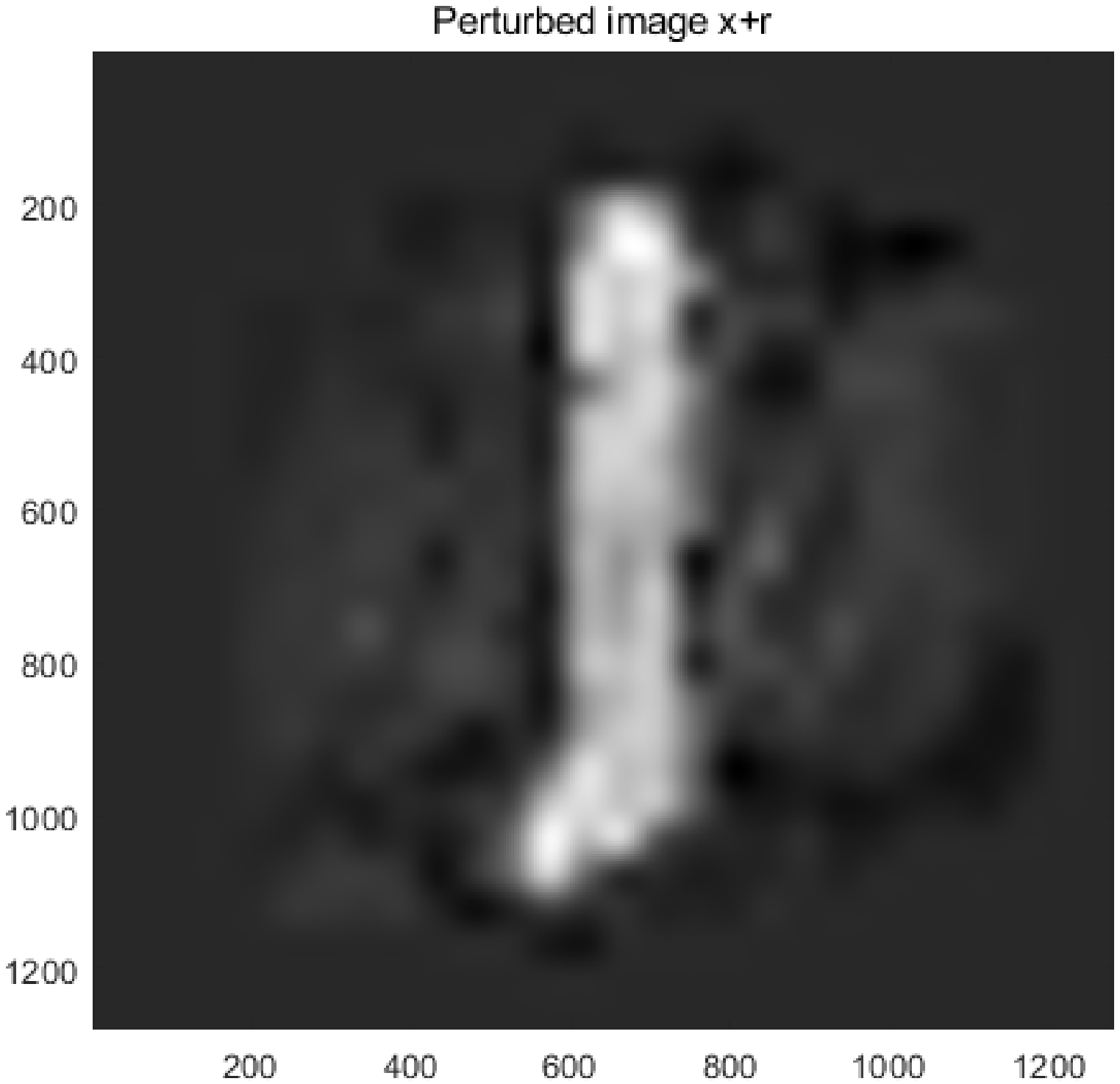} \\
			\includegraphics[scale=0.12]{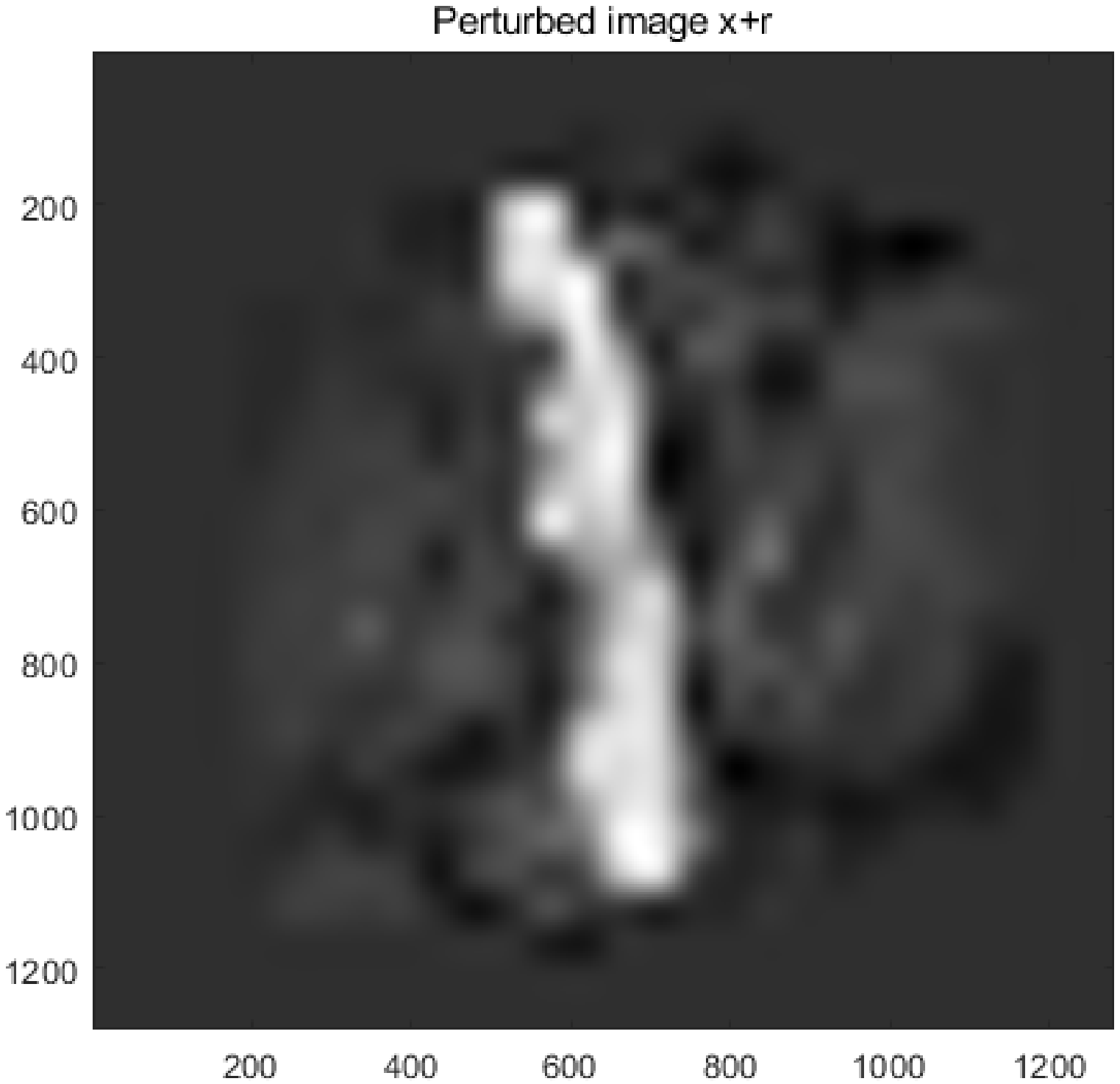}
		\end{minipage}
	\label{label_newfigure5_3}
	}
	\caption{The original images, the  image of uAP, and the images that have been misclassified after being attacked, when $ \xi=2 $ on the MNIST dataset.}
	\label{label_newfigure5}
\end{figure}

\begin{figure}[H]
	\centering
	\subfloat[The original images with class dog.]
	{
		\begin{minipage}[b]{.23\linewidth}
			\centering
			\includegraphics[scale=0.12]{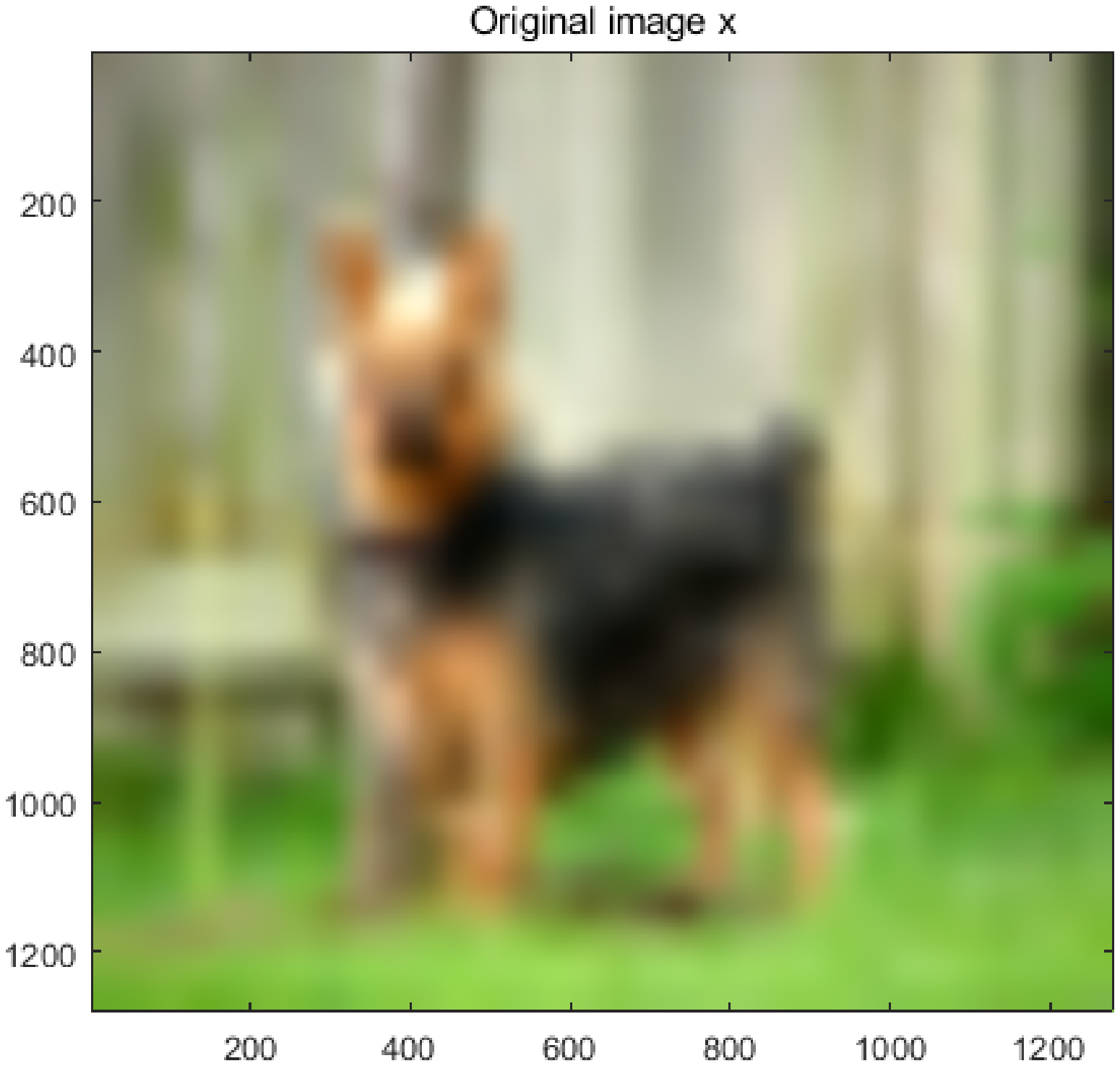} \\
			\includegraphics[scale=0.12]{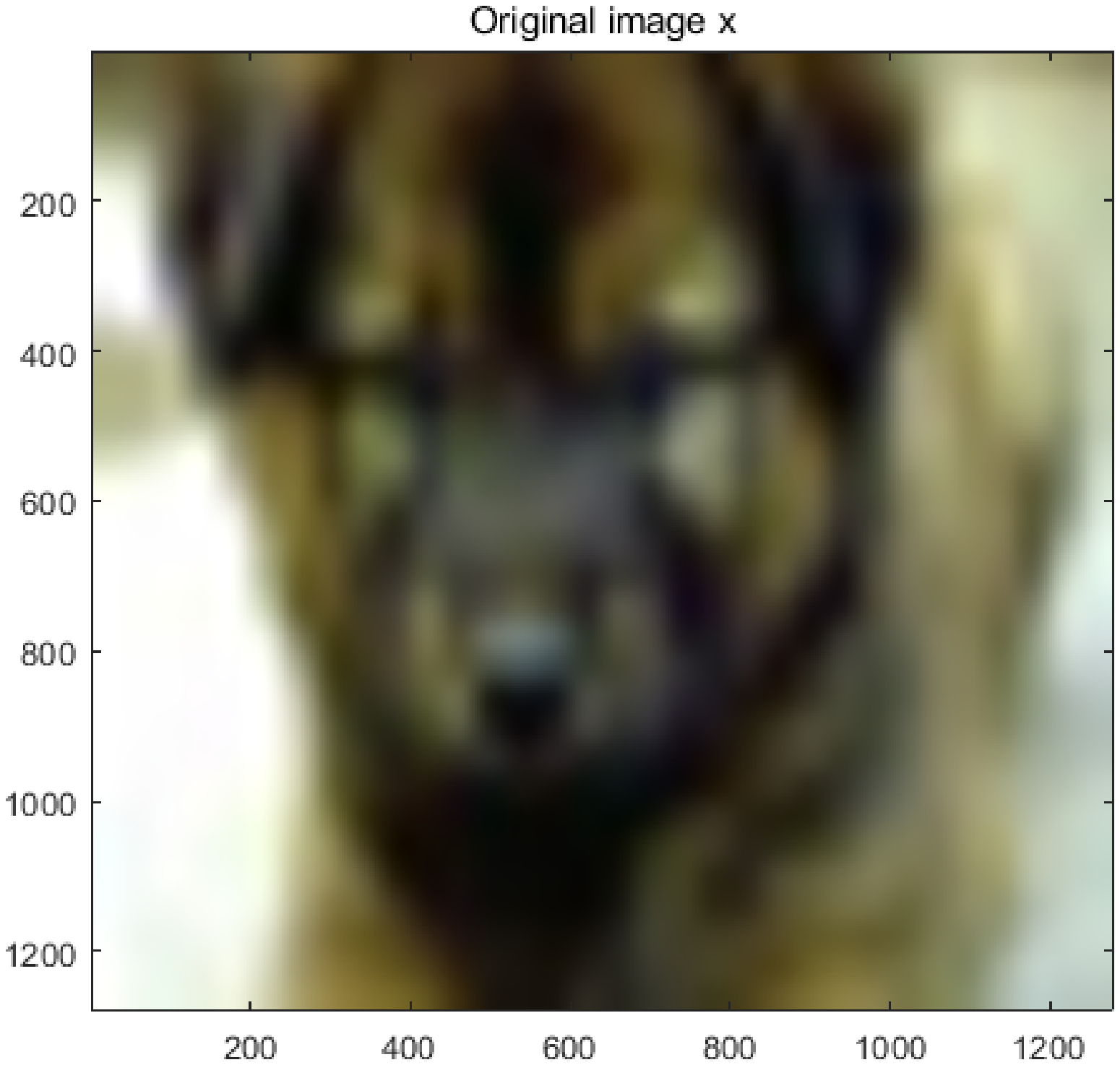} \\
			\includegraphics[scale=0.12]{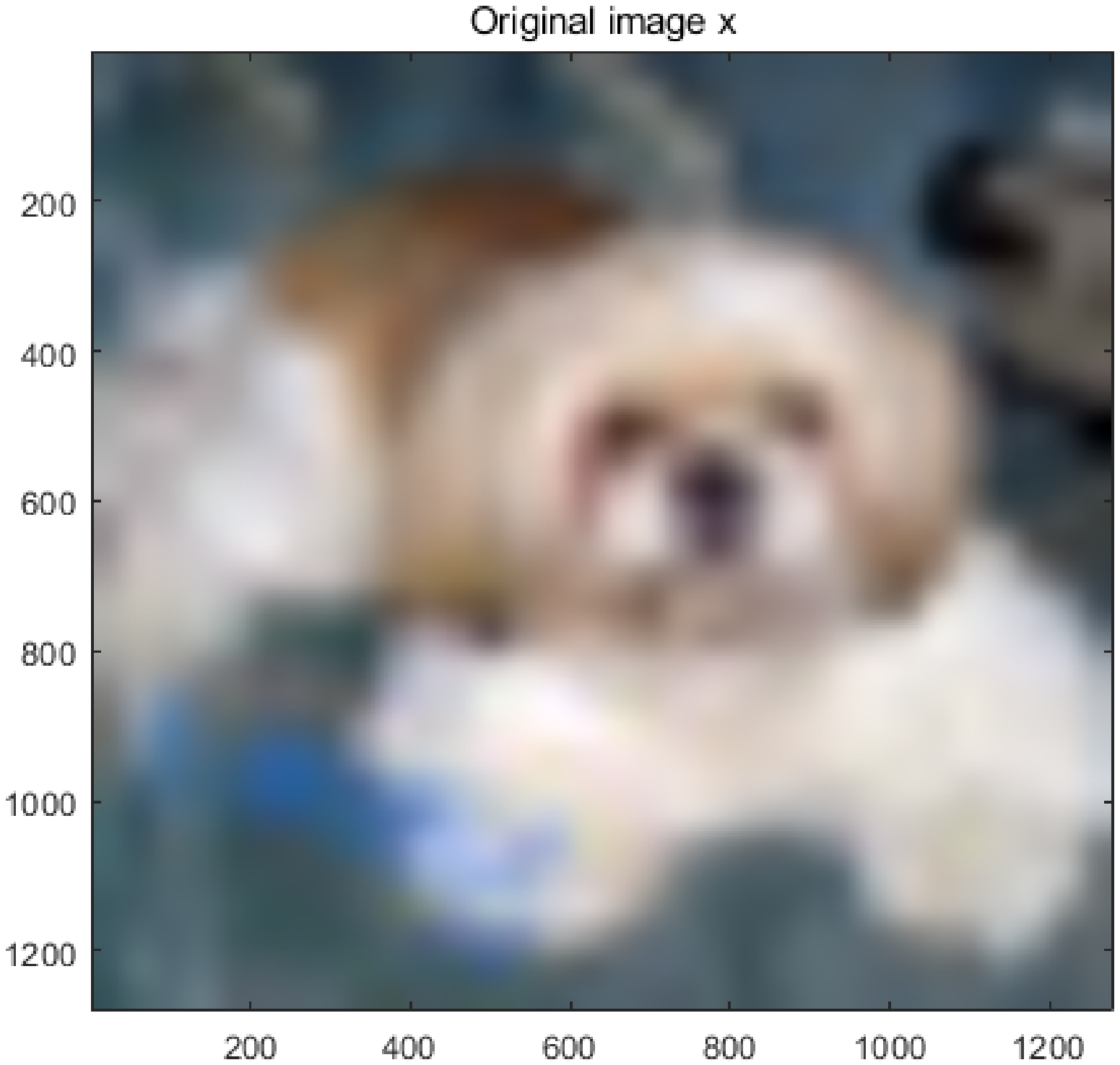}
		\end{minipage}
	}
	\subfloat[The image of the unique uAP on  the  CIFAR-10 dataset.]
	{
		\begin{minipage}[b]{.3\linewidth}
			\centering
			\includegraphics[scale=0.15]{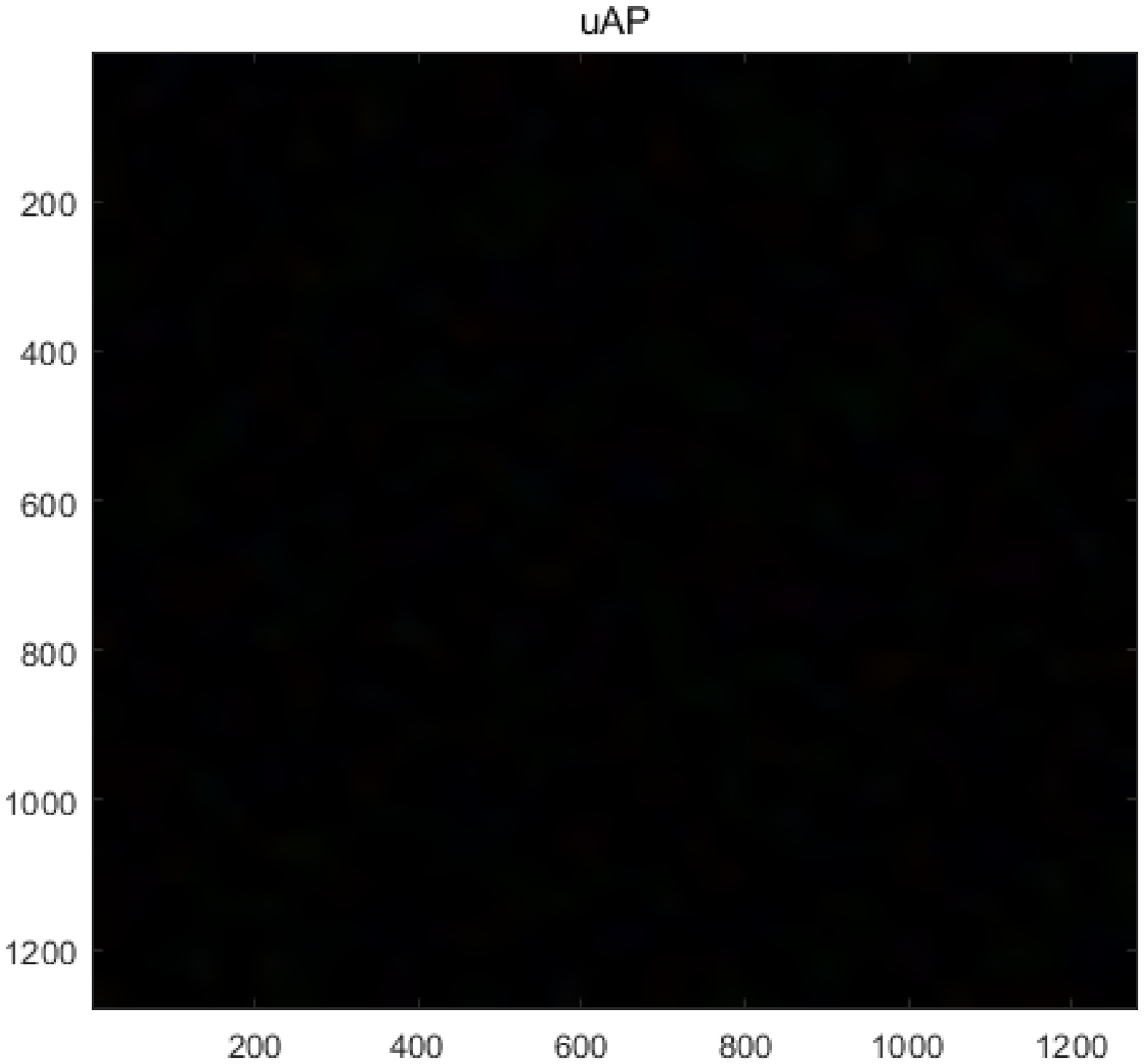}\vspace{20mm}
		\end{minipage}
	\label{label_newfigure6_2}
	}
	\subfloat[The  perturbed  images with class truck.]
	{
		\begin{minipage}[b]{.23\linewidth}
			\centering
			\includegraphics[scale=0.12]{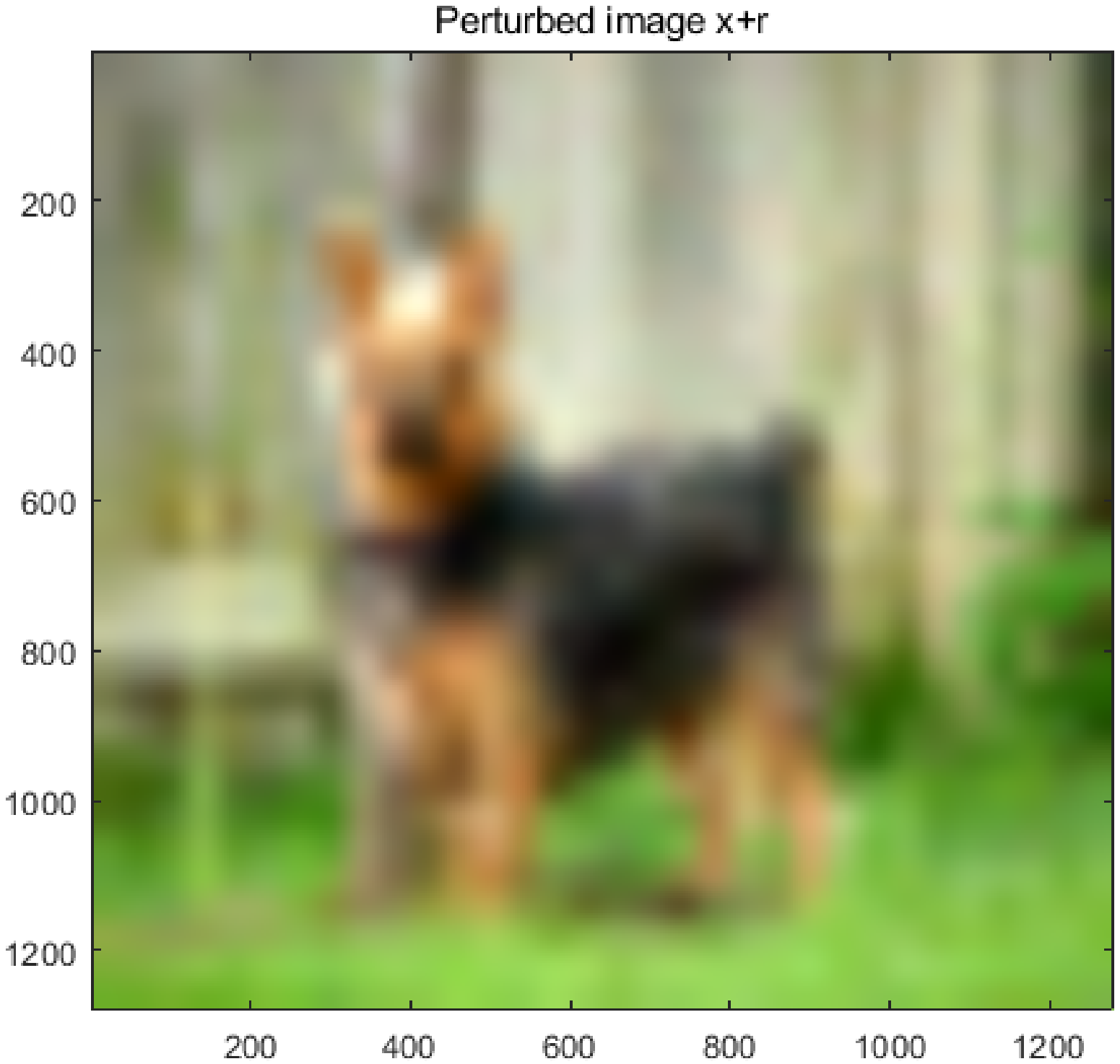} \\
			\includegraphics[scale=0.12]{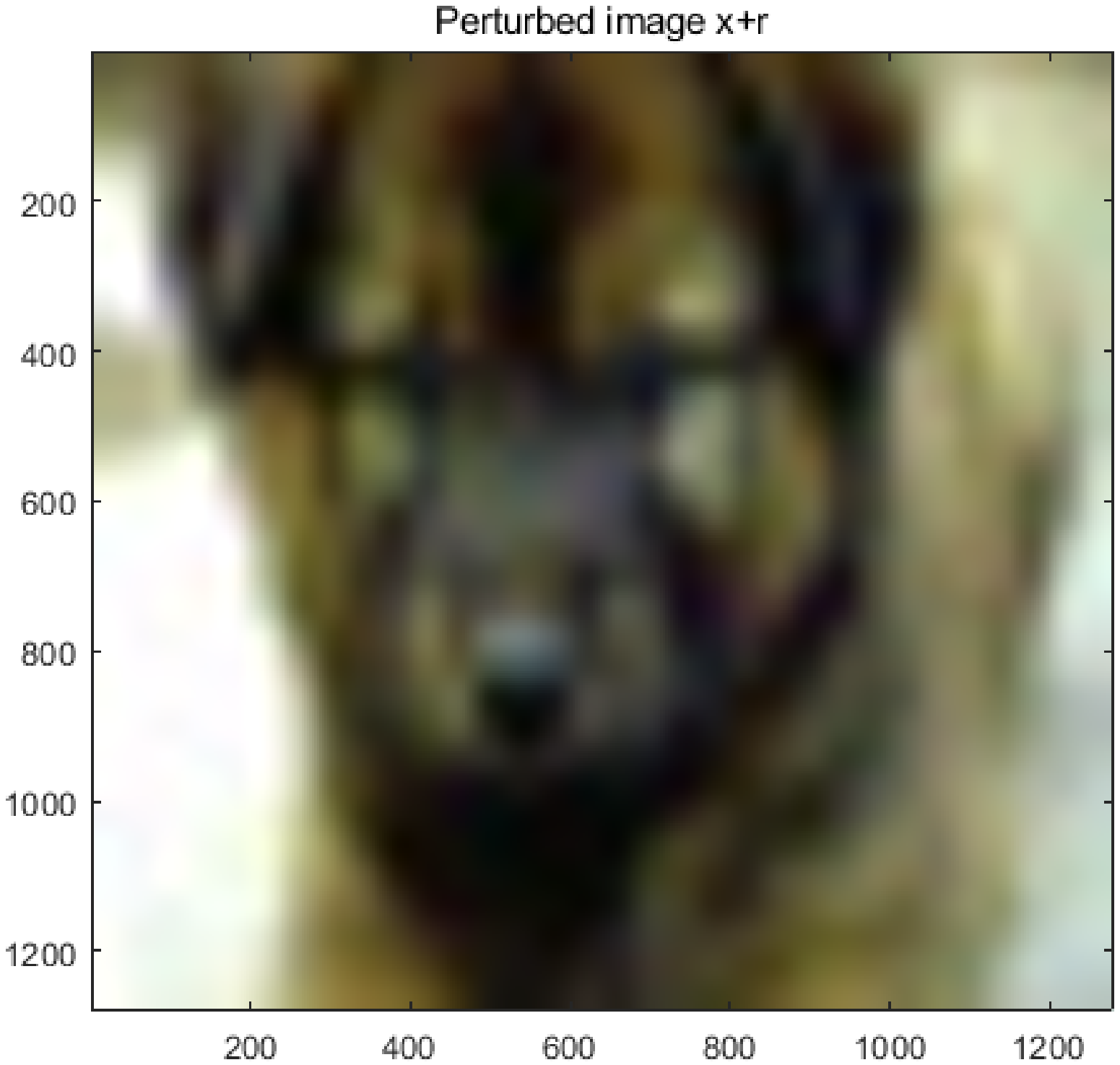} \\
			\includegraphics[scale=0.12]{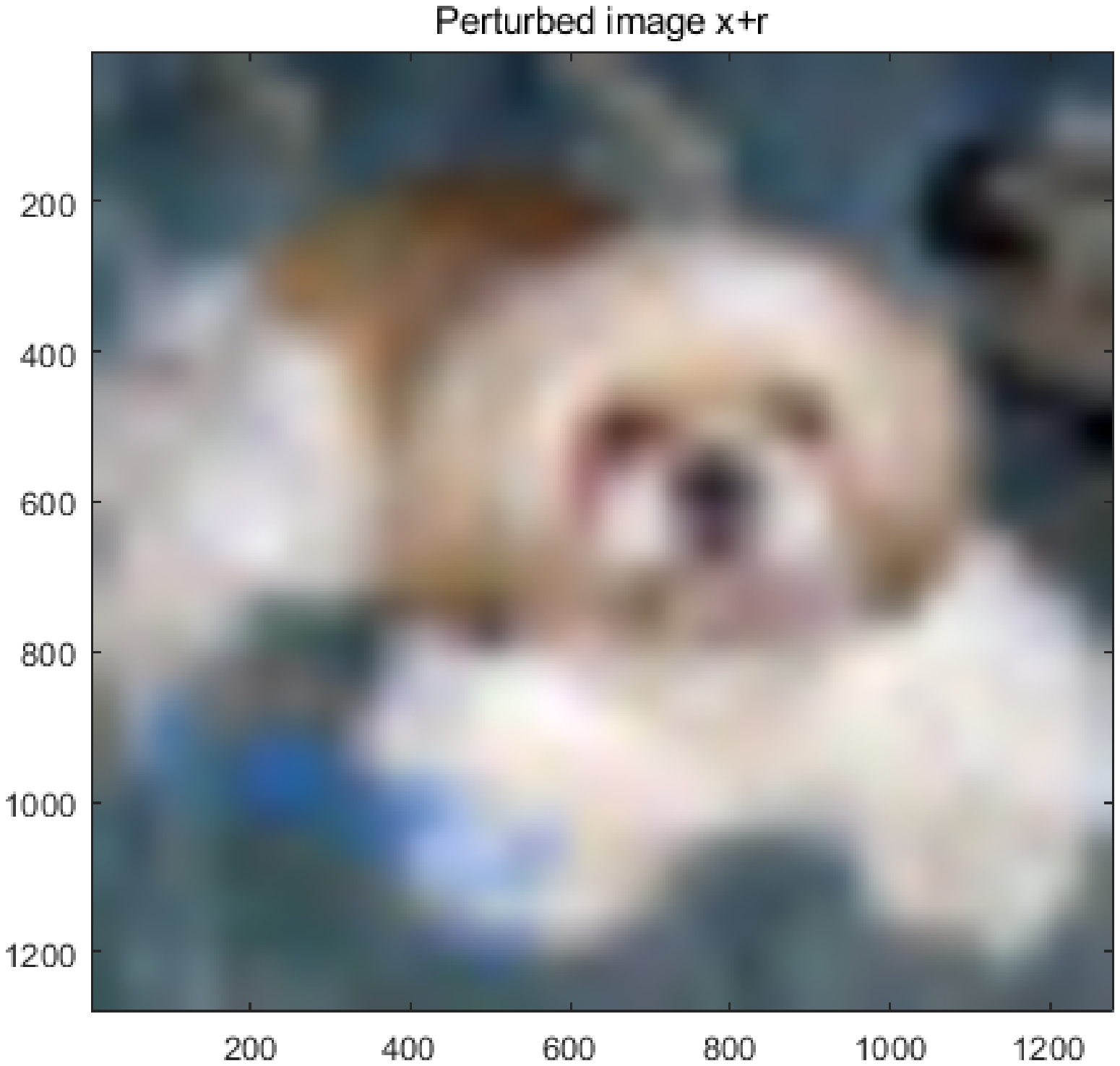}
		\end{minipage}
	}
	\caption{The original images, the  image of uAP, and the images that have been misclassified after being attacked, when $ \xi=0.5 $ on the CIFAR-10 dataset.}
	\label{label_newfigure6}
\end{figure}

However, in reality, we should also consider that our uAP should not be observed by human beings, that is, the norm  should be small enough. So we may not choose the uAP with the maximum fooling rate, but choose an appropriate size of uAP. In \cref{label_newfigure5}, we give an example to compare the original images, the image of uAP, and the  images of that have been misclassified after being attacked, when $ \xi=2 $ on the MNIST dataset. In this case,  SNR is 12.57. By selecting the average of 10 repeated experiments on the same MNIST test dataset, we get that the  CPU  time to generate uAP is only $2.05 \times 10^{-5}s $. The process of generating uAP does not need iteration.
In \cref{label_newfigure5_1}, the original images class are 1 and in \cref{label_newfigure5_3}, the perturbed  images class are 0. In MNIST dataset,  because the proportion of data predicted as class 1 is larger,  uAP  mainly  fools the data with class 1, as shown in \cref{label_newfigure5_2}.
The  image of uAP is small enough. 

Similarly, in \cref{label_newfigure6}, we give the original images with class dog, the uAP  and the perturbed image with class truck, when $ \xi=0.5 $ on the CIFAR-10  dataset. In this case,  SNR is 35.55. In CIFAR-10 dataset,  because the proportion of data predicted as class dog is larger,  uAP  mainly  fools the data with class dog, as shown in \cref{label_newfigure6_2}.
The CPU time to generate uAP is $5.65 \times 10^{-4}s $. 
Comparing \cref{label_newfigure5} and \cref{label_newfigure6},  uAP generated on the dataset (CIFAR-10) with a larger number of features is less likely to be observed.

\begin{figure}[H]
	\centering
	%	\subfigure{
	%		\begin{minipage}[b]{.3\linewidth}
	%			\centering
	%			\includegraphics[scale=0.25]{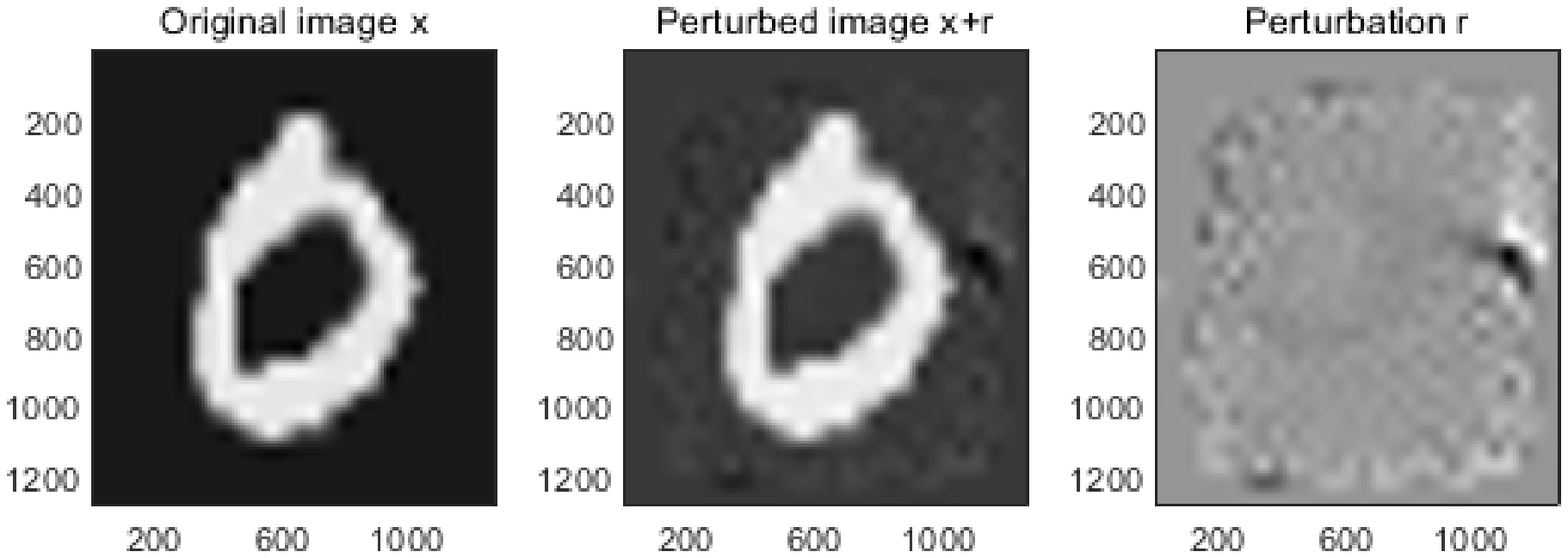}
	%		\end{minipage}
	%	}
	\subfloat{
		\begin{minipage}[b]{.45\linewidth}
			\centering
			\includegraphics[width=1.05\textwidth,height=0.3\textwidth]{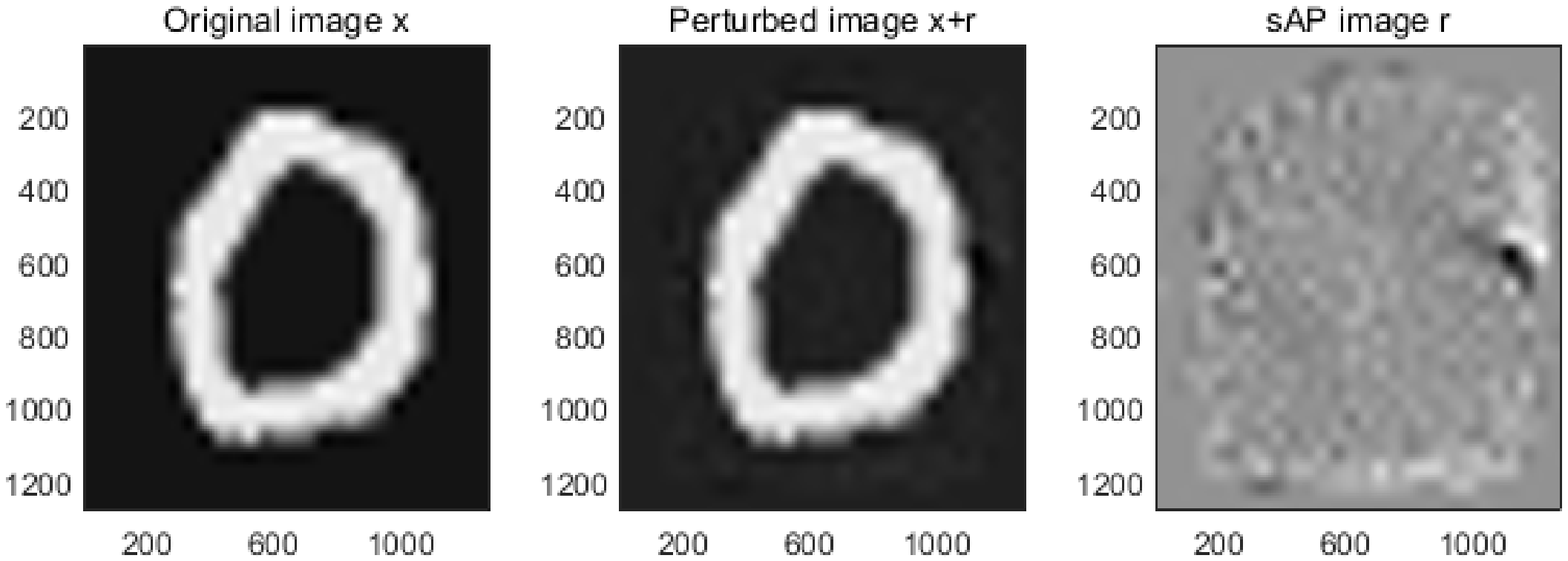}
		\end{minipage}
	}
\hfill 
	\subfloat{
		\begin{minipage}[b]{.45\linewidth}
			\centering
			\includegraphics[width=1.05\textwidth,height=0.3\textwidth]{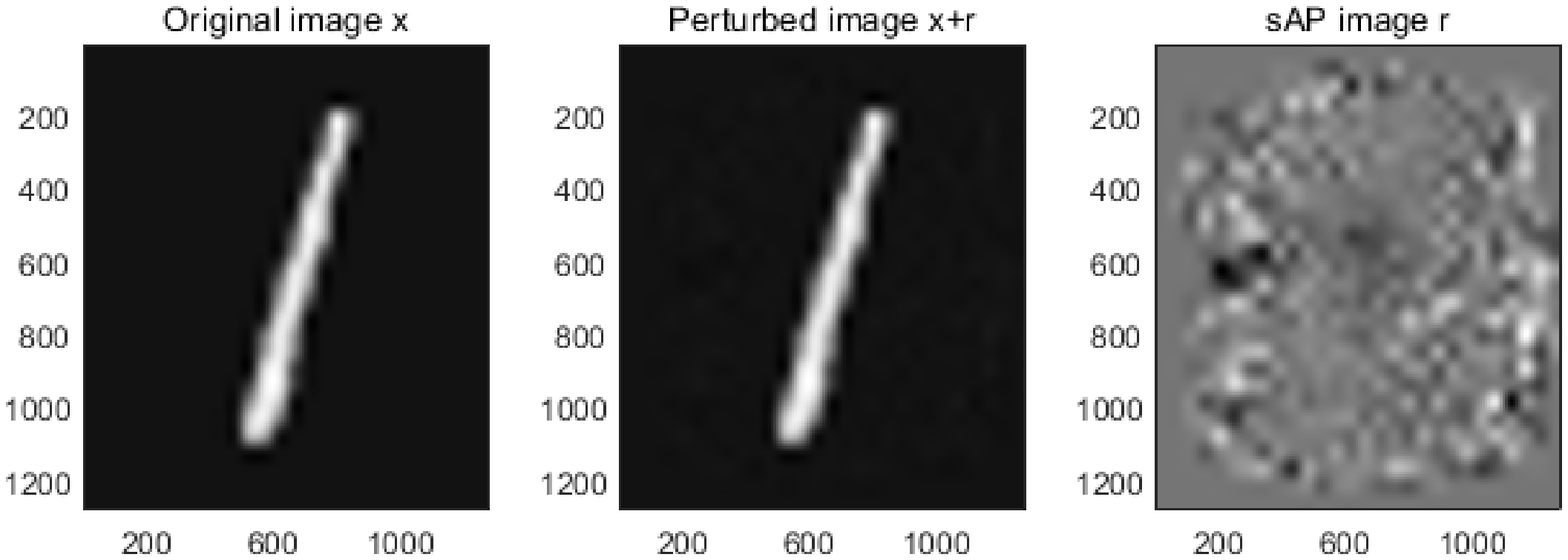}
		\end{minipage}
	}
\newline
	\subfloat{
		\begin{minipage}[b]{.45\linewidth}
			\centering
			\includegraphics[width=1.05\textwidth,height=0.3\textwidth]{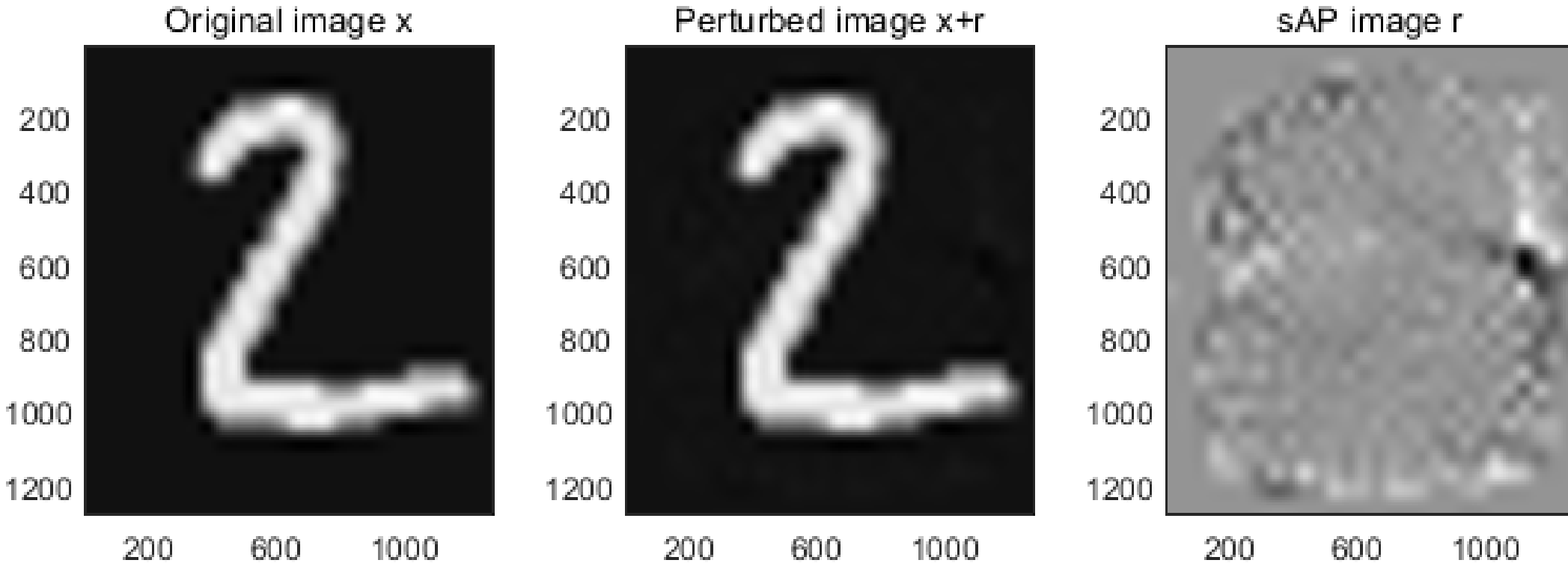}
		\end{minipage}
	}
\hfill 
	\subfloat{
		\begin{minipage}[b]{.45\linewidth}
			\centering
			\includegraphics[width=1.05\textwidth,height=0.3\textwidth]{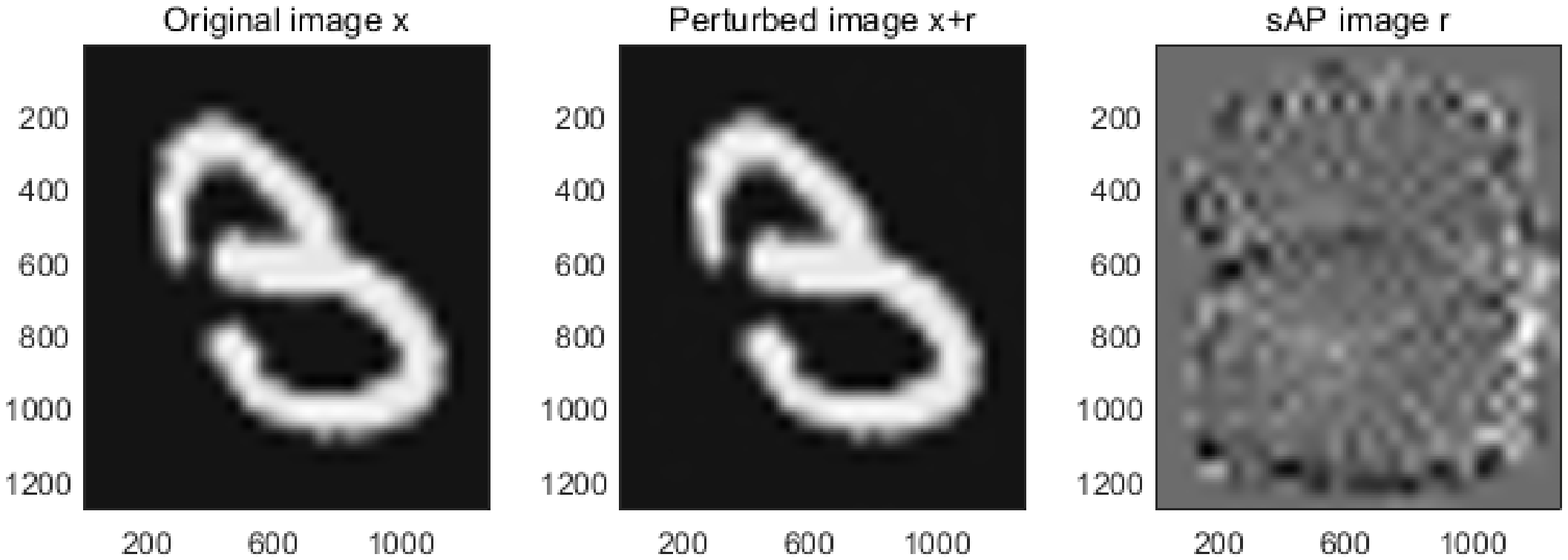}
		\end{minipage}
	}
\newline
	\subfloat{
		\begin{minipage}[b]{.45\linewidth}
			\centering
			\includegraphics[width=1.05\textwidth,height=0.3\textwidth]{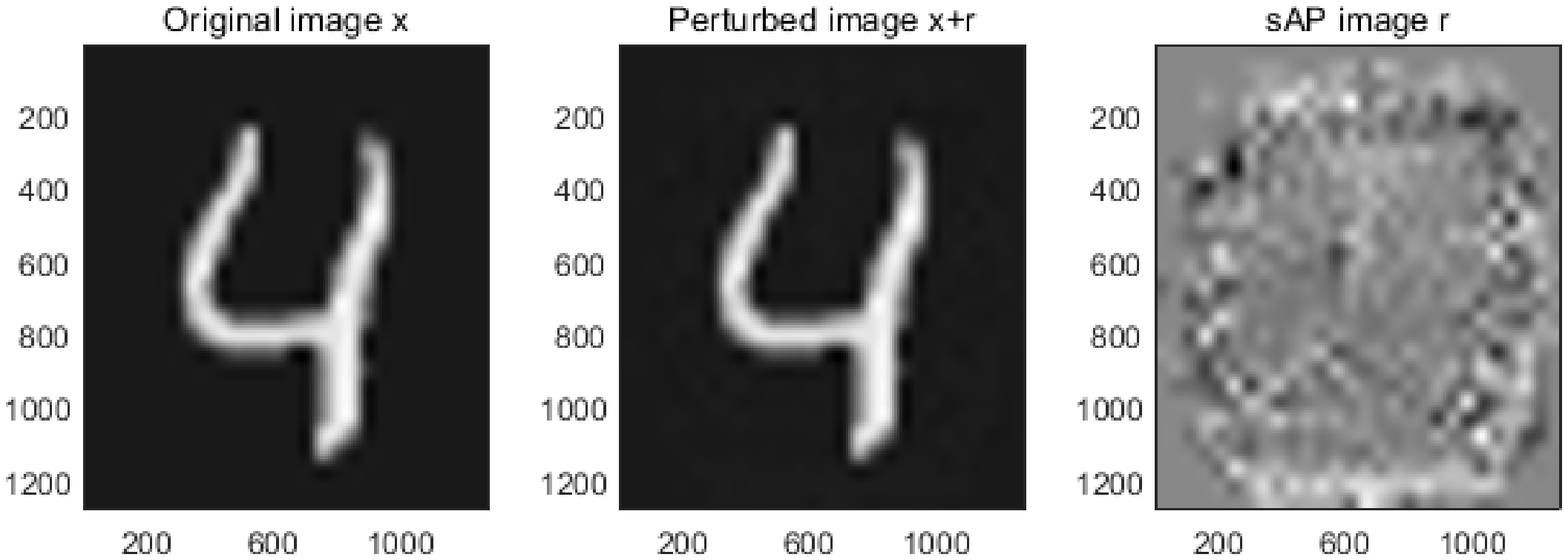}
		\end{minipage}
	}
\hfill 
	\subfloat{
		\begin{minipage}[b]{.45\linewidth}
			\centering
			\includegraphics[width=1.05\textwidth,height=0.3\textwidth]{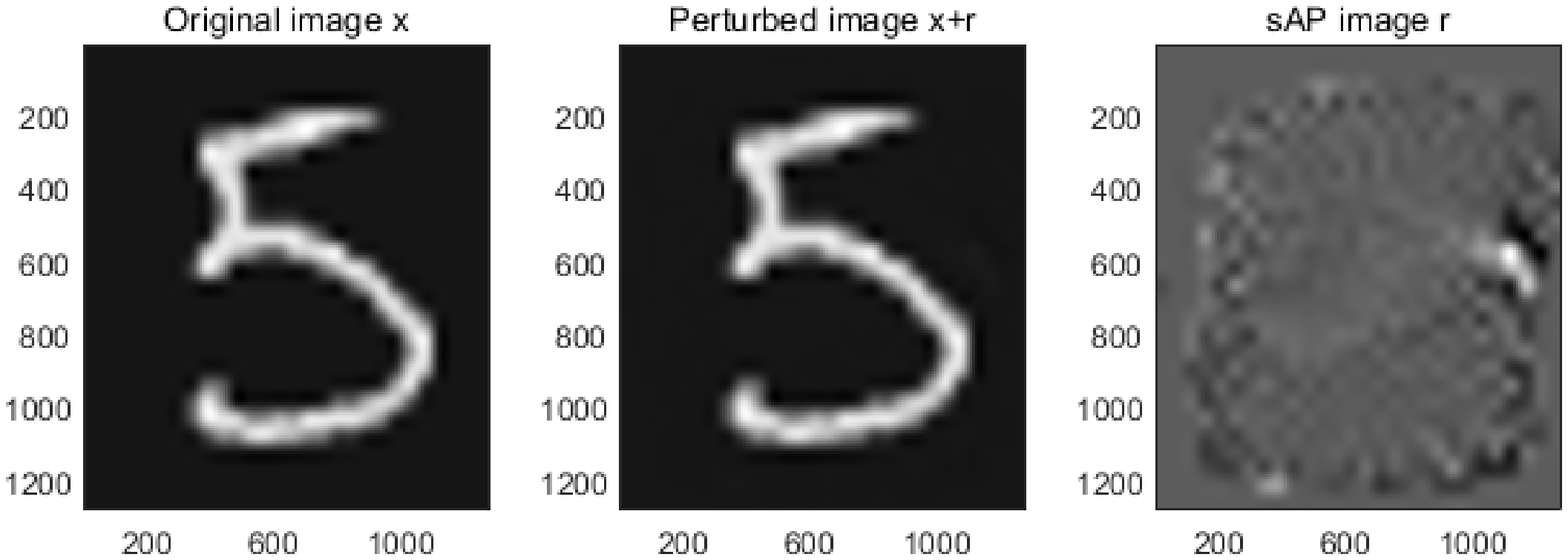}
		\end{minipage}
	}
\newline
	\subfloat{
		\begin{minipage}[b]{.45\linewidth}
			\centering
			\includegraphics[width=1.05\textwidth,height=0.3\textwidth]{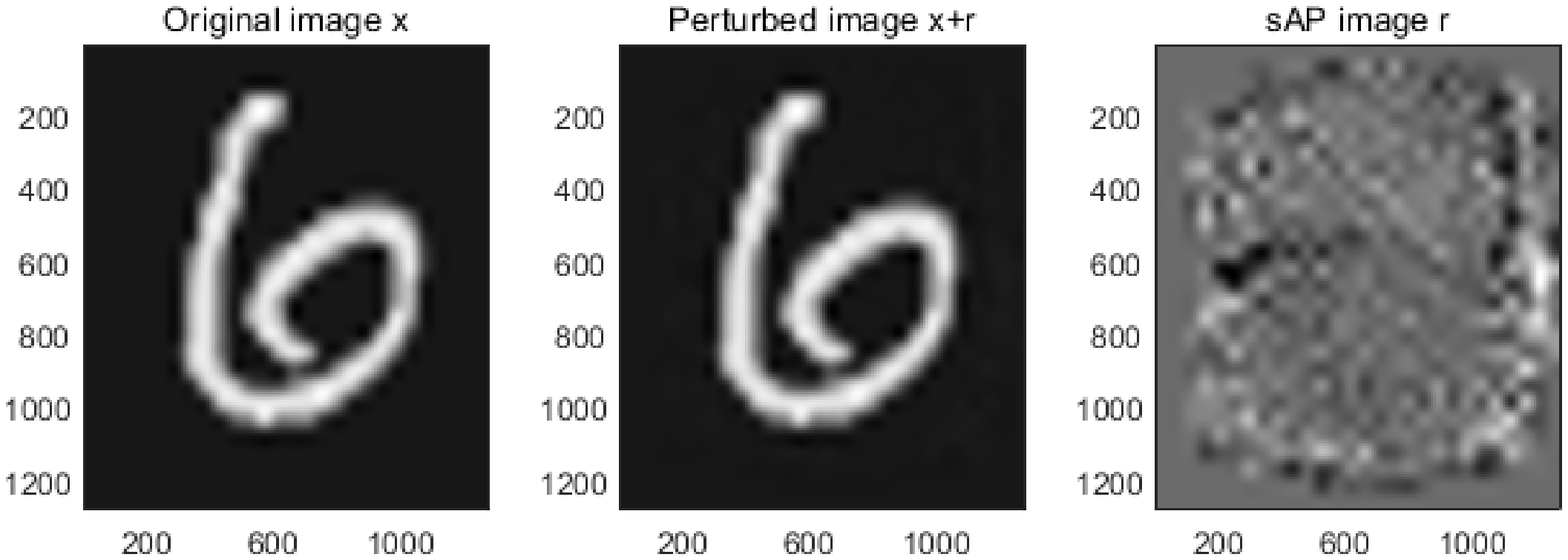}
		\end{minipage}
	}
\hfill 
	\subfloat{
		\begin{minipage}[b]{.45\linewidth}
			\centering
			\includegraphics[width=1.05\textwidth,height=0.3\textwidth]{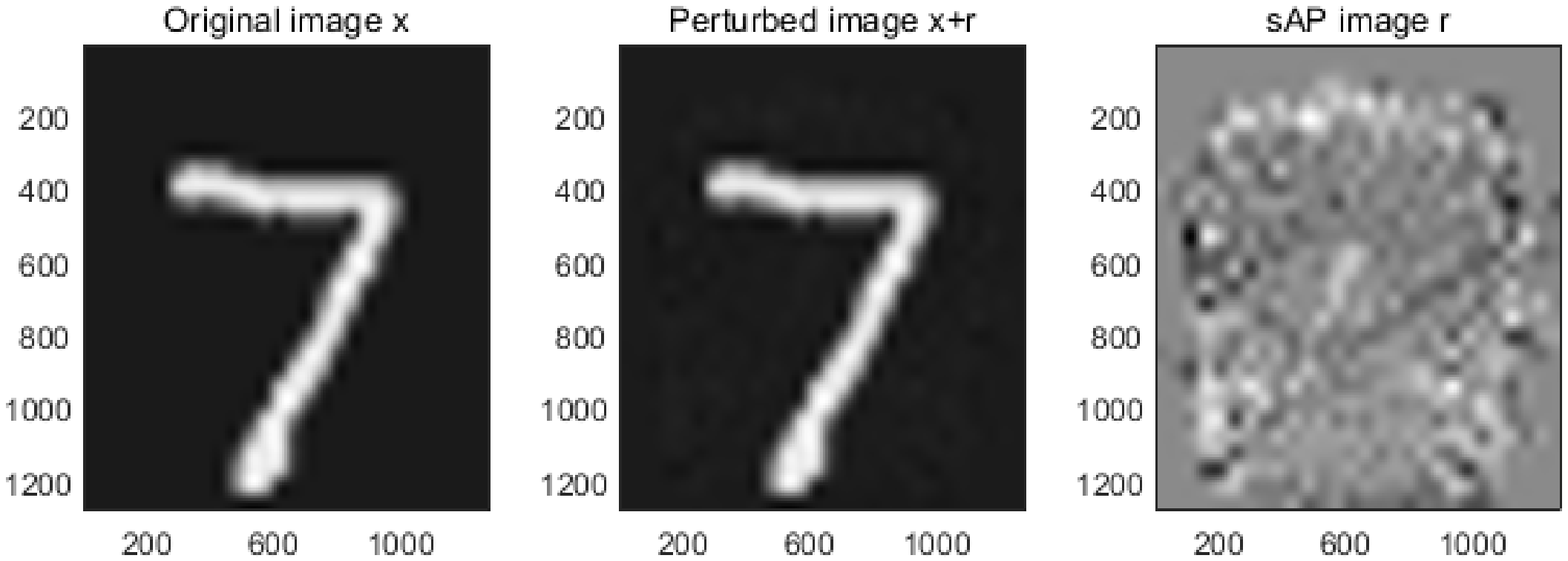}
		\end{minipage}
	}
\newline
	\subfloat{
		\begin{minipage}[b]{.45\linewidth}
			\centering
			\includegraphics[width=1.05\textwidth,height=0.3\textwidth]{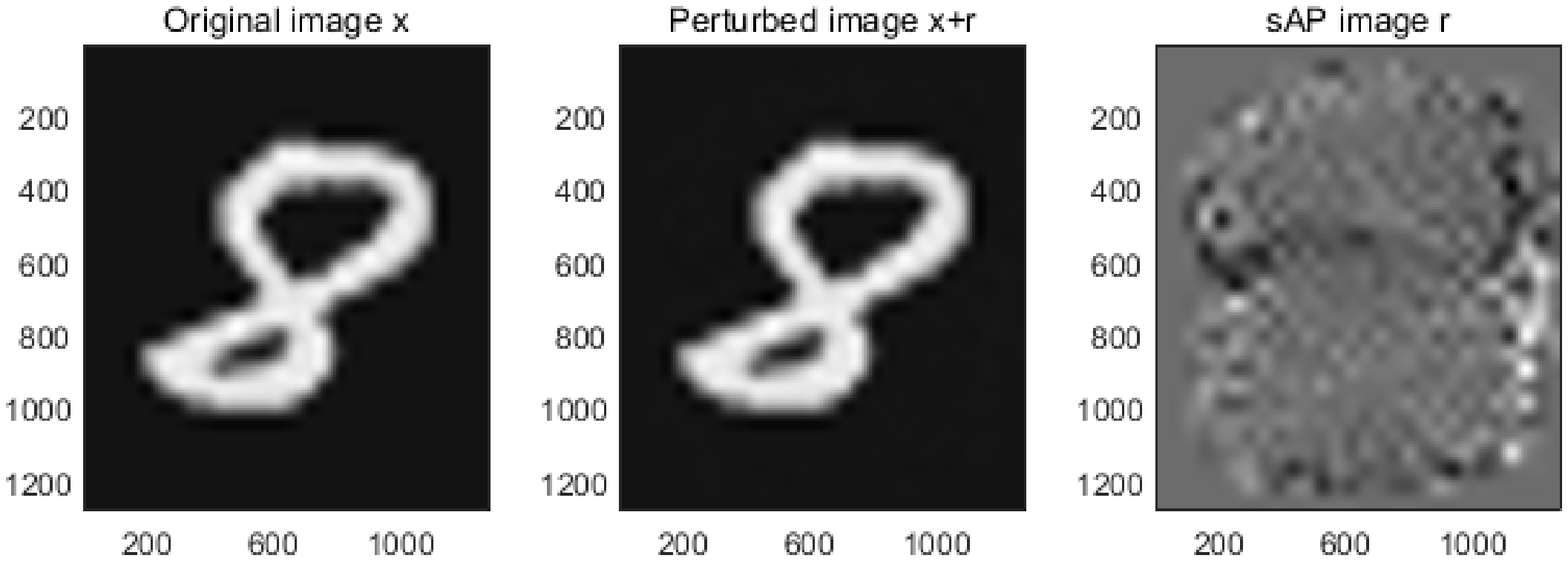}
		\end{minipage}
	}
\hfill 
	\subfloat{
		\begin{minipage}[b]{.45\linewidth}
			\centering
			\includegraphics[width=1.05\textwidth,height=0.3\textwidth]{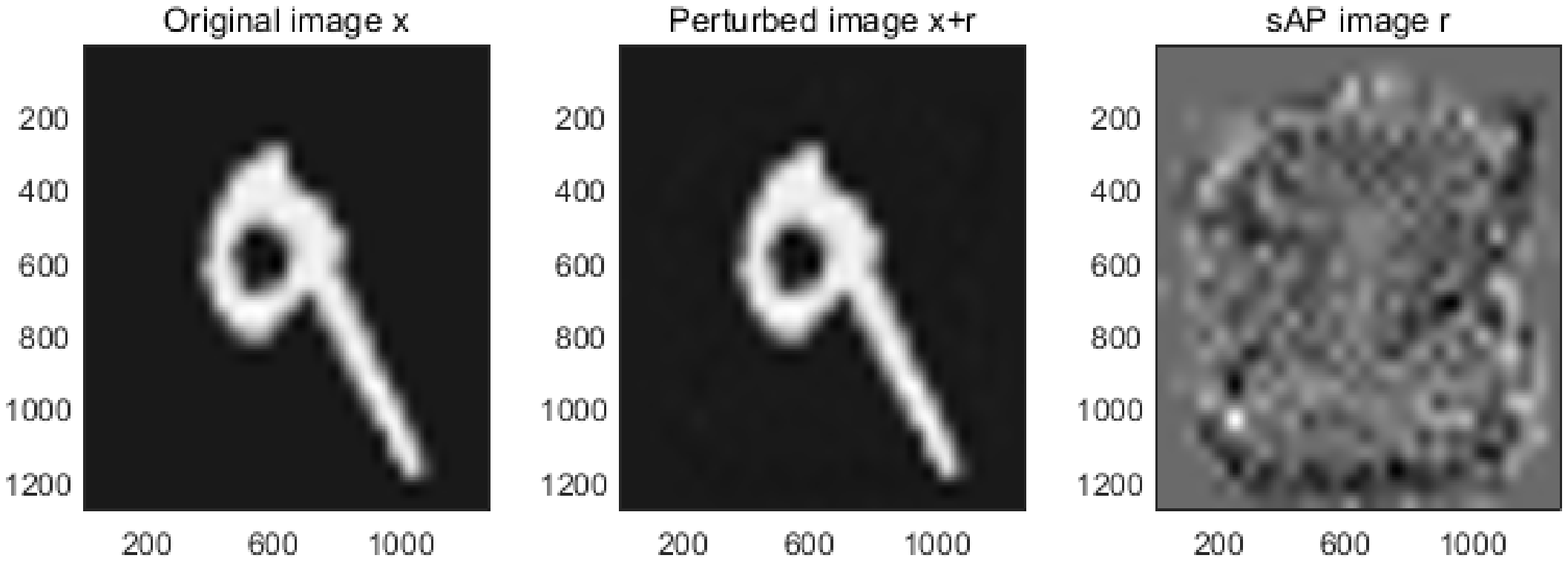}
		\end{minipage}
	}
	\caption{The original images with class $  0, 1, 2, 3, 4, 5, 6, 7, 8, 9 $, the images that has been misclassified after being attacked with class $5,2,5,2,9,8,2,3,2,1$, and the  images of sAP. }
	\label{label_newfigure7}
\end{figure}
\subsection{Numerical experiments of multiclass linear SVMs}  We present our experiments on the MNIST dataset and CIFAR-10 dataset on multiclass linear SVMs.   For CIFAR-10, due to the limitation of the computational complexity of training model, we only select 5,000 training samples and 1,000 test samples from the original dataset.
First, we use LIBLINEAR to build a multiclass linear  SVM on the training set, and obtain the parameters $ w_l$,  $l \in [c]$ of the classifier.
Then we use formulas \cref{equ5}, \cref{equ7} and \cref{the5} to generate sAP, cuAP and uAP  respectively.
Finally, we calculate $ G_{\Omega, \hat{k}} $ of the uAP.
\subsubsection{Numerical experiments of sAP}
In \cref{label_newfigure7}, we give an example to compare the original image, the image that has been misclassified after being attacked, and the  image of sAP on the MNIST dataset. By selecting the average of 10 repeated experiments on the same MNIST dataset, we get that   the  CPU  time to train the multiclass classifier model is $ 35.78s $, and  the average CPU  time to generate sAP is only $3.12 \times 10^{-3}s $.
The process of generating sAP does not need iteration.
In \cref{label_newfigure7}, the original images class are $  0, 1, 2, 3, 4, 5, 6, 7, 8, 9 $. When we add sAP to the original images, the perturbed image class are $5,2,5,2,9,8,2,3,2,1$, but in human eyes, the class of the  perturbed images have not changed.  The average norm of the data in the MNIST  dataset is 9.30, the  average norm of sAP is 0.20, and SNR is 35.19.
\begin{figure}[h]
	\centering
	%	\subfigure{
	%		\begin{minipage}[b]{.3\linewidth}
	%			\centering
	%			\includegraphics[scale=0.25]{Figure/figure4_9.eps}
	%		\end{minipage}
	%	}
	\subfloat{
		\begin{minipage}[b]{.45\linewidth}
			\centering
			\includegraphics[width=1.05\textwidth,height=0.3\textwidth]{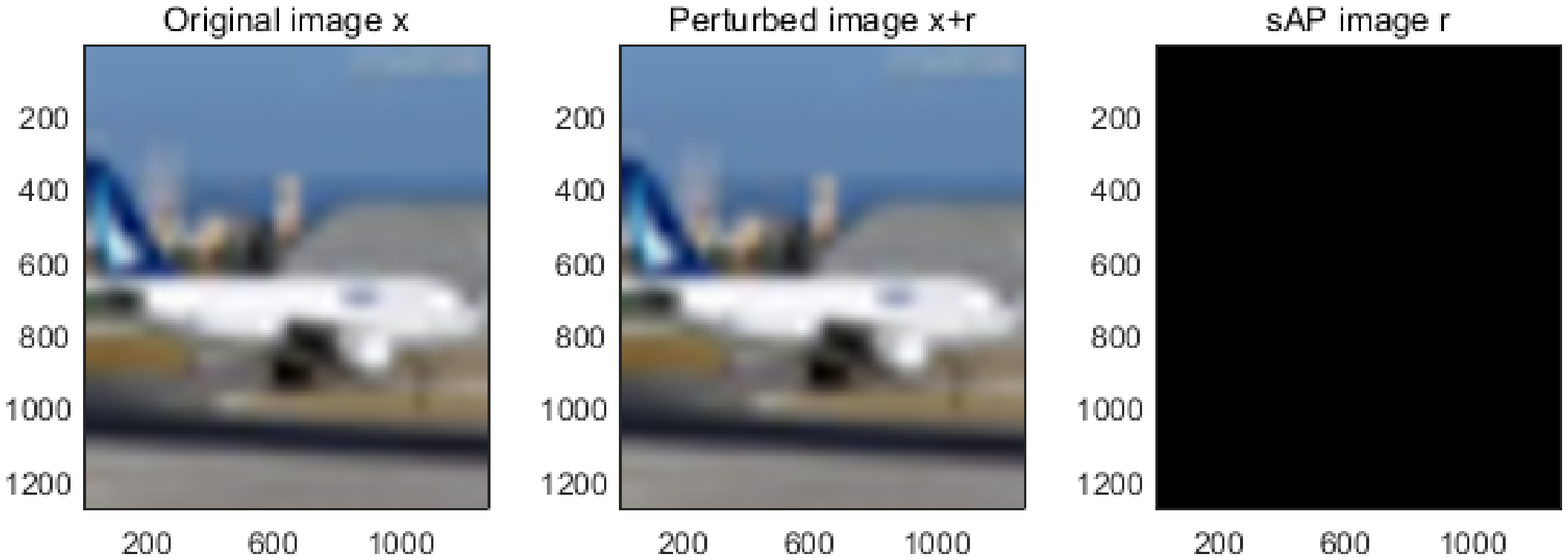}
		\end{minipage}
	}
\hfill 
	\subfloat{
		\begin{minipage}[b]{.45\linewidth}
			\centering
			\includegraphics[width=1.05\textwidth,height=0.3\textwidth]{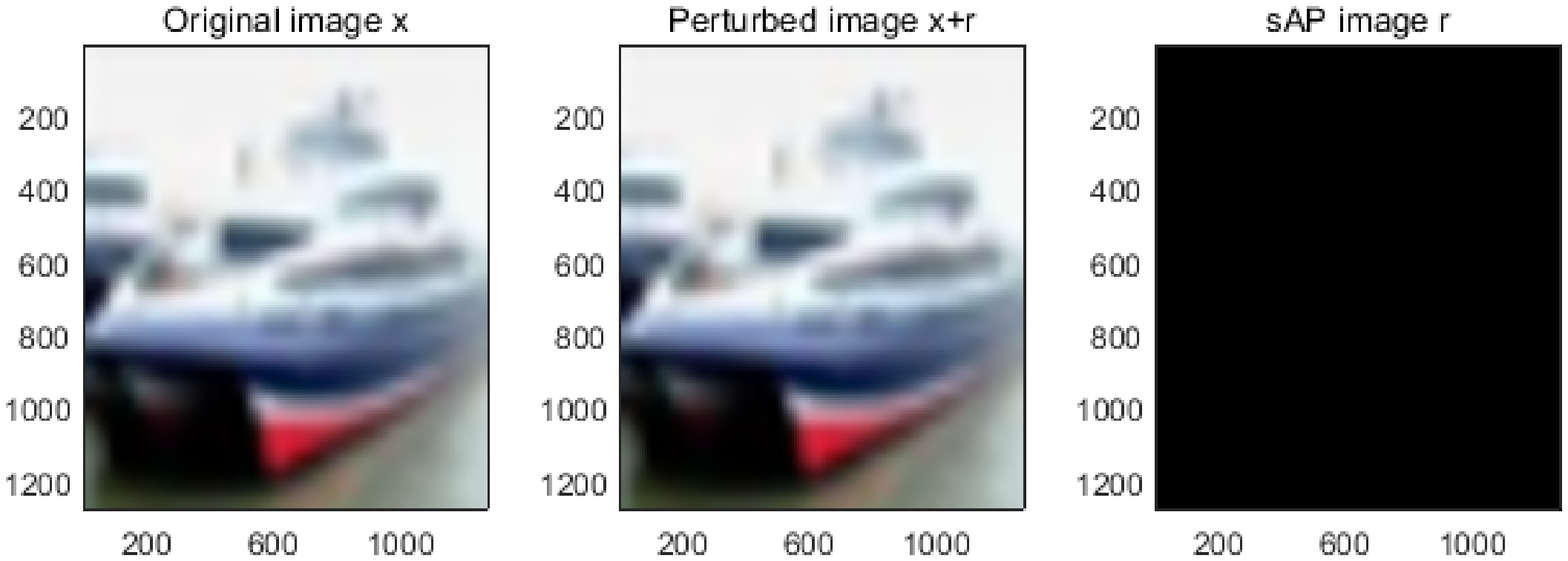}
		\end{minipage}
	}
\newline
	\subfloat{
		\begin{minipage}[b]{.45\linewidth}
			\centering
			\includegraphics[width=1.05\textwidth,height=0.3\textwidth]{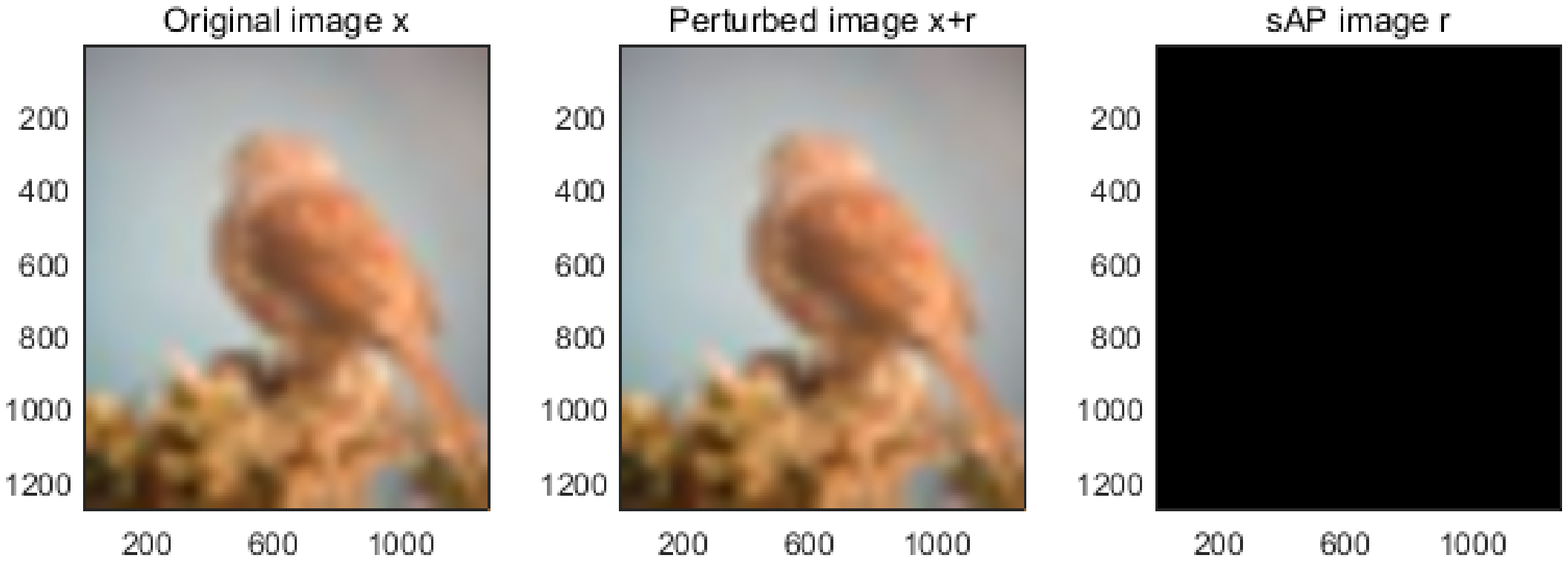}
		\end{minipage}
	}
\hfill 
	\subfloat{
		\begin{minipage}[b]{.45\linewidth}
			\centering
			\includegraphics[width=1.05\textwidth,height=0.3\textwidth]{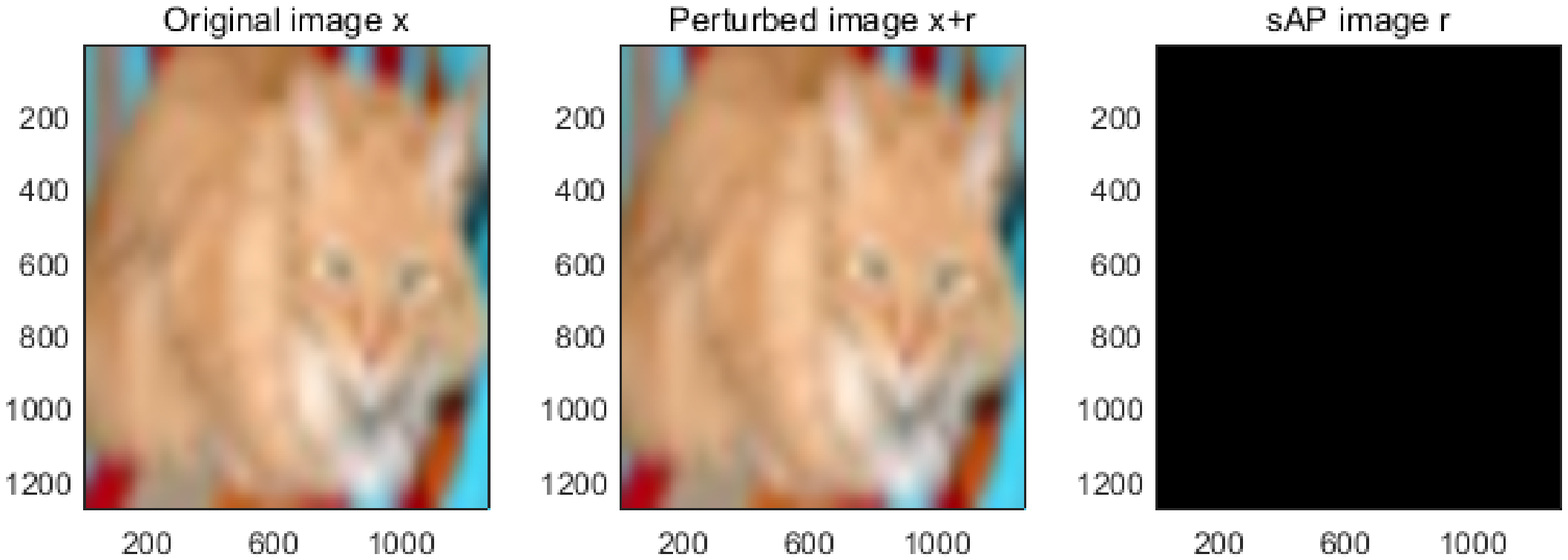}
		\end{minipage}
	}
\newline
	\subfloat{
		\begin{minipage}[b]{.45\linewidth}
			\centering
			\includegraphics[width=1.05\textwidth,height=0.3\textwidth]{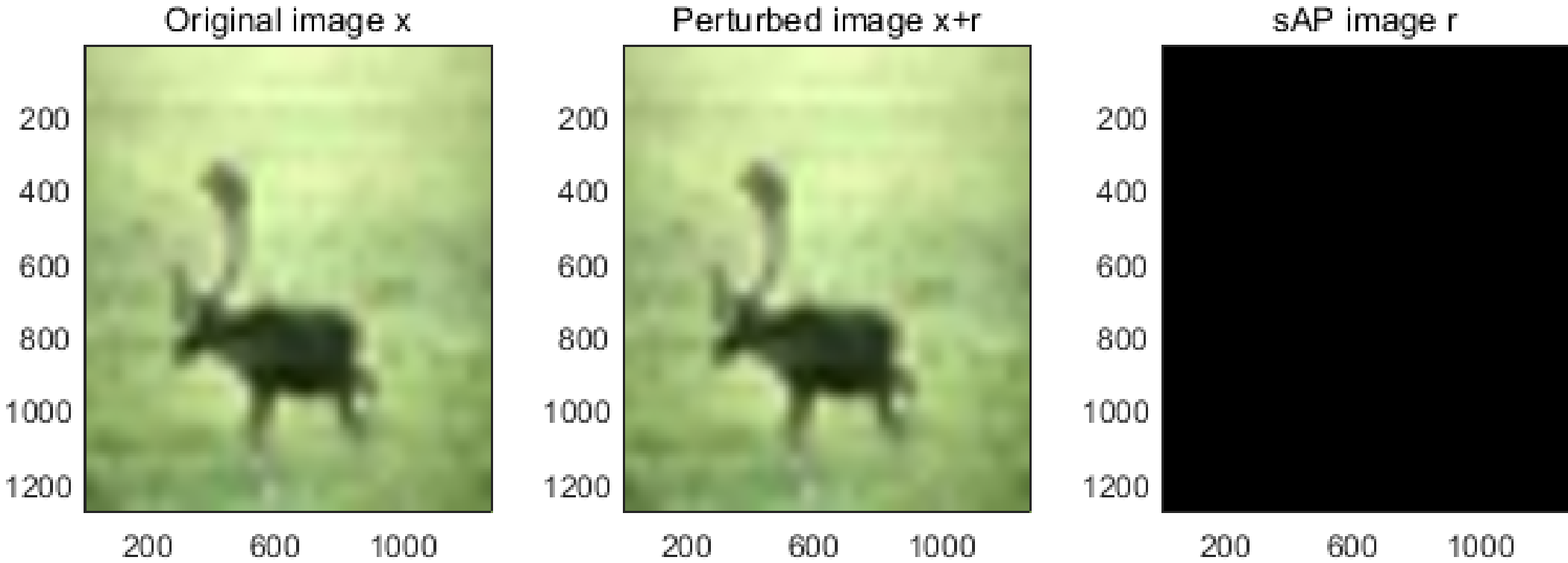}
		\end{minipage}
	}
\hfill 
	\subfloat{
		\begin{minipage}[b]{.45\linewidth}
			\centering
			\includegraphics[width=1.05\textwidth,height=0.3\textwidth]{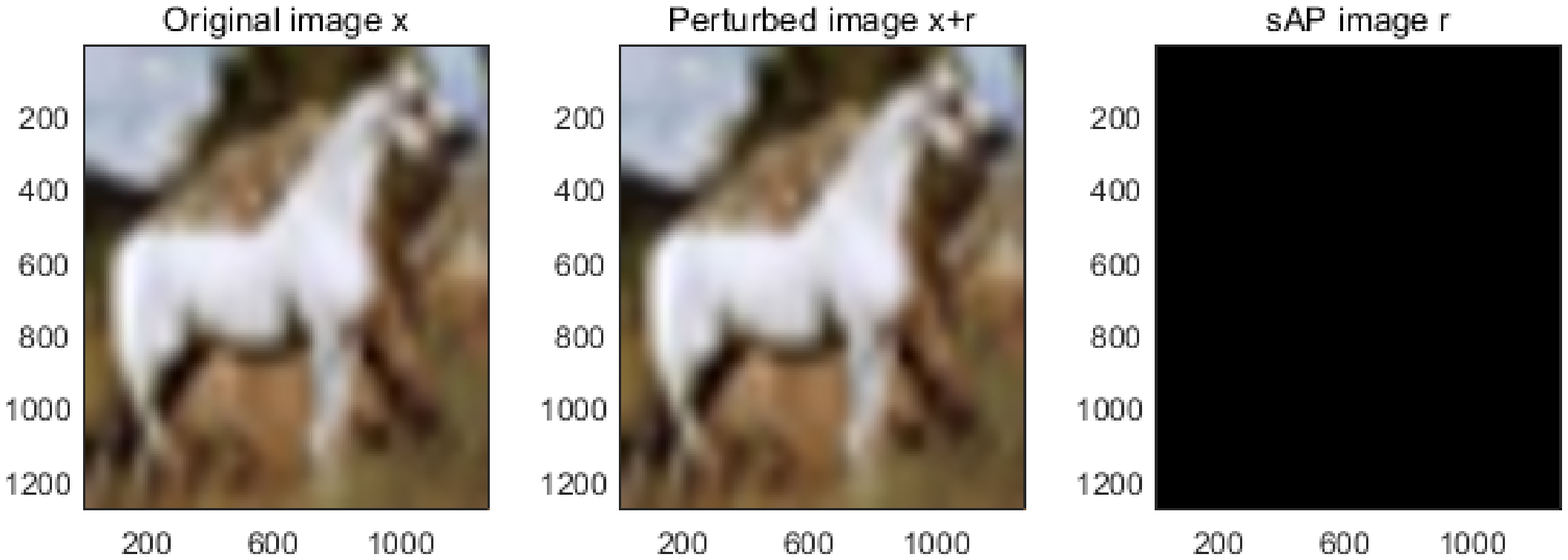}
		\end{minipage}
	}
	\caption{The original images with class airplane,  ship, bird, cat, deer and horse, the images that has been misclassified after being attacked with class ship, automobile, cat, horse, bird and deer, and the  images of sAP. }
	\label{label_newfigure8}
\end{figure}

Similarly, on the CIFAR-10 dataset, we get  that the CPU time to train the multiclass  classifier model is $ 2.01 \times 10^{3}s $, and    the average CPU  time to generate sAP is only $1.44 \times 10^{-3}s $.
In \cref{label_newfigure8}, the original images class are airplane,  ship, bird, cat, deer and horse. When we add the sAP to the original images, the perturbed image class are ship, automobile, cat, horse, bird and deer. It can be seen that animal images become another class of animal images after being attacked, and so are vehicle images.
This is because our generated sAP is based on the idea of attacking images to the most similar class, and our method generates the smallest sAP. We obtain that the average norm of the data in the CIFAR-10  dataset is 29.97, the average norm of sAP is 0.07, and SNR is 55.79.
\subsubsection{Numerical experiments of cuAP}

In \cref{label_newfigure9}, 
we take numerical experiments on the  MNIST dataset with class 6, and we get the CPU time to generate cuAP is only  $5.11s $.
\cref{label_newfigure9_1} show the original clean images with class 6. 
If they are attacked by the same cuAP shown in \cref{label_newfigure9_2}, the perturbed new images will be displayed in the corresponding position in  \cref{label_newfigure9_3}, and the classifier will misclassify them as the number 8. The norm of cuAP is 1, and SNR is 19.22.
\begin{figure}[H]
	\centering
	\subfloat[The original images with class  6.]
	{
		\begin{minipage}[b]{.23\linewidth}
			\centering
			\includegraphics[scale=0.11]{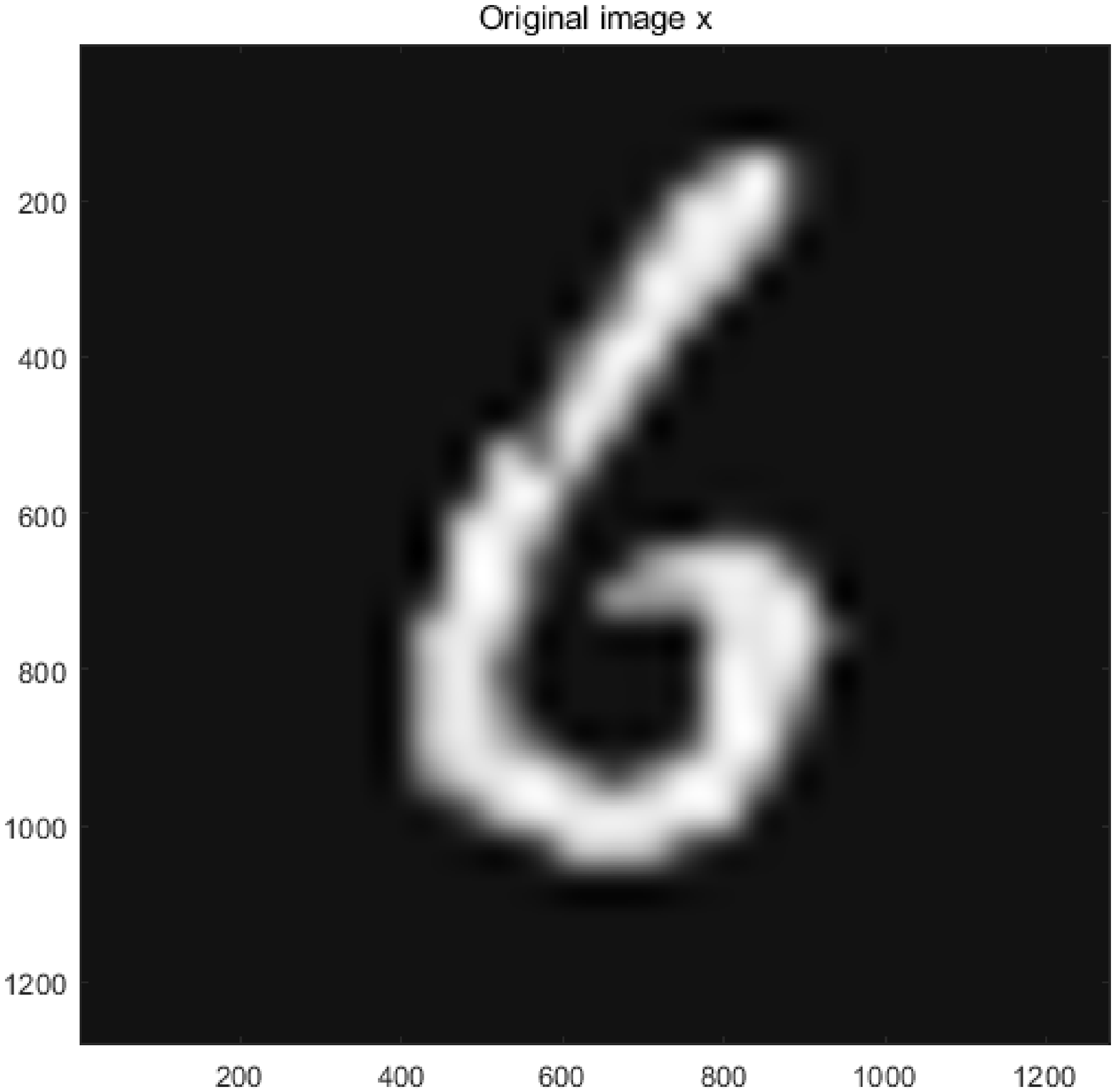} \\
			\includegraphics[scale=0.11]{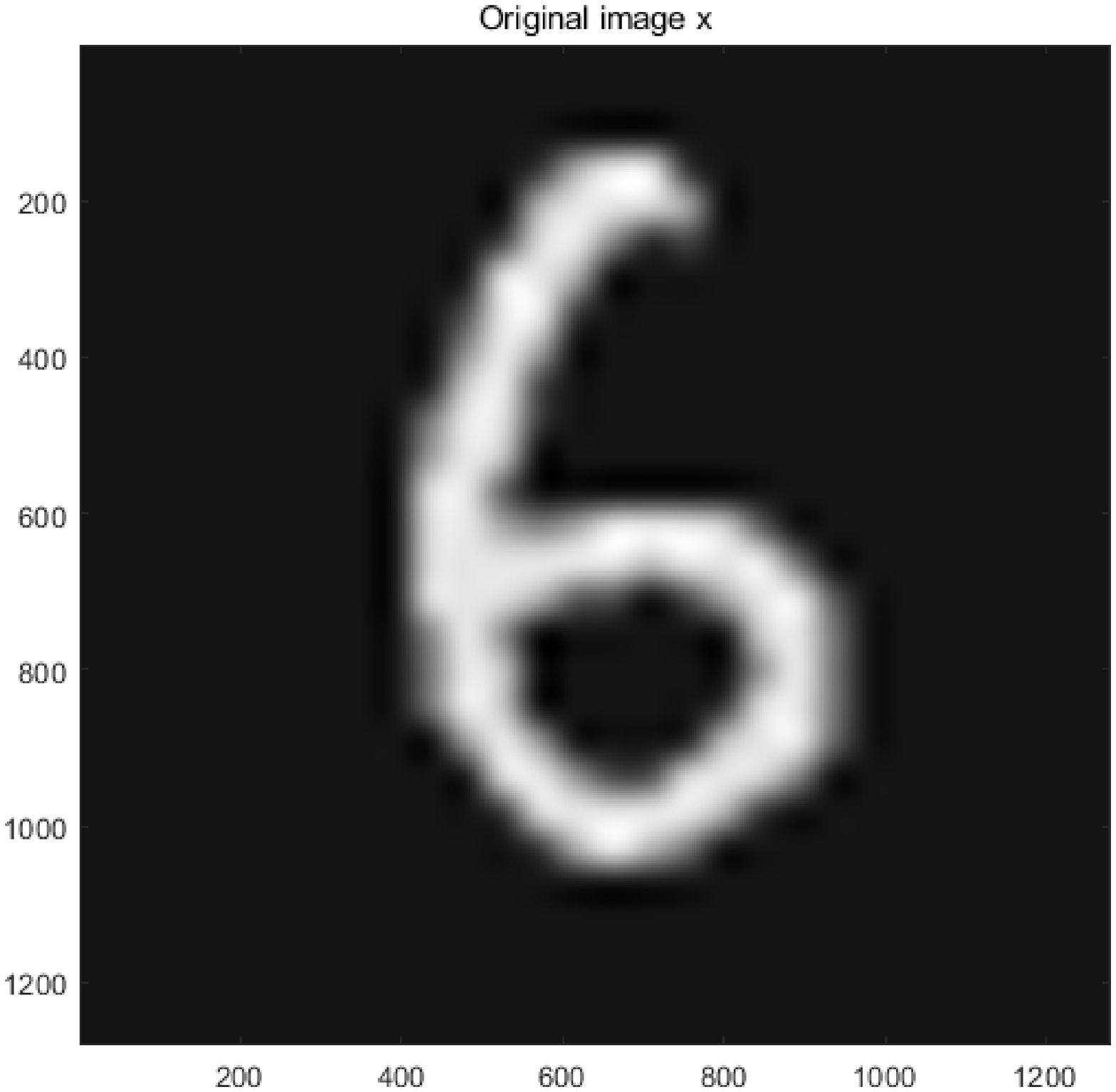} \\
			\includegraphics[scale=0.11]{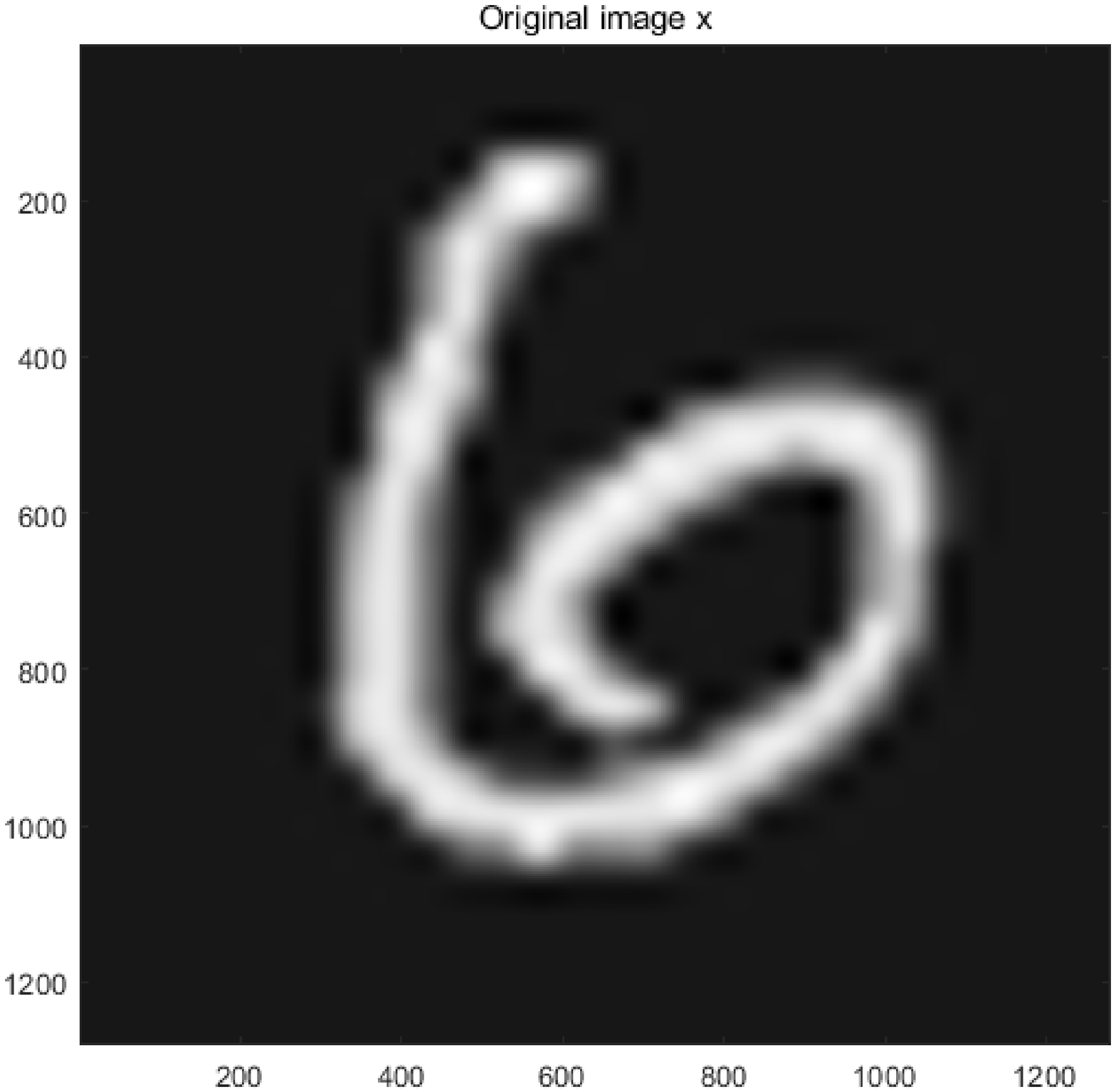}
		\end{minipage}
	\label{label_newfigure9_1}
	}
	\subfloat[The image of the unique cuAP on  the subset of MNIST dataset with class 6.]
	{
		\begin{minipage}[b]{.3\linewidth}
			\centering
			\includegraphics[scale=0.13]{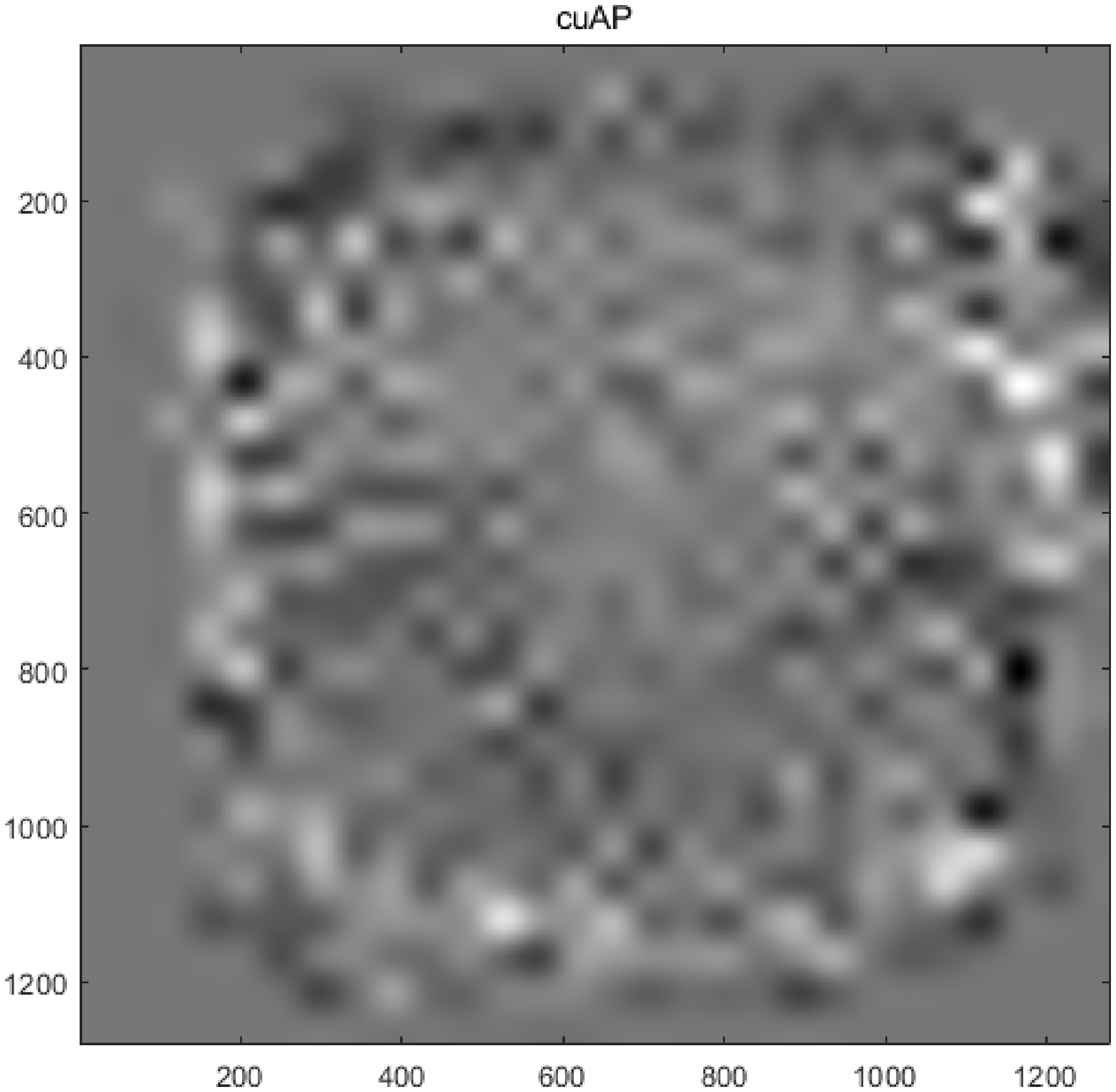}\vspace{20mm}
		\end{minipage}
	\label{label_newfigure9_2}
	}
	\subfloat[The  perturbed  images with class 8.]
	{
		\begin{minipage}[b]{.23\linewidth}
			\centering
			\includegraphics[scale=0.11]{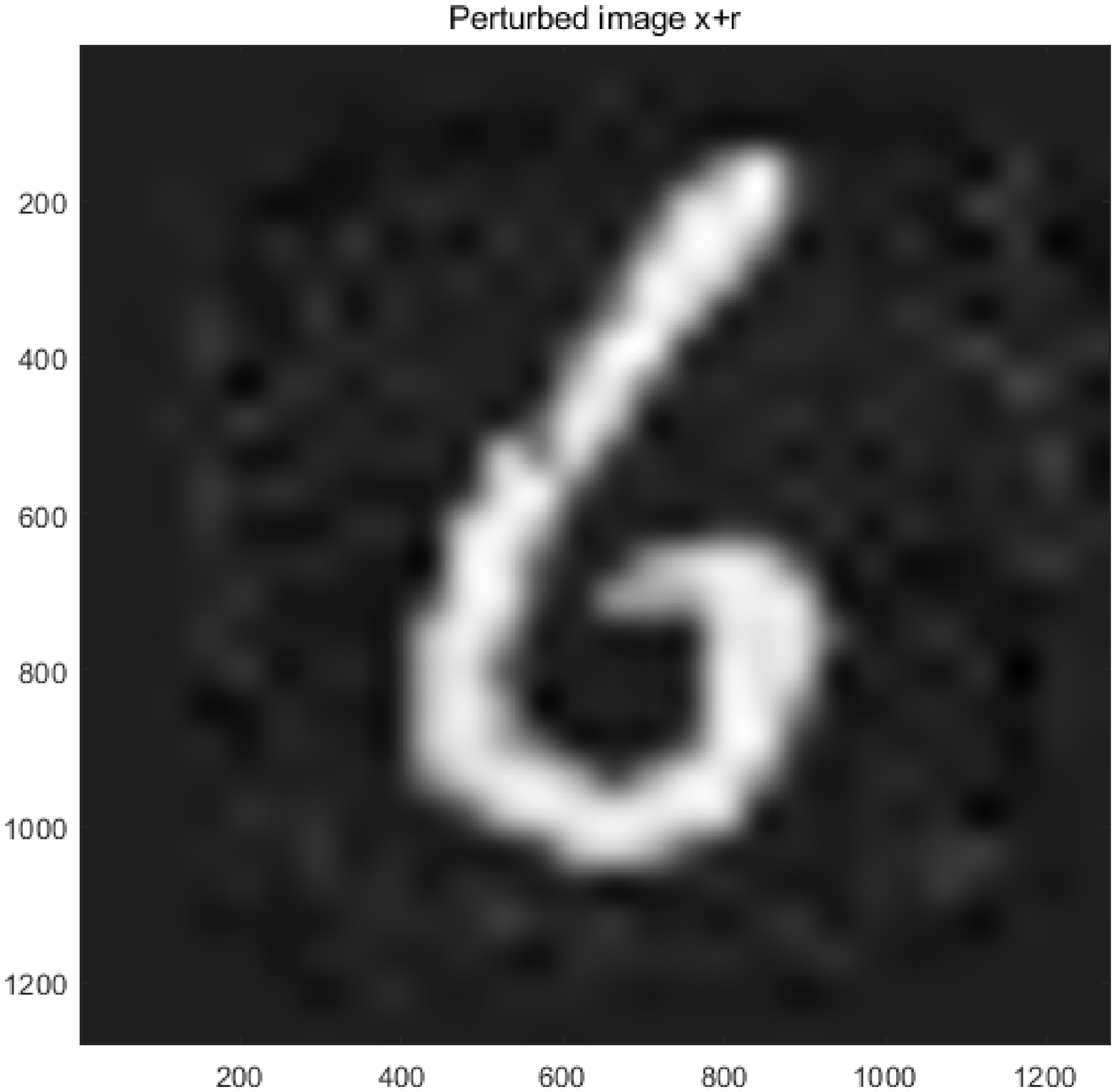} \\
			\includegraphics[scale=0.11]{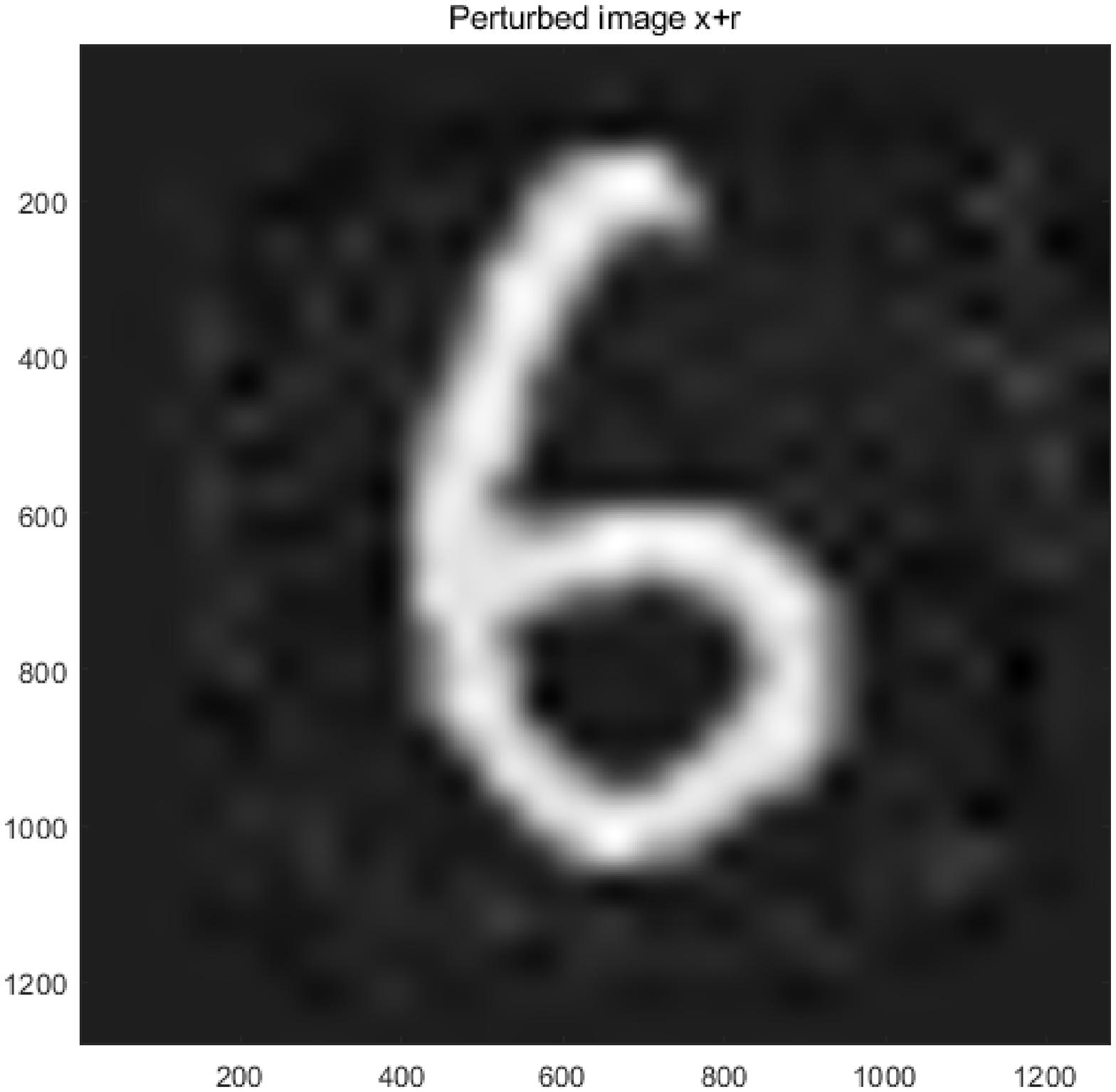} \\
			\includegraphics[scale=0.11]{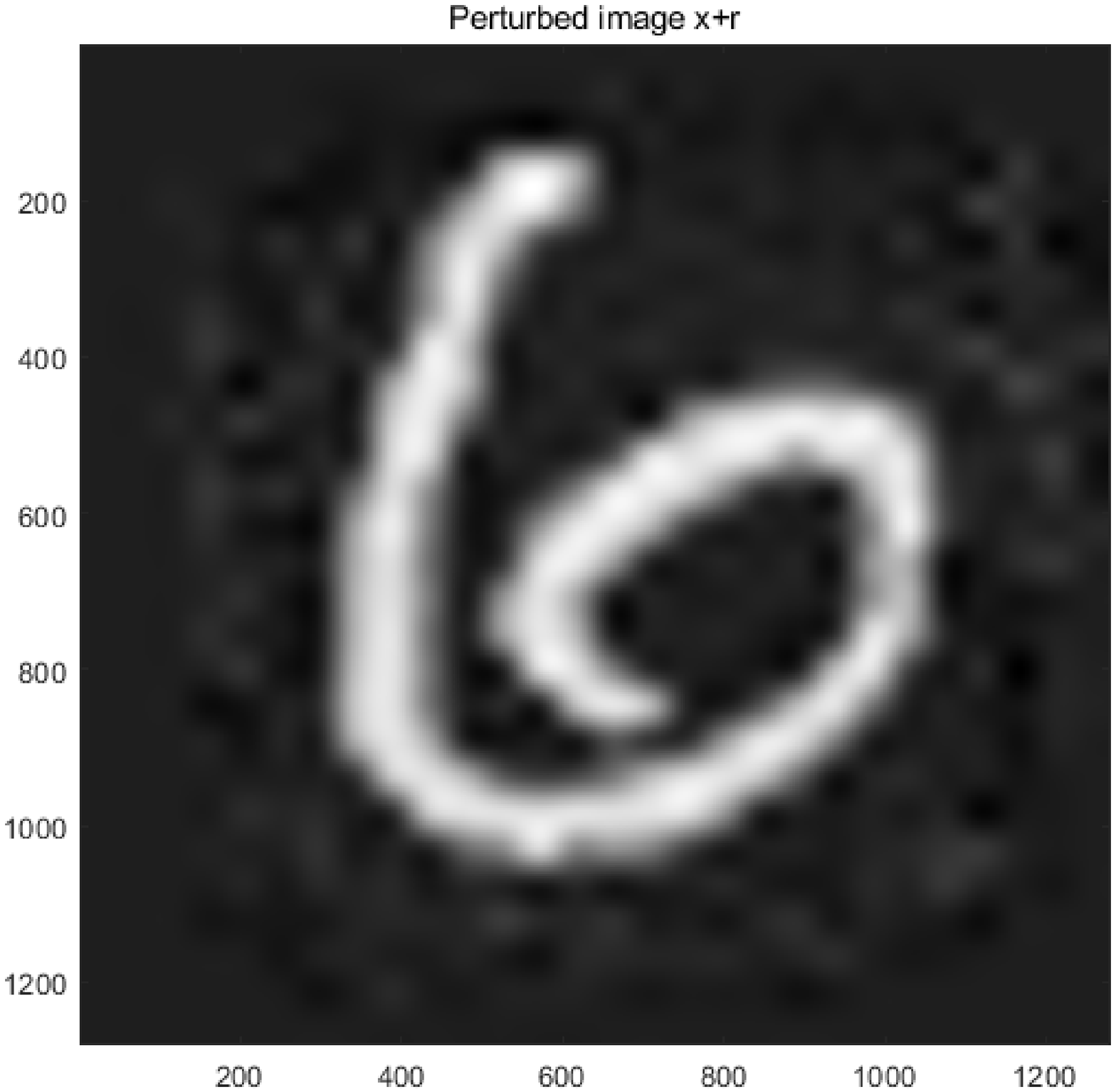}
		\end{minipage}
	\label{label_newfigure9_3}
	}
	\caption{The original images, the  image of cuAP, and the images that have been misclassified after being attacked, when $ \xi=1 $ on the MNIST dataset.}
	\label{label_newfigure9}
\end{figure}

\begin{figure}[h]
	\centering
	\subfloat[The original images with class horse.]
	{
		\begin{minipage}[b]{.23\linewidth}
			\centering
			\includegraphics[scale=0.11]{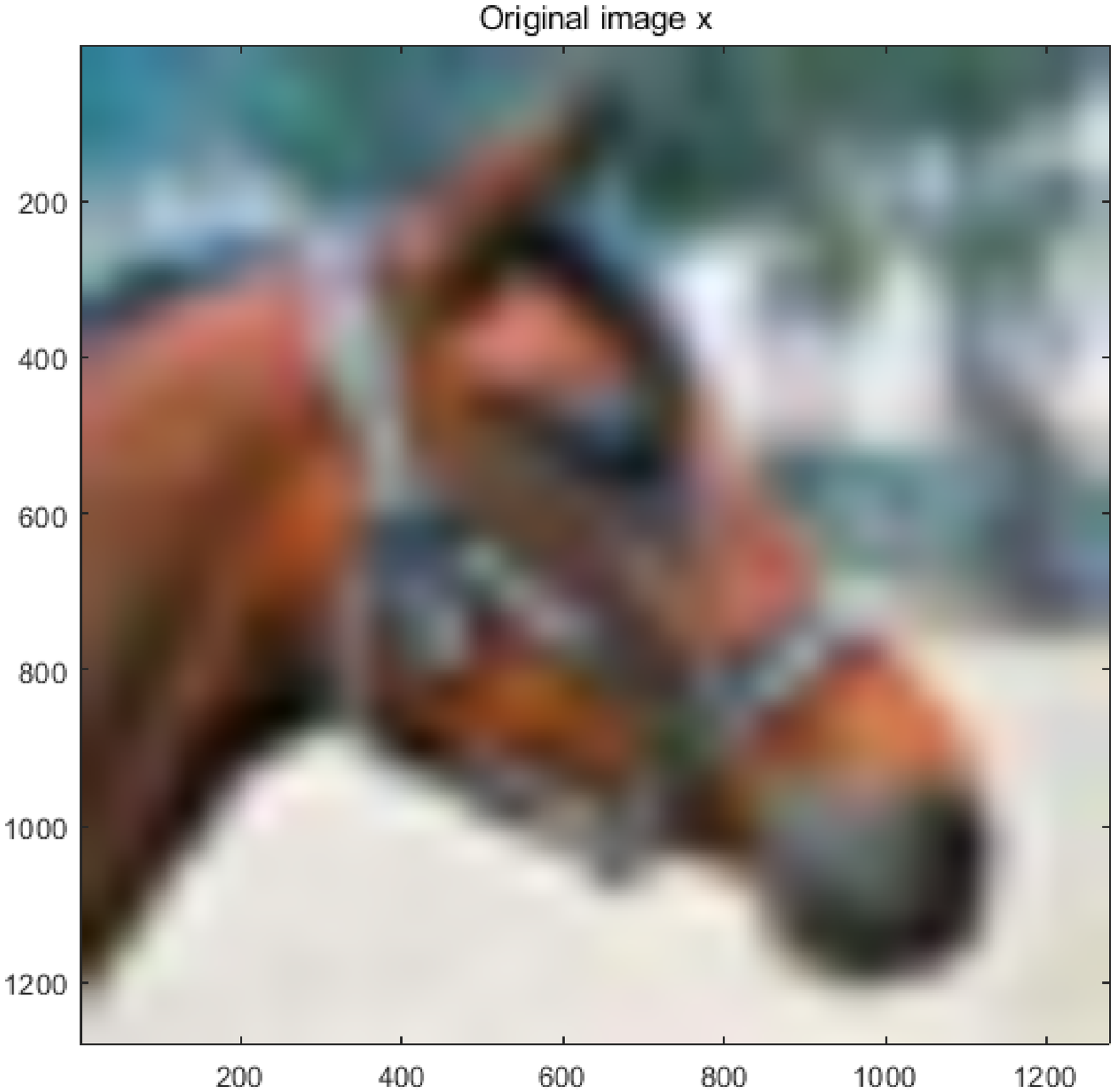} \\
			\includegraphics[scale=0.11]{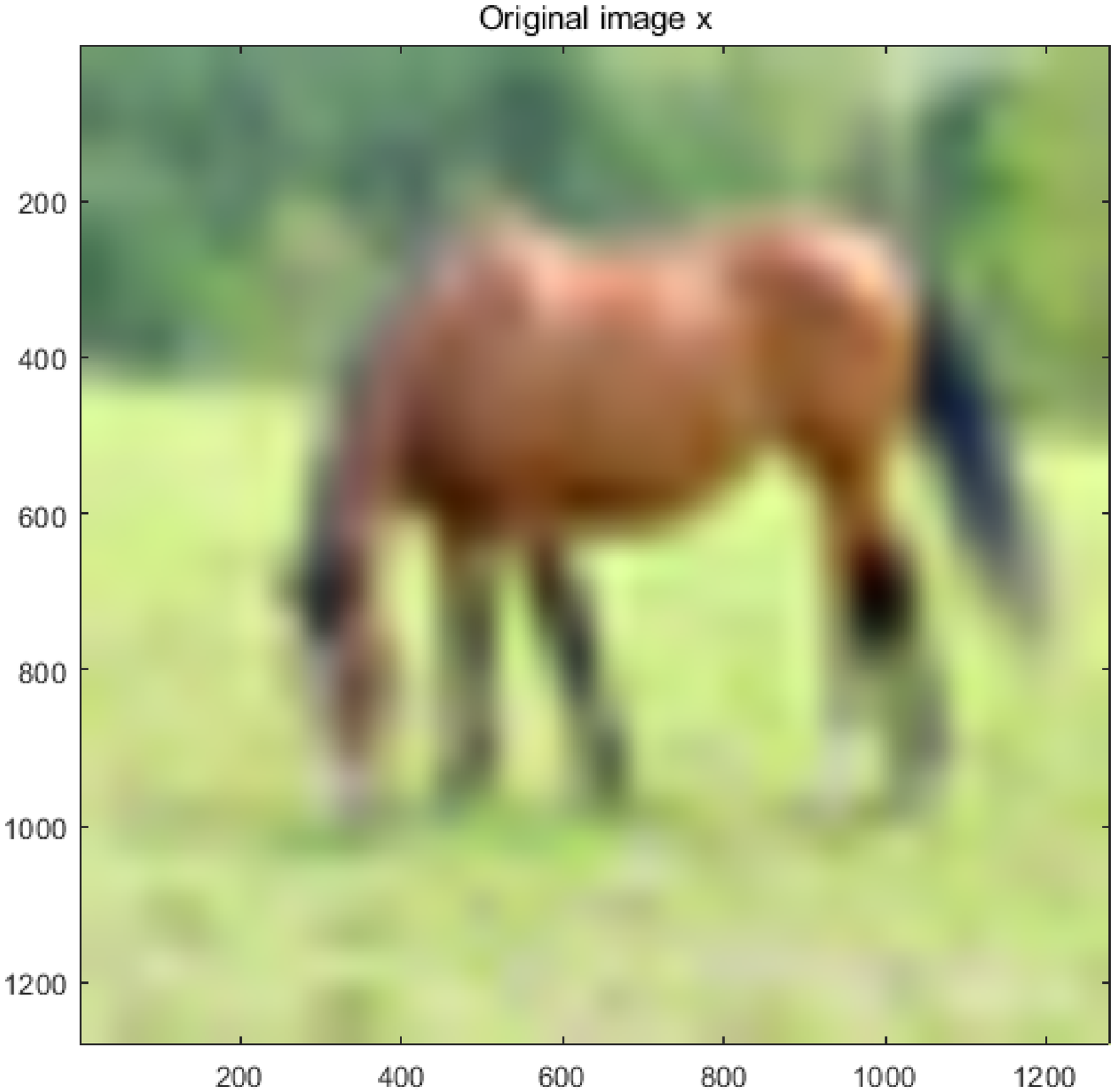} \\
			\includegraphics[scale=0.11]{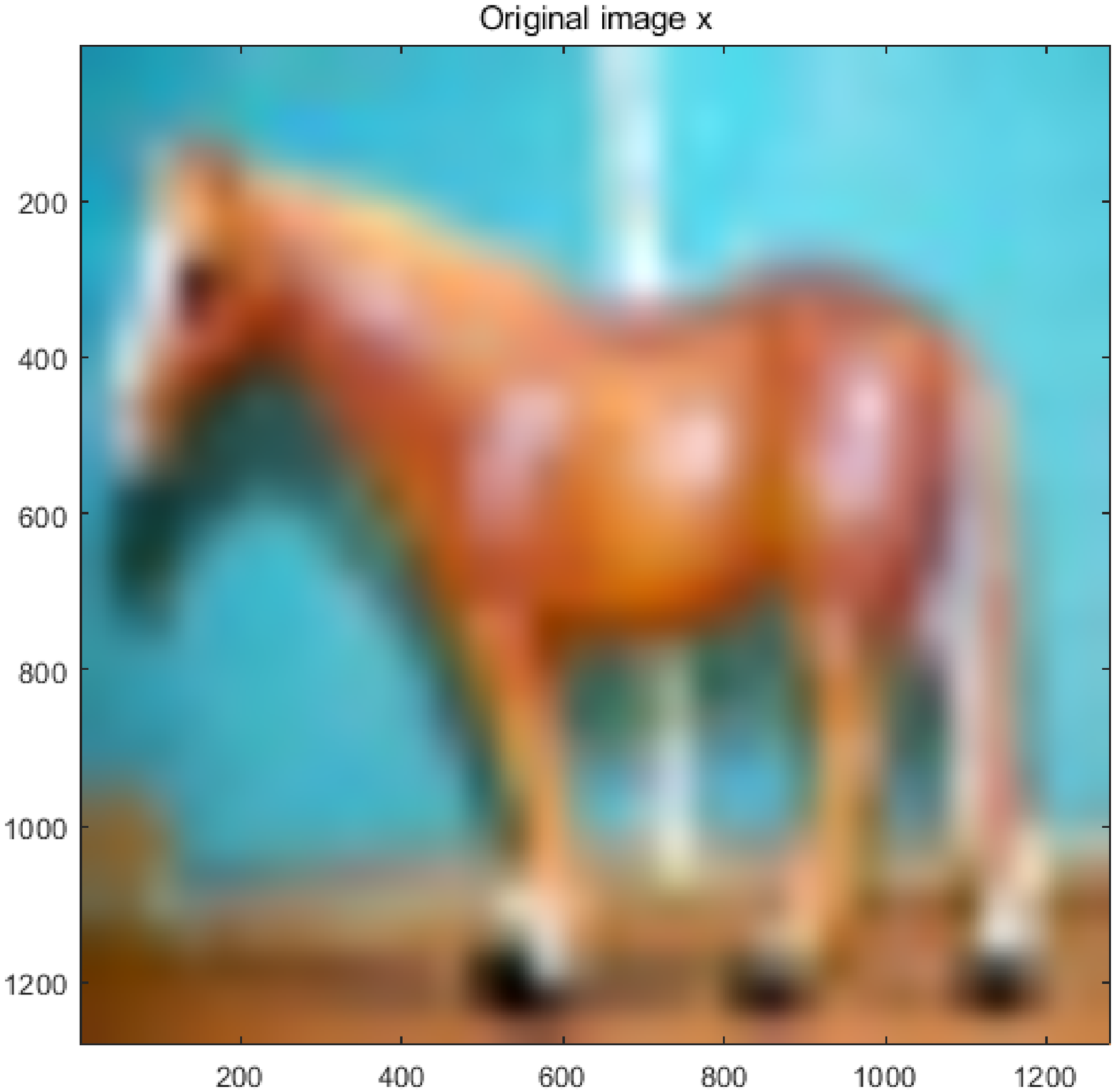}
		\end{minipage}
	\label{label_newfigure10_1}
	}
	\subfloat[The image of the unique cuAP on  the subset of CIFAR-10 dataset with class horse.]
	{
		\begin{minipage}[b]{.3\linewidth}
			\centering
			\includegraphics[scale=0.13]{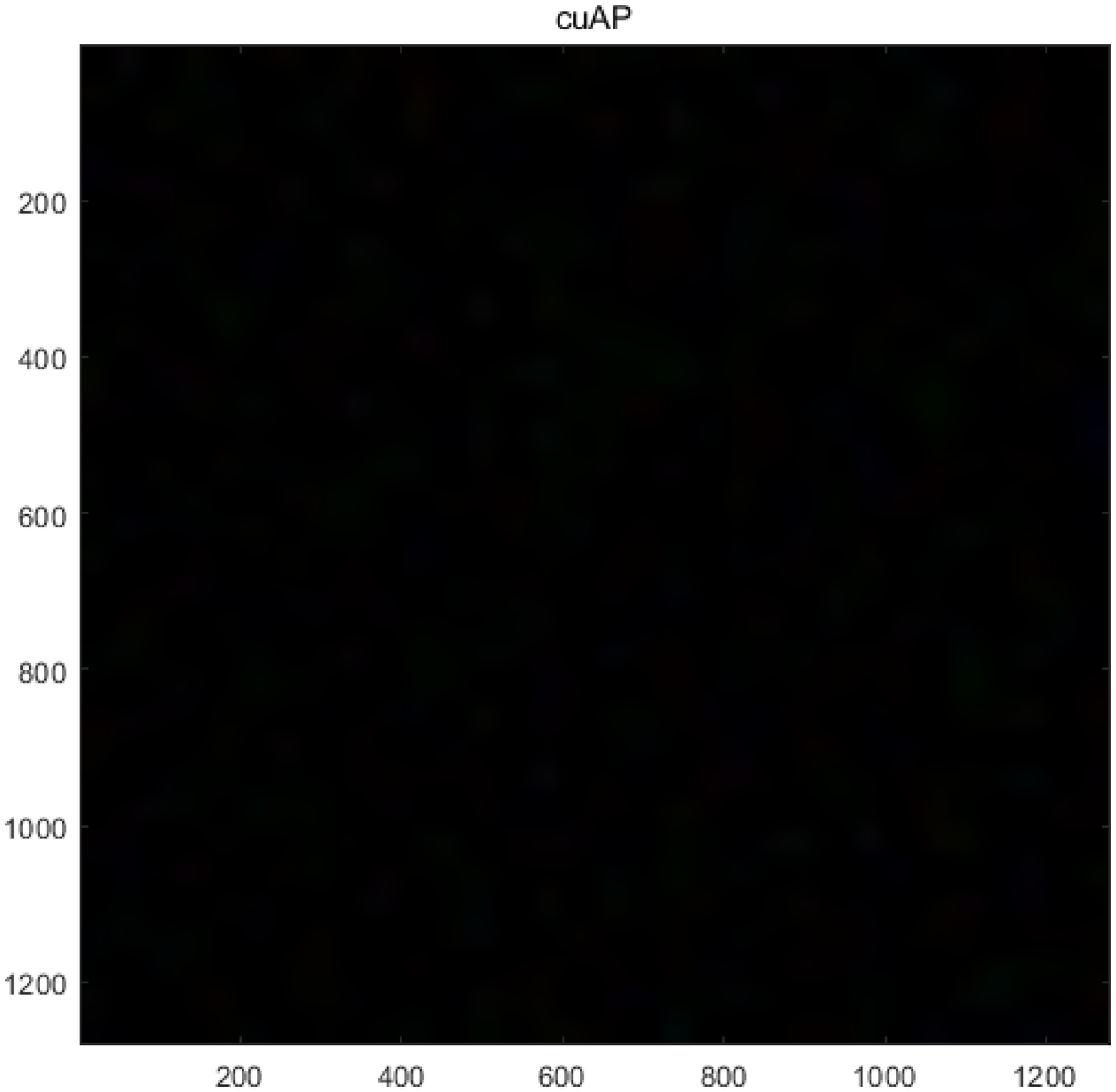}\vspace{20mm}
		\end{minipage}
	\label{label_newfigure10_2}
	}
	\subfloat[The  perturbed  images with class deer.]
	{
		\begin{minipage}[b]{.23\linewidth}
			\centering
			\includegraphics[scale=0.11]{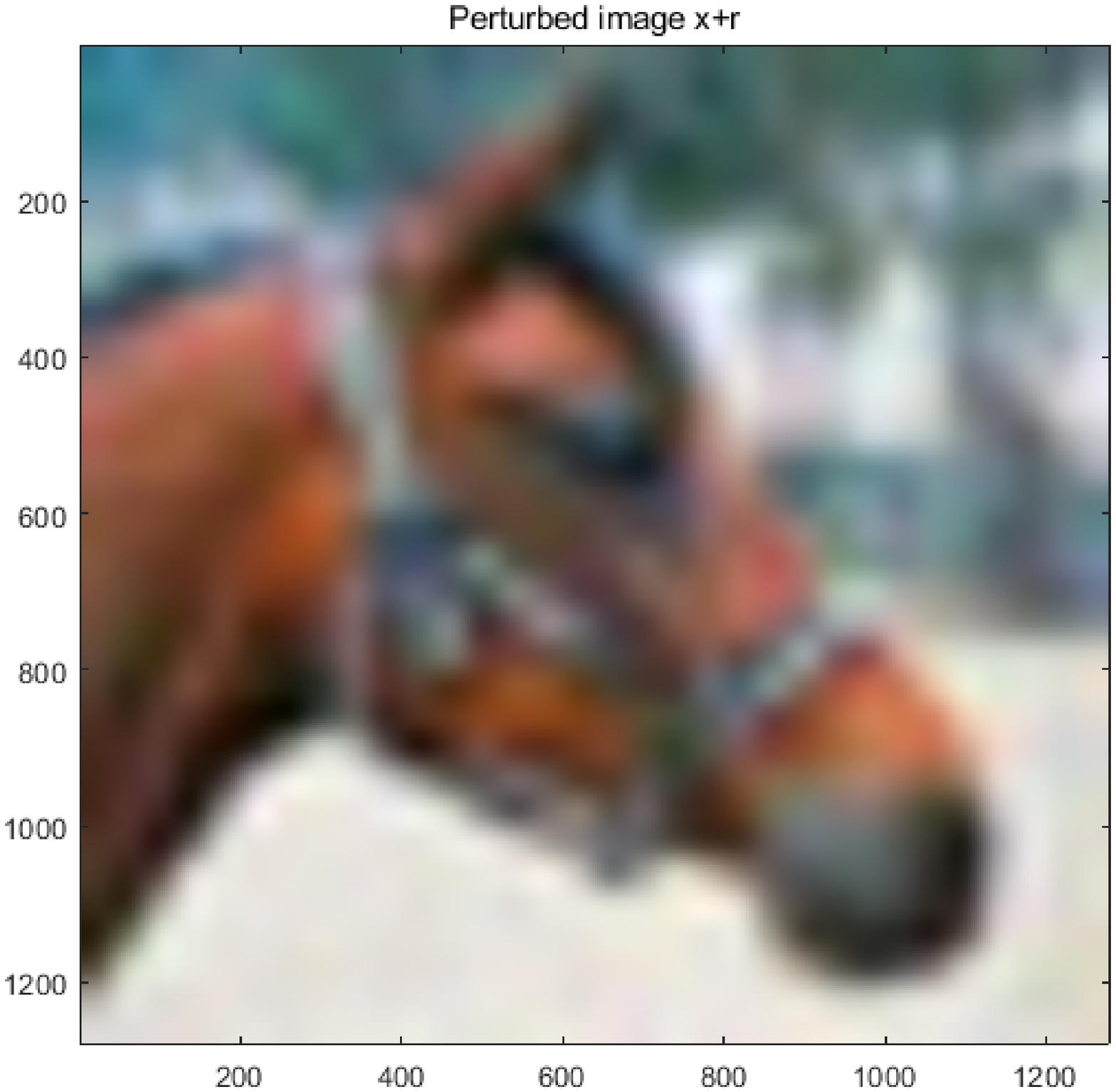} \\
			\includegraphics[scale=0.11]{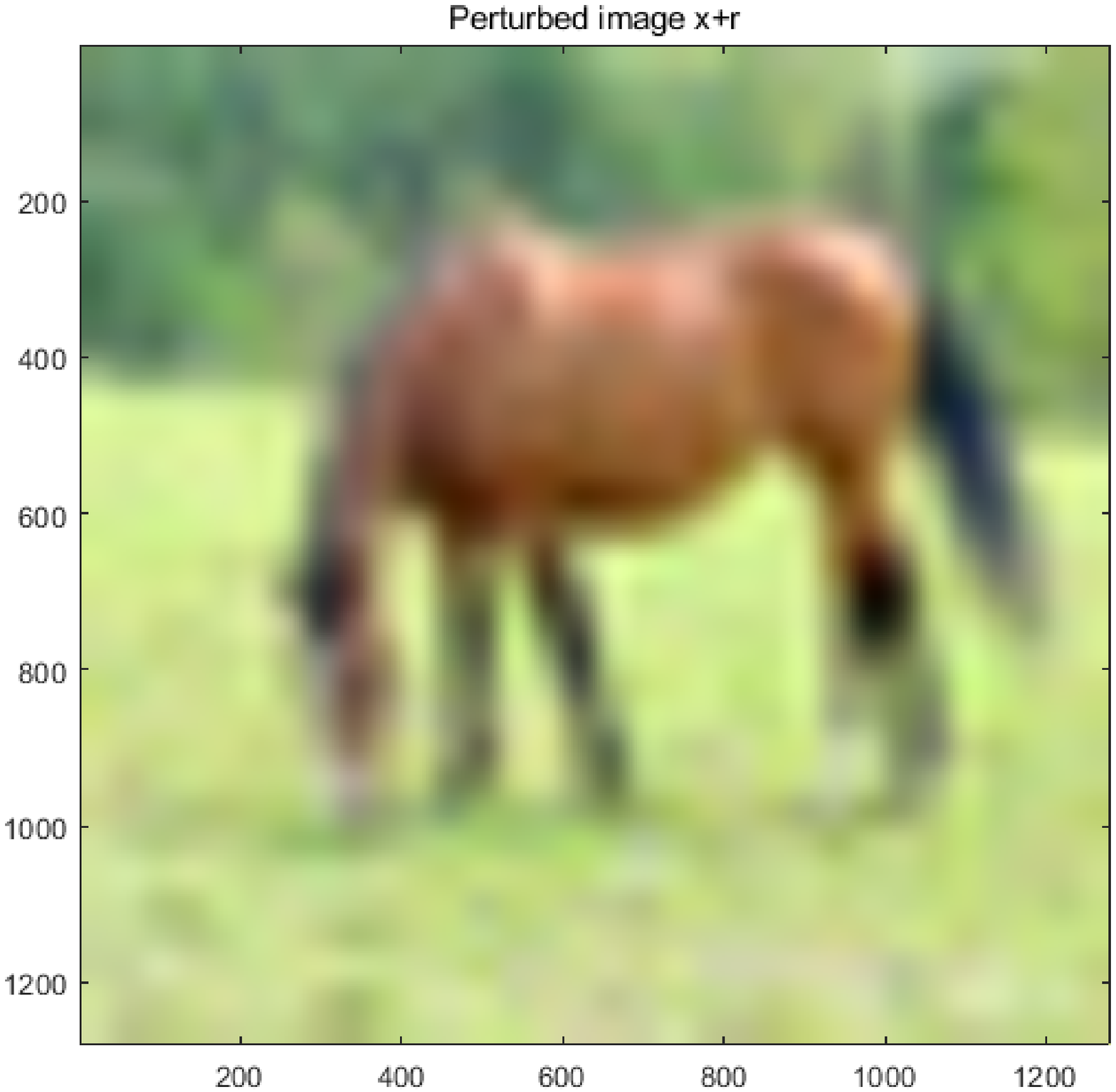} \\
			\includegraphics[scale=0.11]{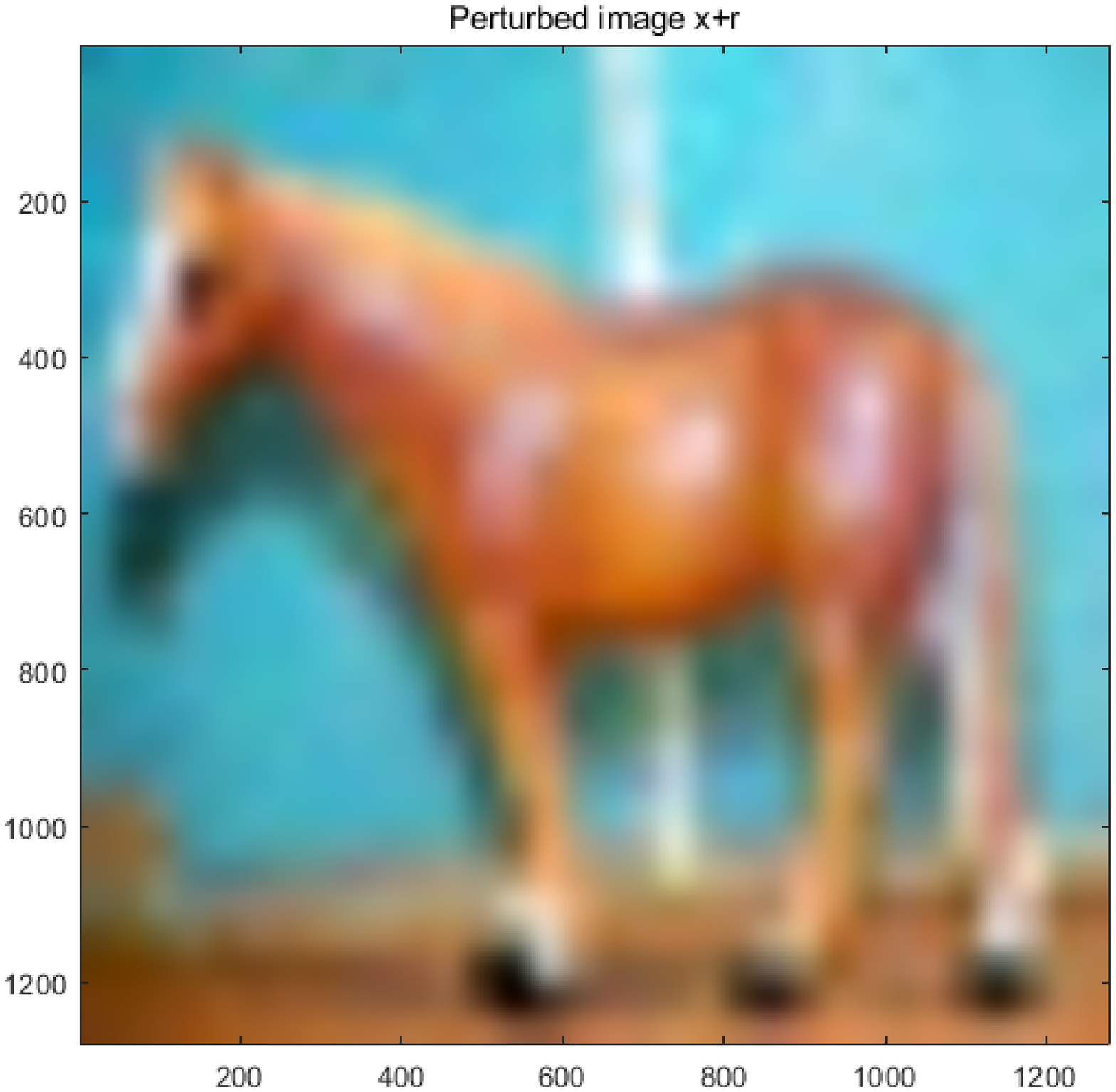}
		\end{minipage}
	\label{label_newfigure10_3}
	}
	\caption{The original images, the  image of cuAP, and the images that have been misclassified after being attacked, when $ \xi=0.5 $ on the CIFAR-10 dataset.}
	\label{label_newfigure10}
\end{figure}

Similarly, in \cref{label_newfigure10}, we give the original images with class horse, the image of  cuAP and the perturbed images with class deer, when $ \xi=0.5 $ on the CIFAR-10  dataset. In this case,  SNR is 35.35. 
And we get that the CPU time to generate cuAP is $2.71s $.
Comparing \cref{label_newfigure9} and \cref{label_newfigure10}, we find that  cuAP generated on the dataset (CIFAR-10) with a larger number of features is  more difficult to be observed. 
%
%Similarly, in Fig.~\ref{label_newfigure10}, we give the original images with class horse, the image of  cuAP and the perturbed images with class deer, when $ \xi=0.5 $ on the CIFAR-10  dataset. And we get that the time to generate cuAP is $2.71s $. 
%Comparing Fig.~\ref{label_newfigure9} and Fig.~\ref{label_newfigure10}, we find that  cuAP generated on the dataset (CIFAR-10) with a larger number of features is 
%more difficult to be observed.

\subsubsection{Numerical experiments of uAP}

%
%uAP扰动更高比例的
%与cuAP相同

\cref{label_newfigure11} shows the relationship that the fooling rate increases with the increase of the size of uAP on the MNIST and CIFAR-10  datasets. In \cref{fig:figure11_0}, we find that when the  norm of uAP on the MNIST reaches 1, the fooling rate is almost $ 90.48\% $ and in \cref{fig:figure11_1}, when the  norm of uAP on the CIFAR-10  reaches 0.5, the fooling rate is almost $ 89.60\% $.  We find that the dataset in \cref{fig:figure11_1}, which has more features, is easier  to be fooled by very small perturbation.
However, with the increase of the  norm  of uAP, the fooling rate will not increase to $ 100\% $.  In reality, we may choose an appropriate size of uAP, when the fooling rate is large enough.

\begin{figure}[bhtp]
	\centering
	\subfloat[The relationship  on the MNIST dataset.]{
		\label{fig:figure11_0} 
		\includegraphics[scale=0.21]{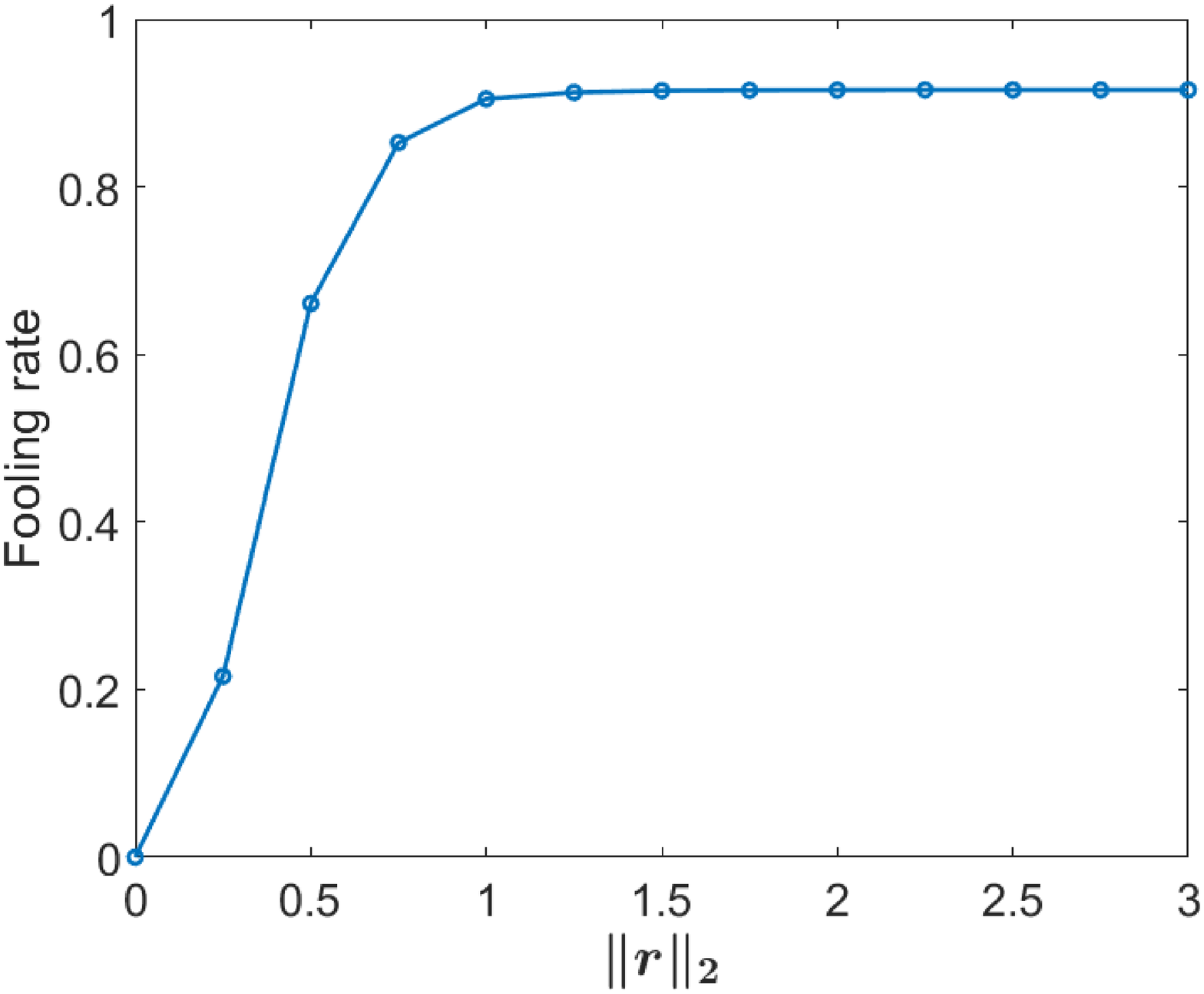}}
	\hspace{0.1in} % 两图片之间的距离
	\subfloat[The relationship  on the CIFAR-10 dataset.]{
		\label{fig:figure11_1} 
		\includegraphics[scale=0.21]{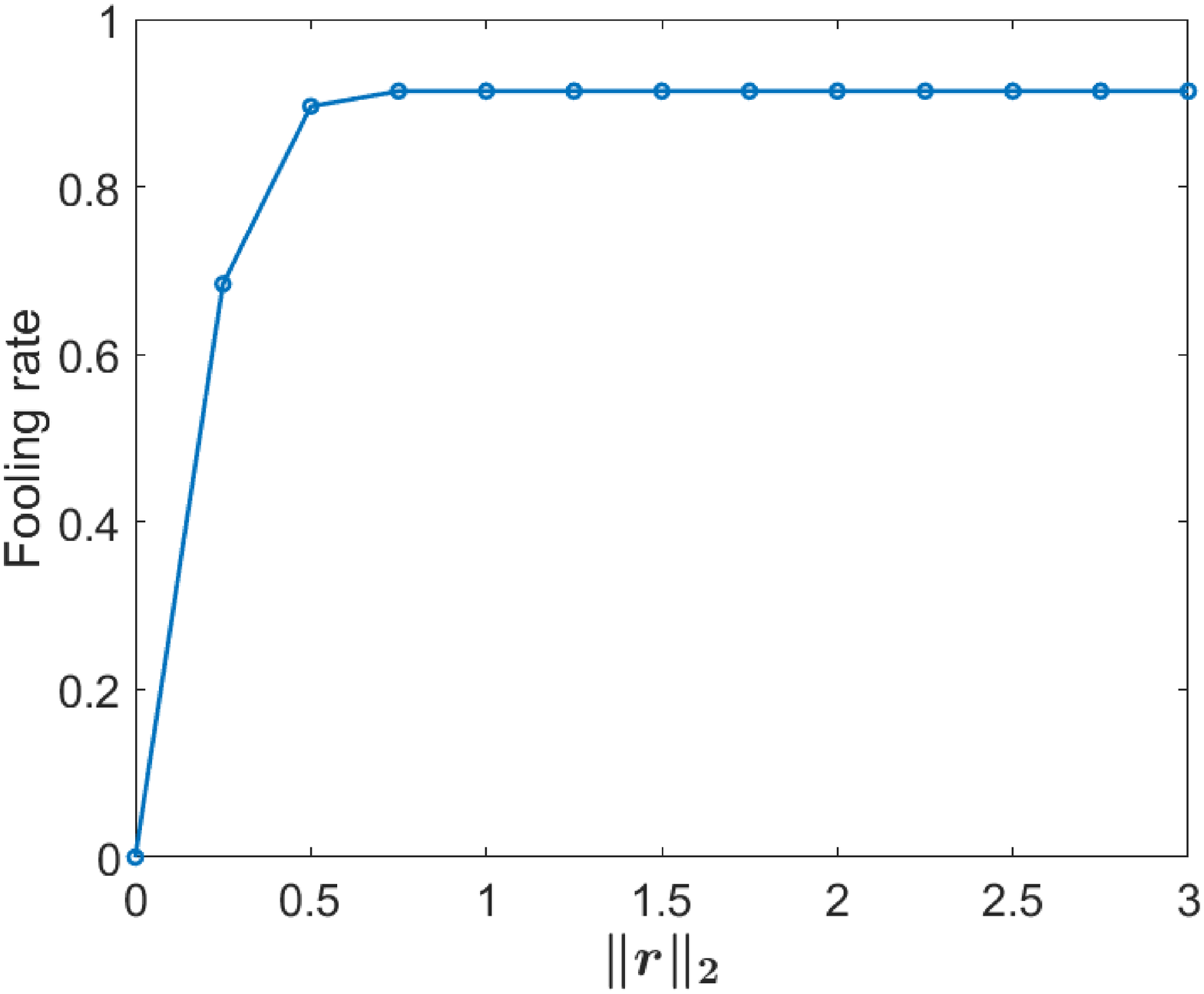}}
	\caption{The relationship between the fooling rate and the norm  of uAP  obtained on the linear multiclass SVMs on the MNIST and  CIFAR-10 datasets.}
	\label{label_newfigure11} 
\end{figure}

In \cref{label_newfigure12}, we give an example to compare the original images, the image of uAP, and the  images of that have been misclassified after being attacked, when $ \xi=1 $ on the MNIST dataset.
In this case,  SNR is 19.22.  
The CPU time to generate uAP is only $3.05 \times 10^{-3}s $.
In MNIST dataset,  because the proportion of data predicted as class 5 is the smallest,  the direction of uAP points to class 5.

\begin{figure}[h]
	\centering
	\includegraphics[scale=0.072]{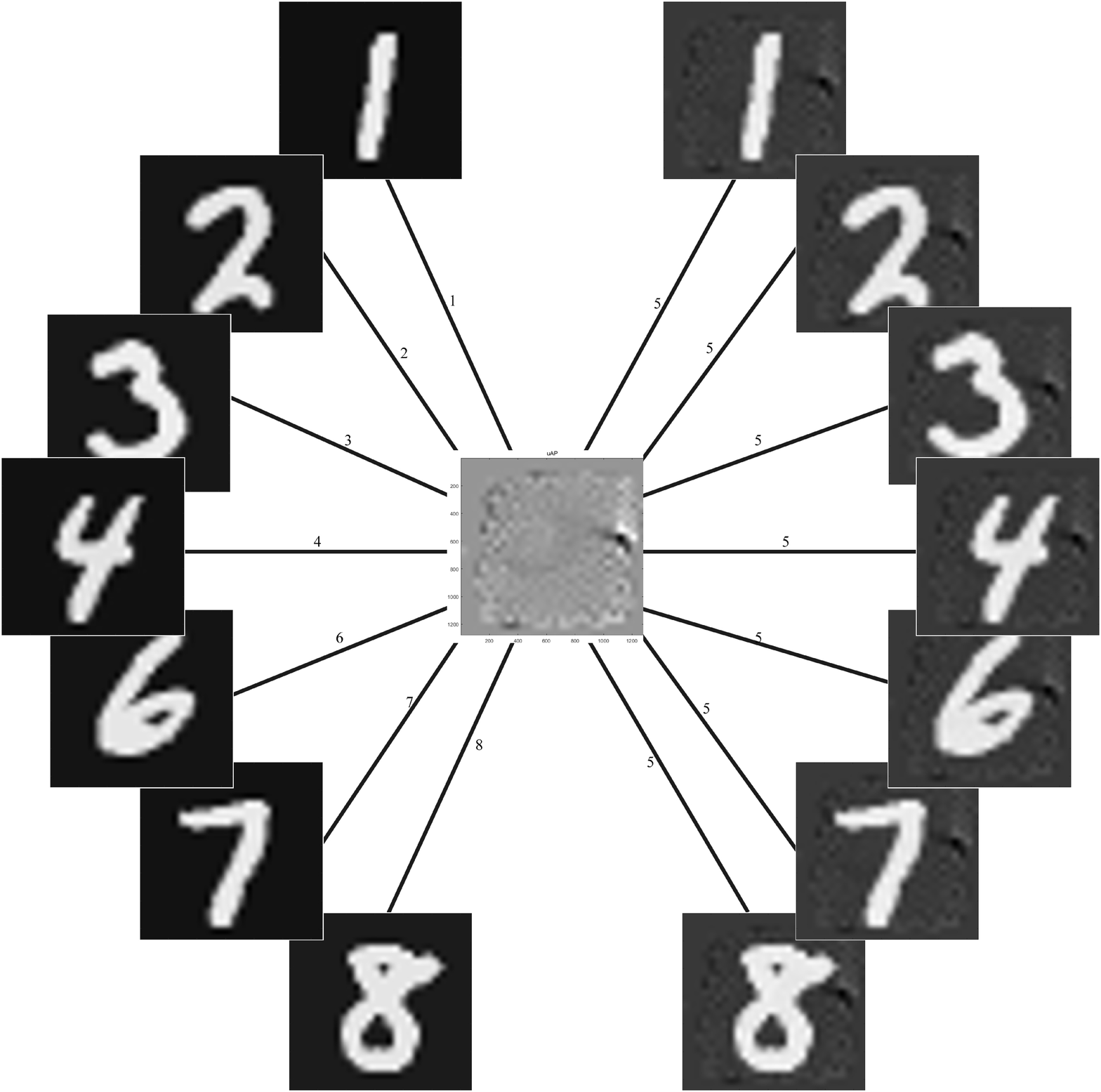}
	\caption{When added to a natural image, uAP causes the image to be misclassified by the linear multiclass SVMs  on the  MNIST dataset. Left images: Original  images. 
		The classes are shown at each arrow. Central image: uAP. Right images: Perturbed images. The estimated classes of the perturbed images are shown at each arrow, which are 5.}
	\label{label_newfigure12} 
\end{figure}

In \cref{label_newfigure13}, the original images are   bird, cat, deer, dog, frog, horse and ship on the CIFAR-10 dataset. When we add uAP ($ \xi=0.5 $) to the original images, the perturbed images are automobiles, but in human eyes, the class of the  perturbed images have not changed. In this case,  SNR is 35.35.  The CPU time to generate uAP is only $8.86 \times 10^{-4}s $.
Comparing \cref{label_newfigure12} and \cref{label_newfigure13}, we find that on the multi\mbox{-}classification model, the data with the same distribution are easy to be attacked by the uAP with unique direction and size, and the perturbation generated on the dataset (CIFAR-10) with a larger number of features is less likely to be observed.

\begin{figure}[H]
	\centering
	\includegraphics[scale=0.073]{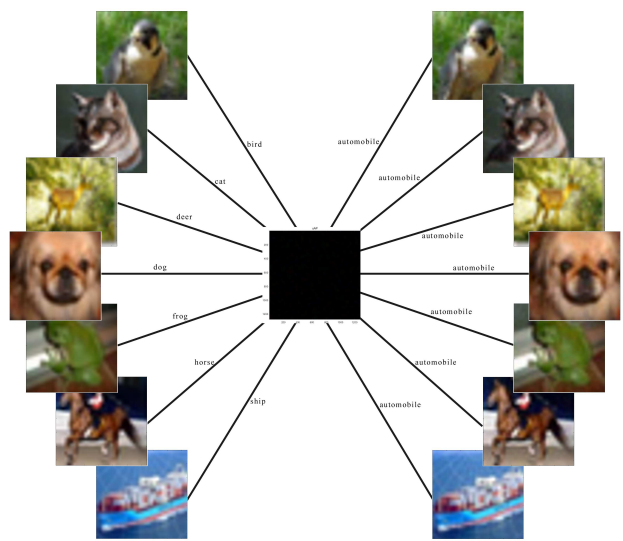}
	\caption{When added to a natural image, uAP causes the image to be misclassified by the linear multiclass SVMs  on the CIFAR-10 dataset. Left images: Original  images. 
		The classes are shown at each arrow. Central image: uAP. Right images: Perturbed images. The estimated classes of the perturbed images are shown at each arrow, which are automobiles.}
	\label{label_newfigure13} 
\end{figure}

Obviously, the multi\mbox{-}classification tasks are more likely to be fooled by smaller perturbation than binary classification tasks, because its classification boundary is closer. Therefore, we speculate that the reason for the existence of  adversarial perturbations is that the flexibility of the classifier is lower than the difficulty of the classification task. As we all know, SVM, neural network and most machine learning models are discriminant models, whose purpose is to find the optimal separating hyperplane  between different class.
Different from the generative model, it mainly focuses on the differences between different class of data, and cannot reflect the characteristics of the data itself, so it will produce some defects.

\section{Conclusions}\label{conclusion}

In this paper, we propose the optimization models for the  adversarial perturbations on classification based on SVMs. 
We derive the explicit solutions for sAP, cuAP and uAP (binary case), and approximate solution for uAP of multi\mbox{-}classification. We also provided the upper bound for the fooling rate of uAP. 
Moreover, we increase the interpretability of adversarial perturbations which shows that the directions of adversarial perturbations are related to the separating  hyperplanes of the model and the size is related to the fooling rate.  Numerical results demonstrate that our proposed approach can generate efficient  adversarial perturbations for models trained by SVMs. It is extremely fast for linear training models since one can avoid iteration process in this case. Numerical results  also show some insights on the potential security vulnerabilities of machine learning models.

\bibliographystyle{siamplain}
\bibliography{references8}

\end{document}